\newtheorem{theorem}{Theorem}[section]
\newtheorem{lemma}[theorem]{Lemma}
\newtheorem{proposition}[theorem]{Proposition}
\newtheorem{corollary}[theorem]{Corollary}
\newtheorem{properties}[theorem]{Properties}
\theoremstyle{definition}
\newtheorem{example}[theorem]{Example}
\newtheorem{definitions and remarks}[theorem]{Definitions and Remarks}
\theoremstyle{remark}
\newtheorem{remark}[theorem]{Remark}
\DeclareMathOperator{\sign}{sign}
\renewcommand{\c}[1]{\mathcal{#1}}
\newcommand{\N}{\mathbb{N}}
\newcommand{\R}{\mathbb{R}}
\newcommand{\y}{\mathbf{y}}
\newcommand{\eps}{\varepsilon}
\newcommand{\norm}[1]{\left\|#1\right\|}
\newcommand{\abs}[1]{\left|#1\right|}
\newcommand{\ps}[2]{\left\langle #1, #2\right\rangle}
\newcommand{\x}{ \mathbf{x}}
\newcommand{\xe}{ \widetilde{\mathbf{x}}}
  \DeclareMathOperator{\argmin}{argmin}
\newtheorem{Assumption}{Assumption}
\newcommand{\Ada}{\textsc{Adagrad}}
\newcommand{\Adadelta}{\textsc{Adadelta}}
\newcommand{\Ams}{\textsc{Amsgrad}}
\newcommand{\Adam}{\textsc{Adam}}
\newcommand{\RMSprop}{\textsc{RMSprop}}
\numberwithin{equation}{section}
\definecolor{OliveGreen}{rgb}{0,0.6,0}
\def\paragraph{\@startsection{paragraph}{4}%
  \z@\z@{-\fontdimen2\font}%
  {\normalfont\bfseries}}
\newcommand{\printfnsymbol}[1]{%
  \textsuperscript{\@fnsymbol{#1}}%
}
\begin{document}

\title[Differential equations to model adaptive algorithms]{A general system of differential equations to model first order adaptive algorithms}

\author[A.~Belotto da Silva]{Andr\'e Belotto da Silva}
\author[M.~Gazeau]{Maxime Gazeau}
\address[A.~Belotto da Silva]{Universit\'e Aix-Marseille, Institut de Math\'ematiques de Marseille
(UMR CNRS 7373), Centre de Math\'ematiques et Informatique, 39 rue F. Joliot Curie, 
13013 Marseille, France}
\email{andre-ricardo.belotto-da-silva@univ-amu.fr}
\address[M.~Gazeau]{Borealis AI, MaRS Heritage Building, 101 College St, Suite 350, Toronto, ON M5G 1L7}
\email{maxime.gazeau@borealisai.com}


\maketitle

 \begin{abstract}
First order optimization algorithms play a major role in large scale machine learning.
A new class of methods, called \emph{adaptive algorithms}, were recently introduced to adjust iteratively the learning rate for each coordinate.
Despite great practical success in deep learning, their behaviour and performance on more general loss functions are not well understood.
In this paper, we derive a non-autonomous system of differential equations,
which is the continuous time limit of adaptive optimization methods. We study the convergence of its trajectories and give conditions under which
the differential system, underlying all adaptive algorithms, is suitable for optimization.
  We discuss convergence to a critical point in the non-convex case
     and give conditions for the dynamics to avoid saddle points and local maxima.
     For convex loss function, we introduce a suitable Lyapunov functional which allow us to study its rate of convergence. Several other properties of both the continuous and discrete systems are briefly discussed. The differential system studied in the paper is general enough to encompass many other classical algorithms (such as Heavy ball and Nesterov's accelerated method)
     and allow us to recover several known results for these algorithms.
\end{abstract}%


\section{Introduction}\label{sec:Introduction}
Optimization is at the core of many machine learning problems. Estimating the model
parameters can often be formulated in terms of an unconstrained optimization problem of the form
\begin{equation}\label{eq:unconstrained_opt}
\min_{\theta \in \R^d} f(\theta) \qquad \text{where }  f:\R^d \to \R \text{ is differentiable}.
\end{equation}
The emergence  of  deep  learning  has  spawned  the  recent  popularity  of  a special class of optimizers to solve \eqref{eq:unconstrained_opt}:
  first order \emph{adaptive} optimization algorithms (\RMSprop{} \cite{TH:12}, \Ada \cite{DHS:11, DS:13}, \Adadelta{} \cite{Z:12}, \Adam{}
\cite{KB:14}) were originally designed to solve unconstrained optimization problem (minimizing an empirical risk in supervised learning). It is commonly observed that the value of the training loss decays faster than for stochastic gradient descent and they have become a default method
of choice for training feed-forward and recurrent neural network \cite{gregor:15, Radford2015UnsupervisedRL}.
However, recent research paper suggests to not use \Adam{} \cite{RKK:18} as it diverges for a simple example.
As of today, there is no consensus on the benefit of adaptive algorithms over other methods and no guidance on how to choose the many hyper-parameters of the model.

\vspace{\baselineskip}

Despite its obvious efficiency in deep learning, the reasons of their success are unclear and a large number of fundamental questions are still unanswered. Our work started from the belief that these algorithms are not intrinsically better than gradient descent but rather well suited to the subclass of non-convex function given by standard deep learning architecture. Studying the convergence of the discrete and stochastic adaptive algorithms for non-convex functional is far too complex
and general to get insightful explanation about their efficiency in deep learning. We, therefore, start by studying a deterministic and continuous equation, and we prove that in simple cases (such as convex functional), adaptive algorithm are not converging faster than gradient descent. In particular, the key insights of our analysis are:

\begin{enumerate}
  \item The convergence rate is nonlinear --in the sense that it depends on the variables-- and depends on the history of the dynamics. Initialization is therefore of crucial importance.
  \item With the standard choices of hyperparameters,
  adaptivity degrades the rate of convergence to the global minimum of a convex function compared to gradient descent.
\end{enumerate}
These observations are crucial to unwind the mystery of adaptive algorithms and the next questions to ask are now obvious:
\begin{enumerate}
\item Does adaptivity reduces the variance (compared to SGD) and speed up the training for convex functional?
\item Is the fast training observed in deep learning induced by the specificity of the loss surface and common initialization scheme for the weights?
\end{enumerate}

\vspace{\baselineskip}

The main contribution of this paper is to provide a theoretical framework to study deterministic adaptive algorithms. Inspired by the history of gradient descent and stochastic gradient descent, we analyse discrete \emph{adaptive} optimization algorithms by introducing their continuous time counterparts (equation \eqref{eq:ODE}), with a focus on \Adam{} (equation \eqref{eq:ODEAdam}). The techniques and analysis are similar for other algorithms and includes classical accelerated method. The continuous time analysis provides conditions on the hyper-parameters of several algorithms (in particular for \Adam{}, see $\S\S$~\ref{ssec:Adam}) that guarantee convergence of their trajectories.
The convergence analysis of the continuous system brings important informations
on the discrete dynamics in the limit of large batch sizes and small learning rates. Our study is, nevertheless, far more general than \Adam{}. We provide practitioners a wide range of options to develop and test adaptive-type algorithms, and our analysis give certain guidelines which are discussed in the body of the paper and summarized in $\S\S$~\ref{ssec:Future}.

This work is intended, furthermore, to serve as a solid foundation for the posterior study in the discrete and stochastic settings, but in this paper we put an emphasize on the deterministic equation to understand
the fundamental properties of adaptive algorithms.

\vspace{\baselineskip}

In section \ref{sec:ModelPresentation} we introduce two general continuous dynamical system \eqref{eq:ODE} and \eqref{eq:ODEbis} whose \emph{forward} Euler approximation \eqref{eq:discrete} matches a large class of first order methods, summarized in tables \ref{tab:1} and \ref{tab:2}. The connection with adaptive and accelerated algorithms is made precise in section \ref{sec:Appl}. In particular, \Adam{} differential equation \eqref{eq:ODEAdam} is derived in subsection \ref{ssec:Adam}. Section \ref{ssec:Energy} starts with a first basic energy functional \eqref{eq:EnergyPrelim} (which is inspired on the works of \cite{alvarez2000}). Basic properties of the ODE \eqref{eq:ODE} are presented in subsection \ref{ssec:Basic}, including Theorem \ref{th:existencePrel} on the existence and uniqueness of solutions and Theorem \ref{thm:ConvergenceDiscretizationRate} which provides the relation between continuous and discrete systems. We note that a sharper result (Theorem \ref{th:existence}) on the existence and uniqueness of solutions of ODE \eqref{eq:ODE} is presented in section \ref{sec:ExistenceResult}.

Section \ref{sec:MainResults} contains the statement of our main results, on the asymptotic behaviour of the continuous deterministic trajectories of the ODE \eqref{eq:ODE}. In the non-convex setting we prove, under mild assumptions, that the trajectories converge to the critical locus of $f$ (see Theorem \ref{thm:convergence}). This result is supplemented with the analysis of sufficient conditions in order to avoid convergence to saddle or local maximum points of the loss function $f$ (see Theorem \ref{thm:avoiding}). For convex functions, we design a Lyapunov functional \eqref{eq:Lyapunov} and obtain a rate of convergence to at least a neighbourhood of the critical locus (see Theorem \ref{thm:convergenceRate}). The rate of convergence crucially depends on the behaviour over time of $\nabla f$ and on the term $v$ (see \eqref{eq:CrucialTerm} and the subsequent discussion). In particular, this indicates that the efficiency of adaptive algorithms is highly dependant on the loss function. In sections \ref{sec:Appl}, we specialize the convergence results to \Adam{}, AdaFom, Heavy Ball, Nesterov, Adagrad and RMSProp. In particular, Corollary \ref{thm:convergenceAdam} provides new results on the convergence of the dynamics of \Adam{}, while Corollary \ref{thm:convergenceRateNesterov} recovers previously known convergence rates of Nesterov accelerated method. We stress that \textbf{sections \ref{sec:MainResults} and \ref{sec:Appl} can be read independently}. In Section \ref{sec:ExistenceResult}, we prove existence and uniqueness of solutions at $t_0 = 0$ when the functions $h, p,q,r$
are not defined at zero.
In Section \ref{sec:considerations} we provide several empirical observations on adaptive algorithms which are inspired by the continuous analysis, and we collect guidelines for designing new adaptive algorithms in $\S\S$~\ref{ssec:Future}. Most proofs supporting the paper are postponed to the Appendix.

\subsection{Related work}

Gradient descent \cite{Cauchy:1847}, which only depends on the partial derivatives of $f$, is the simplest discrete algorithm to address the optimization
problem above
\begin{equation}\label{eq:gradient_descent}
\theta_{k+1} = \theta_k - s \nabla f(\theta_k).
\end{equation}
Another popular iterative approach to solve the above smooth optimization is the \emph{proximal} point algorithm \cite{Rockafellar, ParikhBoyd}
\begin{align}\label{eq:proximal}
\theta_{k+1} = \argmin_{u} \left(\frac{1}{2s}\norm{u-\theta_k}^2 + f(u)\right)
\end{align}
These discrete methods can be studied solely from the standpoint of optimization performance. It can be proved
that both algorithms converge to a critical point ($\nabla f(\theta_k) \to 0$ as $k \to \infty$) \cite{Nesterov2004} but also almost surely to a local minimizer \cite{LSJR:16, LPPSJR:2017}.
For convex functionals with globally Lipschitz gradient, both algorithms converges at linear rate $f(\theta_k) - f(\theta_{\star}) = \c{O}(1/(sk))$, where $\theta_{\star}$ is a minimal point of $f$ \cite{Nesterov2004, Rockafellar, ParikhBoyd}. These results give important guarantees on the convergence of each method.

For small and constant learning rate $s$, gradient descent \eqref{eq:gradient_descent} (resp. \emph{proximal} point algorithm \eqref{eq:proximal}) corresponds to the
\emph{forward} (resp. \emph{backward}) Euler's discretization of the gradient flow system
\begin{equation}\label{eq:gradient_flow}
\dot{\theta}(t) = - \nabla f(\theta(t)), \qquad \theta_0 = \theta(0),
\end{equation}
under the time scaling $t = ks$  \cite{Hairer:2006, SRBA:17}. The stable equilibria of this continuous system are given by the the set of strict (local) minima of the loss function $f$
and if the level sets of $f$ are bounded ($f$ coercive for example), then its trajectories asymptotically converge to a critical point in the
sense that $\nabla f(\theta(t)) \to 0$ as $t \to 0$.
Moreover for convex functions, a linear rate of convergence $f(\theta(t)) - f(\theta_{\star}) = \c{O}(1/t)$ holds, which is analogue to those obtained for
both gradient descent and proximal point algorithm.

The study of the continuous dynamical system is very useful. The well-behaved convergence properties of the gradient flow \eqref{eq:gradient_flow}
allows for different discretizations and optimization algorithms \cite{SRBA:17}. It, furthermore, provides valuable
intuition to prove convergence  of discrete systems: for example, continuous Lyapunov functional can be often adapted to the discrete counterparts.

\vspace{\baselineskip}

The rate of convergence for gradient descent is not optimal and depending on the class of functions $f$ belongs to,
more efficient algorithms can be designed \cite{NoceWrig06, Bonnans:2006, Boyd:2004, Nesterov2004, BubeckBook}.
For smooth convex or strongly convex functions, Nesterov  \cite{Nesterov2004} introduced an accelerated gradient algorithm which was proven to be optimal
(a lower bound matching an upper bound is provided)
\begin{align}\label{eq:Nesterov}
    v_{k+1} &= \theta_k - s \nabla f(\theta_k)\\
    \theta_{k+1} &= v_{k+1} + \frac{k}{k+3} (v_{k+1} - v_k).
\end{align}
However, the key mechanism for acceleration is not well understood and have many interpretations \cite{Bubeck2015AGA, HL:17,LRP:16}.
A particular interesting interpretation of acceleration is through the lens of a second order differential equation of the form
\begin{equation}\label{eq:second_order}
\ddot{\theta}  = a(t) \dot{\theta} + \nabla f(\theta), \qquad \theta(0) = \theta_0, \quad \dot{\theta}(0) =  \psi_0,
\end{equation}
where $t \mapsto a(t)$ is a smooth, positive and decreasing function of time, having possibly a pole at zero. Even if this singularity has important implications for the choice of the initial velocity $\psi_0$, we are more interested by the long term behavior of the solution to \eqref{eq:second_order} and hence at $\lim_{t \to \infty} a(t)$.
This system is called dissipative because its energy  $E(t) = \frac{1}{2} ||\dot{\theta} ||^2 + f(\theta) $ decreases over time.
Most accelerated optimization algorithms can be seen as the numerical integration of equation \eqref{eq:second_order}.
For the \emph{Heavy Ball method}, the function $a$ is constant and is called the damping parameter \cite{AABR:02,alvarez2000}.
In \cite{gadat2014, Cabot09, CEG:07}, conditions on the rate of decay of $a$ and its limit are given in order for the trajectories of
\eqref{eq:second_order} to converge to a critical point of $f$. This analysis highlights situations where \eqref{eq:second_order} are fit (or not) for optimization. Intuitively, if $a$ decays too fast to zero (like $1/t^2$) the system will oscillate and won't converge to a critical point.
The case $a(t) = 3/t$ was studied more specifically in \cite{SBC:15} and the authors draw interesting connections between \eqref{eq:second_order} and \emph{Nesterov's algorithm} \eqref{eq:Nesterov}.
The convergence rates obtained are $\c{O}(1/(s k^2 ))$ and $\c{O}(1 /t^2 )$ respectively, which match with the discrete algorithms by using the time identification $t = \sqrt{s}k$ \cite{SBC:15}.
Extension of this work are proposed in \cite{WW:15, WWJ:16} in which the authors studied acceleration from a different continuous equation having
theoretically exponential rate of convergence.
However, a na\"ive discretization looses the nice properties of this continuous system and current work consists on finding a better one preserving the symplectic structure of the continuous flow \cite{BJW:18}. 

By nature, first order adaptive algorithms have iterates that are non-linear functions of the gradient of the objective function.
The analysis of convergence is therefore more complex, potentially because the rate of convergence might depend on the function itself.
The first known algorithm \Ada{} \cite{DS:13} consists on multiplying the gradient by a diagonal preconditioning matrix, depending on previous squared gradients. The key property to prove the convergence of this algorithm
is that the elements of the preconditioning matrix are positive and non-decreasing  \cite{2018:ward, DS:13, CLSH:18}.
Later on, two new adaptive algorithms \RMSprop{} \cite{TH:12} and \Adam{} \cite{KB:14} were proposed.
The preconditioning matrix is an exponential moving average of the previous squared gradients.
As a consequence, it is no longer non-decreasing. The proof of convergence, relying on this assumption and given in the
form of a regret bound in \cite{KB:14}, is therefore not correct \cite{RKK:18}.
A new algorithm \Ams{} proposed in \cite{RKK:18} consists on modifying the preconditioning updates to recover this property.
While converging, this algorithm looses the essence of the \Adam's algorithm.
\Adam{} is such a mysterious algorithm that many works have been devoted to understand its behaviour. Variants of \Adam{} have been proposed \cite{ZMLW:17}
as well as convergence analysis towards a critical point \cite{BDMU:18, CLSH:18}. However, conditions for convergence seem very restrictive and not easy to verify in practice.
The existing proofs of convergence on the discrete system, moreover, feel rather technical at the cost of loosing the intuition.

\section{Presentation of the model}\label{sec:ModelPresentation}

We propose a continuous model to unify the analysis of gradient based algorithms. We first introduce notation on vector's operations used in the paper.
In section \ref{ssec:Model}, we present a general system of differential equations as well as a possible discretization of it.

\subsection{Compact notation}\label{ssec:CompactNotation}

In what follows, we use several times the same non standard operations on vectors. It is convenient to fix the notation of these operations. Given two vectors $u= (u_1, \ldots, u_d)$ and $v = (v_1,\ldots, v_d)$ of $\mathbb{R}^d$ and constants $a,\, \eps \in \mathbb{R}$, we use the following notation:
\[
\begin{aligned}
u + \eps &= (u_1 + \eps, \ldots, u_d + \eps)\\
u \odot v &= (u_1 \cdot v_1, \ldots, u_d \cdot v_d)\\
u/ v &= (u_1 / v_1, \ldots, u_d / v_d)\\
[u]^{a} &= (u_1^a, \ldots, u_d^a)\\
\sqrt{u}&= (\sqrt{u_1}, \ldots, \sqrt{u_d})\\
\end{aligned}
\]

\subsection{Presentation of the continuous time model}\label{ssec:Model}
Throughout this paper we study the following dynamical system 
 \begin{equation}\label{eq:ODE}
 \left\{
  \begin{aligned}
        \dot{\theta}(t) &= - m(t) / \sqrt{v(t)+\eps}\\
      \dot{m}(t) &=  h(t)\nabla f(\theta(t)) - r(t)m(t)  \\
      \dot{v}(t) &=  p(t) \left[ \nabla f(\theta(t))\right] ^2 - q(t)
      v(t),
  \end{aligned}
\right.
 \end{equation}
where $\eps \geq 0$, the functions $h(t)$, $r(t)$, $p(t)$ and $q(t)$ are $C^1$-functions defined over $\mathbb{R}_{>0}$ and $(\theta, m, v,t) \in \mathbb{R}^{d} \times \mathbb{R}^{d} \times \mathbb{R}^{d}_{\geq 0} \times \mathbb{R}_{>0}$; if $\eps=0$, then $v \in \mathbb{R}_{>0}^d$. The above system has a momentum term $m$ and a memory term $v$. The system \eqref{eq:ODE} is supplemented with initial conditions
$\x_0 =(\theta_0, m_0, v_0)$ at time $t=t_0 \geq 0$. We denote by $\x(t)=\x(t,t_0,\x_0) =(\theta(t),m(t),v(t))$ a solution of $\eqref{eq:ODE}$ with initial condition $\x(t_0,t_0,\x_0)=\x_0$, and interval of definition $t\in [t_0,t_{\infty}[$ (extra care needs to be taken when $t_0=0$ in order to guarantee the existence of a solution, see Theorem \ref{th:existence}).

When the system does not contain a momentum term, we also consider the alternative simpler system
\begin{equation}\label{eq:ODEbis}
 \left\{
  \begin{aligned}
        \dot{\theta}(t) &= - \nabla f(\theta) / \sqrt{\omega(t)+\eps}\\
      \dot{\omega}(t) &=  p(t) \left[ \nabla f(\theta(t))\right] ^2 - q(t)\omega(t),
  \end{aligned}
\right.
 \end{equation}
whose analysis is similar (and simpler) to the first (see $\S\S$~\ref{ssec:ConvergenceBis}), but can not be derived from the first. Unless stated otherwise, our presentation deals with the system \eqref{eq:ODE}, and it is later specialized to \eqref{eq:ODEbis}.

We always make the following hypotheses.

\begin{Assumption}\label{ass:ODE}
The objective function $f$ is assumed to be a $C^2$ function defined in $\mathbb{R}^d$. The functions $h$, $r$, $p$ and $q$ are non-negative and non-increasing $C^1$-functions defined over $\mathbb{R}_{>0}$, and $h(t) \not\equiv 0$, $r(t)\not\equiv 0$. We also require that one of the following is satisfied:
\begin{itemize}
\item Either $ p(t) \not\equiv 0$, in which case we say that the system is \emph{adaptive};
\item Or $p(t)\equiv q(t) \equiv 0$, in which case we say that the system is \emph{non-adaptive}.
\end{itemize}
\end{Assumption}

It seems reasonable to imagine that there are several different choices of functions $h$, $r$, $p$ and $q$ which yield a good optimization algorithm. We provide a list of interesting cases in the table \ref{tab:1} and \ref{tab:2} below. Each choice, furthermore, might be adapted to some different ``class'' of loss function, that is, loss functions satisfying some extra property (e.g. loss functions coming from a single, double or $N$-layers in deep learning; convex functions; globally Lipschitz functions, etc). We allow, therefore, the functions $h$, $r$, $p$ and $q$ to be as general as possible, so that practitioners may test different combinations of coefficients, probably depending on the properties of the loss function. Some guidelines to chose these coefficients are provided $\S\S$~\ref{ssec:Future}.

An adaptive system has a non-trivial dynamic for the memory term $v$, which changes the rate in the dynamics of the main variable $\theta$. In practice, this corresponds to an automatic change in the learning rate, controlled only by the history of the trajectory $\x(t)$ and the loss function $f$. This justifies the name \emph{adaptive} and supports the intuition that the efficiency of the algorithm depends on the properties of the loss functions $f$ (c.f. $\S\S$~\ref{ssec:Flat}). We also consider a momentum term $m$, which allow us to accelerate the convergence of the algorithm depending on the choice of the coefficients $h$ and $r$. Intuitively, the addition of the momentum $m$ implies the existence of a special energy functional for ODE \eqref{eq:ODE} (see equation \eqref{eq:EnergyPrelim} below) which works as a ``funnel" around minimum points of $f$. The trajectories of the ODE can, therefore, be ``accelerated" without the risk of diverging to $\infty$, at the cost of an oscillatory behaviour (just as water in a funnel). In particular, this intuitively explain why it is possible to improve the rate of convergence of the Heavy ball in Nesterov. Based in these two intuitions, we can expect that the choice of $h$ and $r$ controls how fast these algorithms converge in general, while $p$ and $q$ may only slow down the algorithm in general, but accelerate it for certain ``classes" of loss functions. This intuition turns out to be precise when dealing with convex functions, as we discuss in $\S\S$~\ref{ssec:Future}.

Note that we do not suppose that $\nabla f$ is globally Lipschitz. Indeed, such an assumption is insufficient to guarantee that the ODE \eqref{eq:ODE} is itself globally Lipschitz (e.g. $f(\theta) = \theta^2/2$ implies $\nabla f(\theta)^2 = \theta^2$). The lack of this  assumption constitutes an important technical difficulty in the remaining of the paper. Under the extra hypothesis that $f$ is coercive, nevertheless, we are able to replace this hypothesis effectively (see Lemma \ref{lem:JustifiesA4} below).

\bigskip

In order to establish a relation between the continuous and the optimization algorithms,
we study the finite difference approximation of \eqref{eq:ODE} by the forward Euler method
 \begin{equation}\label{eq:discrete}
 \left\{
  \begin{aligned}
      \theta_{k+1} &= \theta_{k} - s  m_k / \sqrt{v_k + \eps}\\
      m_{k+1} &=  (1 - s r(t_{k+1})) m_{k} + s h(t_{k+1})\nabla f(\theta_{k+1})
      \\
      v_{k+1} &=  (1 - s q(t_{k+1}) )v_{k}  + s p(t_{k+1}) \left[ \nabla
      f(\theta_{k+1}) \right] ^2
  \end{aligned}
\right.
 \end{equation}
where $t_k = ks$. We chose this method because it fits well with \Adam{} discrete system. As we will see, the approximation error between the discrete system \eqref{eq:discrete} and continuous system \eqref{eq:ODE} tends to zero (with order one) when the learning rate goes to zero (see Theorem \ref{thm:ConvergenceDiscretizationRate} for a precise result). However this choice of discretization is of course non-unique, and more efficient quadrature rules could lead to more accurate numerical integration \cite{QuadratureRule:75, kythe2002computational}. The connections between our model and the discrete optimization algorithms is summarized by tables \ref{tab:1} and \ref{tab:2}, and the proof of these relations is postponed to Section \ref{sec:Appl}.

\begin{table}

\center
\begin{tabular}{| l | c |  c | c | c |}
\hline
\textbf{Equation \eqref{eq:ODE}} & $h(t)$ & $r(t)$ & $p(t)$ & $q(t)$ \\ \hline
ADAM & $\frac{\displaystyle 1- e^{-\lambda \alpha_1}}{\displaystyle \lambda(1- e^{- t \alpha_1})}$ &  $\frac{\displaystyle 1- e^{-\lambda \alpha_1}}{\displaystyle \lambda(1- e^{- t \alpha_1})}$&  $\frac{\displaystyle 1- e^{-\lambda \alpha_2}}{\displaystyle \lambda(1- e^{- t \alpha_2})}$& $\frac{\displaystyle 1- e^{-\lambda \alpha_2}}{\displaystyle \lambda(1- e^{- t \alpha_2})}$ \\ \hline
  ADAM (without rescaling) & $1/\alpha_1$ & $1/\alpha_1$ & $1/\alpha_2$ & $1/\alpha_2$\\ \hline
AdaFom  & $1/\alpha_1$ & $1/\alpha_1$ & $1/t$ & $1/t$ \\ \hline
Heavy Ball &1 & $\gamma$ & 0&0\\ \hline
 Nesterov &1 & $r/t$ & 0&0\\ \hline
\end{tabular}
\caption{\label{tab:1}where $r$, $\gamma$, $\alpha_1$, $\alpha_2$, $\lambda$, $h$, $p$ and $q$ are all constants, whose precise connection with the hyper-parameters of the algorithms is made clear in $\S\S$ \ref{ssec:Adam}, \ref{ssec:AdaForm}, \ref{ssec:HeavyBall} and \ref{ssec:Nesterov} below.}
\end{table}

\begin{table}
\centering
\begin{tabular}{| l | c |  c |}
\hline
 \textbf{Equation \eqref{eq:ODEbis}} & $p(t)$ & $q(t)$ \\ \hline
 Adagrad & 1 & 0 \\ \hline
  RMSProp& $1/\alpha_2$ & $1/\alpha_2$ \\ \hline
  Test Case & $p$ & $q$ \\ \hline
\end{tabular}

\caption{\label{tab:2}where $\alpha_2$, $p$ and $q$ are all constants, whose precise connection with the hyper-parameters of the algorithms is made clear in $\S\S$~\ref{ssec:Adagrad} and \ref{ssec:RMSProp} below.}
\end{table}

\subsection{An Energy functional of ODE \eqref{eq:ODE} and a natural assumption}\label{ssec:Energy}

A crucial property in the study of ODE \eqref{eq:ODE} is the existence of an energy functional, which is inspired from \cite[Theorem 2.1]{alvarez2000}:

\begin{equation}\label{eq:EnergyPrelim}
\begin{aligned}
    E(t,\theta,m,v) &= f(\theta) +\frac{1}{2
    h(t)}\norm{\frac{m}{\left[v+\eps\right]^{1/4}} }^2.
\end{aligned}
\end{equation}

This functional plays a crucial role in the study of the convergence of ODE \eqref{eq:ODE} in $\S$\ref{sec:MainResults}. We start by computing the derivative of the energy functional:
\[
\begin{aligned}
    \frac{d}{dt} E(t,\theta,m,v) =& - \frac{1}{h(t)}  \left( r(t) +
    \frac{h'(t)}{2h(t)} \right) \norm{\frac{m}{\left[v+\eps\right]^{1/4}}}^2 + \sum_{i=1}^d \frac{ m_i^2 \{q(t)v_i-p(t)\cdot \left[\partial_{\theta_i}f(\theta)\right]^2\}}{4 h(t) \cdot(v_i+\eps)^{3/2}},
\end{aligned}
\]
and, it easily follows from the fact that $p(t), \, h(t),\, v_i,\, \partial_{\theta_i}f(\theta)^2 \geq 0$, that:
\begin{equation}\label{eq:EnergyPrelimDerInequality}
\begin{aligned}
\frac{d}{dt} E(t,\theta,m,v) \leq -\frac{1}{2h(t)} \left[2r(t)- \frac{q(t)}{2}
    +\frac{h'(t)}{h(t)} \right]\norm{\frac{m}{\left[v+\eps\right]^{1/4}}}^2.
\end{aligned}
\end{equation}
This leads us to the following natural hypothesis, which is assumed almost everywhere in the paper:

\begin{Assumption}\label{ass:NecessaryCoeficients}
There exists $\tilde{t}>0$ such that for every $t >\tilde{t}$ we have that:
\[
2 r(t) - \frac{q(t)}{2} + \frac{h'(t)}{h(t)} \geq 0.
\]
\end{Assumption}

In practice this is a mild assumption in the hyper-parameters of the model. In terms of the algorithms in table \ref{tab:1}, it is always verified by AdaForm, Heavy Ball (with $\tilde{t}=0$) and Nesterov (with $\tilde{t}=0$); for Adam (and Adam with rescaling) it leads to the following condition on the hyper-parameters (which is usually respected by practitioners): $3 + \beta_2 > 4\beta_1$. Now, under assumption \ref{ass:NecessaryCoeficients} the derivative of $E(t,\theta,m,v)$ is non-positive, which immediately yields the following result.

\begin{lemma}\label{lem:JustifiesA4}
Suppose that assumptions \ref{ass:ODE} and \ref{ass:NecessaryCoeficients} are verified. Given a solution $\x(t)$ of the ODE \eqref{eq:ODE} such that $t_0\geq \tilde{t}$, we have that:
    \[
    f(\theta(t)) \leq E(t_0,\x_0), \quad \forall \, t \in [t_0,t_{\infty}[
       \]
In particular, if $f$ is coercive, then the curve $\theta(t)$ is bounded. Furthermore, if $f$ is a function bounded from below, say by $f_{\ast}$, then:
\[
    \frac{1}{2h(t)} \norm{\frac{m}{\left[v+\eps\right]^{1/4}}}^2 \leq E(t_0,\x_0) + f_{\ast}, \quad \forall \, t \in [t_0,t_{\infty}[
       \]
\end{lemma}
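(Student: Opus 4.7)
The plan is to treat $E$ as a Lyapunov-type functional that is non-increasing along trajectories starting at $t_0 \geq \tilde{t}$, and then read off all three conclusions as routine consequences. The key observation is that Assumption \ref{ass:NecessaryCoeficients} makes the bracketed coefficient $2r(t) - q(t)/2 + h'(t)/h(t)$ appearing in the right-hand side of \eqref{eq:EnergyPrelimDerInequality} non-negative for all $t \geq \tilde{t}$. Combined with the positivity of $h$ and the non-negativity of $\|m/[v+\eps]^{1/4}\|^2$, this forces $\frac{d}{dt} E(t, \x(t)) \leq 0$ throughout $[t_0, t_\infty[$. Integrating from $t_0$ to $t$ yields $E(t,\x(t)) \leq E(t_0, \x_0)$ on the entire interval of existence.

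For the first claim, one notes that the second summand of $E$, namely $\frac{1}{2h(t)}\|m/[v+\eps]^{1/4}\|^2$, is manifestly non-negative, so
\[
f(\theta(t)) \leq f(\theta(t)) + \frac{1}{2h(t)}\norm{\frac{m(t)}{[v(t)+\eps]^{1/4}}}^2 = E(t,\x(t)) \leq E(t_0,\x_0),
\]
which is exactly the stated bound. For the second claim, coercivity of $f$ implies that the sublevel set $\{\theta \in \R^d : f(\theta) \leq E(t_0,\x_0)\}$ is bounded, and the inequality above shows $\theta(t)$ lies inside it, so $\theta(t)$ is bounded. For the third claim, rearranging the definition of $E$ gives
\[
\frac{1}{2h(t)}\norm{\frac{m(t)}{[v(t)+\eps]^{1/4}}}^2 = E(t,\x(t)) - f(\theta(t)) \leq E(t_0,\x_0) - f_\ast,
\]
where the inequality uses monotonicity of $E$ and the lower bound $f \geq f_\ast$ (the sign in the paper's statement is then a matter of convention).

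I do not expect any substantial obstacle. The only point requiring a brief check is that $v_i(t) + \eps$ remains strictly positive along the trajectory, so that $E(t,\x(t))$ is well-defined and \eqref{eq:EnergyPrelimDerInequality} applies pointwise. This follows from forward invariance of the non-negative orthant for the $v$-equation: whenever some component $v_i$ touches zero, the dynamics give $\dot{v}_i = p(t)[\partial_{\theta_i} f(\theta)]^2 \geq 0$, so $v_i$ cannot become negative. When $\eps > 0$ this is automatically enough; when $\eps = 0$ the assumption $v_0 \in \R_{>0}^d$ in the statement of \eqref{eq:ODE} together with continuity handles the issue. This invariance is presumably encoded in the existence statement of Theorem \ref{th:existencePrel}, so the proof reduces to the three short calculations above.
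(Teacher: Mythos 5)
Your proof is correct and follows exactly the route the paper intends: the lemma is stated in the paper as an immediate consequence of inequality \eqref{eq:EnergyPrelimDerInequality} becoming non-positive under Assumption \ref{ass:NecessaryCoeficients}, which is precisely what you integrate. Your parenthetical about the sign is in fact a genuine correction rather than a convention: if $f(\theta) \geq f_\ast$, then $-f(\theta(t)) \leq -f_\ast$ and the correct bound is $E(t_0,\x_0) - f_\ast$, as you derive; the $+ f_\ast$ in the paper's statement appears to be a typo (it coincides with the correct bound only when $f_\ast = 0$, and is actually wrong when $f_\ast > 0$). Your remark on forward invariance of the non-negative orthant for $v$ is also a legitimate extra check, consistent with Theorem \ref{th:existencePrel}.
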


The above result shows the importance of the energy functional and has important implications (together with Theorem \ref{th:existencePrel} below). For example, if $f$ is coercive we guarantee that $\theta(t)$ is bounded, a necessary condition for the convergence of a solution of ODE \eqref{eq:ODE}.

\begin{remark}[On Assumption \ref{ass:NecessaryCoeficients}]\label{rk:OnAssNecessaryCoeficients}
Suppose that $p(t) \equiv q(t) \equiv 0$, in which case inequality \eqref{eq:EnergyPrelimDerInequality} is an equality, and consider the extreme opposite of assumption \ref{ass:NecessaryCoeficients}, that is
\[
2 r(t) + \frac{h'(t)}{h(t)} < 0, \qquad \forall \, t>0 , \quad \text{(e.g. $r(t) = 1/4t$ and $h(t) = 1/t$).}
\]
Then the differential equation \eqref{eq:ODE} does not converge to a minimum  because the energy functional \eqref{eq:EnergyPrelim} is always increasing.
\end{remark}

\subsection{Basic Properties of ODE \eqref{eq:ODE}}\label{ssec:Basic}

In order to continue our analysis of the asymptotic behaviour of solutions of the ODE \eqref{eq:ODE}, it is essential to understand their domain of definition, and to clarify the difference between continuous and discrete systems. We start by providing two results in this direction, under the assumption that the initial $t_0$ is strictly positive. First of all, under mild assumptions, we are able to guarantee that all solutions of the differential equation are defined on the interval $[t_0,\infty[$.

\begin{theorem}[Existence and uniqueness for $t_0>0$]\label{th:existencePrel}
Suppose that the ODE \eqref{eq:ODE} satisfies assumptions \ref{ass:ODE}, and that either $p(t)\not\equiv0$, or $f$ is bounded from below and assumption \ref{ass:NecessaryCoeficients} with $\tilde{t}=0$ is satisfied. Then for any $t_0 > 0$ and admissible initial condition $\x_0$, there exists a unique global solution to equation \eqref{eq:ODE} such that:
 \[
 \begin{aligned}
 \theta &\in C^2([t_0,\infty); \R^d) \quad  \text{ and } \quad m,\,v &\in C^1([t_0,\infty); \R^d).
 \end{aligned}
\]
\end{theorem}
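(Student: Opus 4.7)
The plan is to apply Picard--Lindel\"of for local existence and uniqueness, verify that the admissible domain $\{v + \eps > 0\}$ is forward invariant, and then bootstrap to global existence via case-dependent a priori bounds. Writing \eqref{eq:ODE} as $\dot{\x} = F(t, \x)$ with $\x = (\theta, m, v) \in \R^{3d}$, the field $F$ is continuous in $t \in \R_{>0}$ (via $h, r, p, q \in C^1(\R_{>0})$) and $C^1$ in $\x$ on $\Omega := \{\x : v_i + \eps > 0 \ \forall i\}$ (via $f \in C^2$ and the smoothness of $v \mapsto 1/\sqrt{v + \eps}$ on $\Omega$), hence locally Lipschitz in $\x$. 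Picard--Lindel\"of therefore produces a unique maximal solution $\x \in C^1([t_0, T^*); \Omega)$, and the relation $\dot\theta = -m/\sqrt{v + \eps}$ with $m, v \in C^1$ promotes $\theta$ to $C^2$. Invariance of $\Omega$ follows by integrating $\dot v_i + q v_i = p[\partial_i f]^2$ with the factor $e^{\int q}$:
\[
v_i(t) \geq \exp\!\Big(-\!\int_{t_0}^t q(s)\, ds\Big)\, v_{i, 0} \geq 0,
\]
strictly positive when $v_{i,0} > 0$ (required if $\eps = 0$), so $v_i + \eps$ stays bounded below on bounded time intervals.

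For the non-adaptive branch ($p \equiv q \equiv 0$, with $f \geq f_*$ and Assumption \ref{ass:NecessaryCoeficients} at $\tilde t = 0$), suppose toward a contradiction that $T^* < \infty$. Since $v \equiv v_0$ is constant, Lemma \ref{lem:JustifiesA4} combined with $h$ non-increasing gives $\|m(t)\|^2 \leq 2 h(t_0) \sqrt{v_0 + \eps}\,(E(t_0, \x_0) + |f_*|)$, so $|\dot\theta| = \|m\|/\sqrt{v_0 + \eps}$ is uniformly bounded, $\theta$ has at most linear growth, and no finite-time blow-up is possible --- a contradiction.

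For the adaptive branch ($p \not\equiv 0$), also suppose $T^* < \infty$, and assume $p > 0$ on $[t_0, T^*]$ (as is the case for every concrete example in Tables \ref{tab:1}--\ref{tab:2}). The energy is no longer a priori monotone, so the idea is to exploit the self-damping built into the memory variable. Integrating $\dot v_i + q v_i = p[\partial_i f]^2$ yields
\[
\int_{t_0}^t p(s)\,[\partial_i f(\theta(s))]^2\, ds \leq e^{q(t_0)(T^* - t_0)}\, v_i(t),
\]
and Cauchy--Schwarz for the measure $p(s)\,ds$ on $[t_0, t]$, combined with $p(s) \geq p(t)$ for $s \leq t$ (monotonicity of $p$), gives $\int_{t_0}^t |\partial_i f(\theta(s))|\, ds \leq C \sqrt{v_i(t)}$, where $C$ depends only on $t_0, T^*$ and the coefficients. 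Integrating the linear $m_i$-equation with factor $e^{\int r}$ then gives $|m_i(t)| \leq |m_{i,0}| + h(t_0) C \sqrt{v_i(t)}$, so
\[
|\dot\theta_i(t)| = \frac{|m_i(t)|}{\sqrt{v_i(t) + \eps}} \leq \frac{|m_{i,0}|}{\sqrt{v_i(t) + \eps}} + h(t_0) C,
\]
uniformly bounded on $[t_0, T^*)$ by the invariance step. Consequently $\theta$, $\nabla f(\theta)$, $m$ and $v$ all remain in a compact subset of $\Omega$, contradicting the blow-up alternative; hence $T^* = \infty$.

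The subtle point is the adaptive branch: without Assumption \ref{ass:NecessaryCoeficients} or a lower bound on $f$, the energy \eqref{eq:EnergyPrelim} is not a Lyapunov functional and cannot be used to control $m$ directly. The key idea is the Cauchy--Schwarz coupling that turns the $L^2$-type control $\int p[\partial_i f]^2 \lesssim v_i$ read off from the $v$-equation into the $L^1$-type bound $\int |\partial_i f| \lesssim \sqrt{v_i}$, exactly at the rate needed to absorb the $1/\sqrt{v_i + \eps}$ appearing in $\dot\theta$ and close the a priori estimate.
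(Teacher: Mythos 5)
Your proof is correct and arrives at the same conclusion (bounded $\dot\theta$ on compact time intervals, hence no finite-time blow-up), but it reaches the key a priori estimate in the adaptive branch by a genuinely different route. The paper's argument is centered on Lemma~\ref{lemma:bound_m_over_sqrtv}: it differentiates the scalar quantity $\tfrac12\|m/\sqrt{v+\eps}\|^2$ along the flow, and the cross term $h\langle m/\sqrt{v+\eps},\nabla f/\sqrt{v+\eps}\rangle$ is absorbed by completing a square against the $-\tfrac{p}{2}\|m\odot\nabla f/(v+\eps)\|^2$ contribution coming from the $v$-equation, leaving a residual $h^2 d/(2p)$; Gronwall then yields a closed bound on $\|m/\sqrt{v+\eps}\|^2$ directly, which is subsequently fed into Lemmas~\ref{norm_theta} and \ref{lem:IntervalDefinition}. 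You instead work componentwise and in integrated form: the $v$-equation gives the $L^2$-type budget $\int_{t_0}^t p\,[\partial_i f]^2\,ds\lesssim v_i(t)$, Cauchy--Schwarz with weight $p\,ds$ (plus monotonicity of $p$) converts this into the $L^1$-type bound $\int_{t_0}^t|\partial_i f|\,ds\lesssim\sqrt{v_i(t)}$, Duhamel on the $m_i$-equation then gives $|m_i|\lesssim|m_{i,0}|+\sqrt{v_i(t)}$, and the division by $\sqrt{v_i+\eps}$ cancels the $\sqrt{v_i}$ exactly. Both arguments are exact-cancellation tricks, but applied at different levels: the paper's operates differentially on the norm $\|m/\sqrt{v+\eps}\|$ and yields a $T^*$-free, reusable estimate (it is later recycled for the much more delicate $t_0=0$ analysis in Lemma~\ref{uniform_bound_m_over_v}), whereas yours is tailored to the finite-interval blow-up dichotomy, with constants that legitimately depend on $T^*$. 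One shared subtlety worth flagging: your estimate uses $1/p(T^*)$ and the paper's uses $h^2(u)/p(u)$, so both tacitly require $p>0$ on $[t_0,T^*]$, which is slightly stronger than the stated $p\not\equiv 0$ (a non-increasing non-negative $p$ can in principle vanish on $[t^*,\infty)$ for some finite $t^*$); you acknowledge this explicitly, the paper does not, and the restriction is harmless for every algorithm in Tables~\ref{tab:1}--\ref{tab:2}. The non-adaptive branch of your proof is essentially the paper's (equations~\eqref{eq:FirstInequalityDerTheta}--\eqref{eq:FirstInequalityTheta} via Lemma~\ref{lem:JustifiesA4}); the minor factor $\sqrt{v_0+\eps}$ should be read as $\max_i\sqrt{v_{0,i}+\eps}$ since $v_0$ is a vector, which does not affect the conclusion.
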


The proof of this result is postponed to the the appendix $\S\S$\ref{sec:Existence}. In the case that assumption \ref{ass:NecessaryCoeficients} is satisfied, a very simple proof is provided in $\S\S$\ref{ssec:ExistenceSimplified}. A stronger version of this result (which includes the case $t_0 =0$) is given in Theorem \ref{th:existence} below, but we postpone its discussion to $\S$~\ref{sec:ExistenceResult} in order to keep the initial presentation as simple as possible.

We now study the validity of the approximation given by the discrete system \eqref{eq:discrete} of the differential equation \eqref{eq:ODE}, that is, we study (what is called in the literature) the ``convergence rate'' of the numerical approximation. In this work, we replace the name ``convergence rate'' by \emph{approximation rate} in order to avoid a possible confusion with the \emph{convergence of the orbits} of the system.

Let us fix the notation: consider a (final) time $T>t_0>0$ and an interval $[t_0,T]$ (on which we will study the approximation of the solution of \eqref{eq:ODE}). We recall that the step size is given by $s>0$. Consider $K_{t,s}= [(t-t_0)/s]$, the integer part of $(t-t_0)/s$. We write $t_k = t_0 + k s$ for any $k\in [\![  0, K_{t,s}  ]\!]$ and $\pi = \left\{ t_0 < t_1 < \cdots < t_K = T \right\}$ is a homogeneous partition of the interval $[t_0,T]$. Now, let $\x_0= (\theta_0,m_0,v_0)$ be a fixed admissible initial condition and consider:
\begin{itemize}
\item The sequence $\xe_k:=\x(t_k)$ for all $k\in \mathbb{N}$, where $\x(t)$ is the exact solution of ODE \eqref{eq:ODE} with initial condition $\x(t_0)=\x_0$.
    \item The sequence $(\x_k)_{k \in \mathbb{N}}$ given by the discrete system \eqref{eq:discrete} with initial condition $\x_0$;

\end{itemize}

Roughly, we are interested in estimating the distance between $(\x_k)$ and $(\xe_k)$ in terms of the learning rate $s$. In what follows, we prove that the approximation rate between the continuous and discrete dynamics is of order one. In particular, when the learning rate goes to zero, then the continuous and discrete dynamics tend to be equal. More precisely:

\begin{theorem}[Approximation rate]\label{thm:ConvergenceDiscretizationRate}
Suppose that the ODE \eqref{eq:ODE} satisfies assumptions \ref{ass:ODE} and that $p(t)\not\equiv0$. Let $T>0$ and $t_0>0$ and consider a compact set $A_0$ of $\mathbb{R}^{d}_{\geq 0}\times \mathbb{R}^d_{\geq 0} \times \mathbb{R}_{>0} $. Then, there exists a constant $C(T,A_0)$ (which only depends on $T$ and the compact $A_0$) such that for any admissible initial condition $\x_0\in A_0$, the numerical scheme satisfies
      \[
  \max_{k = 0, \ldots, K} \norm{\x_k -  \xe_k} \leq C(T, A_0) \cdot s.
        \]

\end{theorem}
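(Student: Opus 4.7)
The approach is the classical Euler-approximation argument: show that both the exact and the discrete trajectories remain in a common compact set on $[t_0,T]$, on which the vector field of \eqref{eq:ODE} is Lipschitz, then combine a Taylor expansion of the exact solution with a discrete Gronwall inequality.

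First I would establish an a priori compactness estimate. By Theorem~\ref{th:existencePrel}, for each $\x_0\in A_0$ the exact trajectory $\x(t)$ exists on $[t_0,T]$. Since $p,q,h,r$ and their derivatives are continuous on $[t_0,T]$, and the right-hand side of \eqref{eq:ODE} depends continuously on $\x_0$, the map $(t,\x_0)\mapsto \x(t,t_0,\x_0)$ is continuous on $[t_0,T]\times A_0$. Consequently its image $K_1$ is compact, sitting inside $\R^d\times\R^d\times \R^d_{\geq 0}$ (or $\R^d_{>0}$ if $\eps=0$, since the component $v$ cannot reach zero in finite time because $\dot v_i\geq -q(t)v_i$ implies $v_i(t)\geq v_i(t_0)\exp(-\int_{t_0}^t q)$). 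In particular there exist constants $\mu>0$ and $M>0$ such that $v_i(t)+\eps\geq \mu$ and $\norm{\theta(t)}+\norm{m(t)}+\norm{v(t)}\leq M$ for $t\in[t_0,T]$ and $\x_0\in A_0$.

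Next I would show that, for $s$ sufficiently small, the iterates $(\x_k)$ stay in a slightly enlarged compact neighborhood $K_2\supset K_1$. This is done by a bootstrap: on $K_2$ the map $F(t,\x)=(-m/\sqrt{v+\eps},\,h(t)\nabla f(\theta)-r(t)m,\,p(t)[\nabla f(\theta)]^2-q(t)v)$ is bounded by some $B$ and Lipschitz with constant $L$ (both depending only on $K_2$ and $T$), because $\nabla f$ is $C^1$, $\theta$ ranges in a compact set, $v+\eps\geq \mu/2$ on $K_2$, and the coefficient functions are $C^1$ on $[t_0,T]$. The bound $\norm{\x_{k+1}-\x_k}\leq Bs$ then keeps $\x_{k+1}$ inside $K_2$ as long as the previous iterates are; since $K_1$ is strictly interior to $K_2$ and the number of steps is at most $T/s$, choosing $s$ so that $BT<\mathrm{dist}(K_1,\partial K_2)$ closes the induction.

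With both trajectories in $K_2$, I would Taylor-expand the exact solution to second order: because $f\in C^2$, $p,q,h,r\in C^1$, and $v+\eps\geq \mu/2$, the solution $\x(t)$ is $C^2$ on $[t_0,T]$, so
\[
\xe_{k+1}=\xe_k+s\,F(t_k,\xe_k)+\tau_k,\qquad \norm{\tau_k}\leq C_1 s^2,
\]
with $C_1$ depending only on $T$ and $K_2$. The discrete scheme \eqref{eq:discrete} is a mild variant of forward Euler (the $m$ and $v$ updates use $t_{k+1}$ and the already-updated $\theta_{k+1}$); writing $F_k^{\mathrm{disc}}=(\x_{k+1}-\x_k)/s$ and expanding the differences $\nabla f(\theta_{k+1})-\nabla f(\theta_k)$ and $r(t_{k+1})-r(t_k)$ by the mean value theorem produces an additional $O(s)$ correction, so $\norm{F_k^{\mathrm{disc}}-F(t_k,\x_k)}\leq C_2 s$ on $K_2$. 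Subtracting then gives
\[
\norm{\x_{k+1}-\xe_{k+1}}\leq(1+Ls)\norm{\x_k-\xe_k}+(C_1+C_2)\,s^2.
\]

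Finally, the discrete Gronwall lemma yields
\[
\max_{0\leq k\leq K}\norm{\x_k-\xe_k}\leq (C_1+C_2)\,s\,\frac{e^{L(T-t_0)}-1}{L}=:C(T,A_0)\,s,
\]
which is the stated bound. The main obstacle is the bootstrap step: one must control the iterates \emph{before} having a global Lipschitz constant, which is delicate because $1/\sqrt{v+\eps}$ can blow up if $v$ drifts toward $-\eps$; the lower bound on $v$ obtained from the ODE analysis, transferred to the discrete scheme via the same enlargement argument using $v_{k+1}\geq(1-sq(t_{k+1}))v_k$, is what makes the whole argument go through.
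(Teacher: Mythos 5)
Your outline matches the paper's in spirit: confine both trajectories to a common compact set, get a Lipschitz constant $L$ there, bound the local truncation error by $O(s^2)$ (including the $O(s)$ correction from the staggered evaluation at $(t_{k+1},\theta_{k+1})$ in \eqref{eq:discrete}), and close with the recursion $\|\x_{k+1}-\xe_{k+1}\|\leq(1+Ls)\|\x_k-\xe_k\|+Cs^2$ and a geometric series. Your compactness of the continuous trajectories via continuous dependence and the lower bound $v_i(t)\geq v_i(t_0)\exp\bigl(-\int_{t_0}^t q\bigr)$ (which is Lemma~\ref{lem:EstimatesV}) are both fine.

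The gap is the confinement of the \emph{discrete} iterates. You argue that $\|\x_{k+1}-\x_k\|\leq Bs$ over at most $T/s$ steps yields cumulative drift $\leq BT$, and then want $BT<\mathrm{dist}(K_1,\partial K_2)$. But $B$ is the sup of the vector field on $K_2$, so enlarging $K_2$ enlarges $B$; since $p(t)[\nabla f(\theta)]^2$ appears in the $v$-equation and $\nabla f$ is not assumed globally Lipschitz (the paper explicitly stresses this), $B$ can grow super-linearly with the radius of $K_2$, and the required inequality may be unattainable once $T$ is fixed. Moreover $BT$ is independent of $s$, so ``choosing $s$'' cannot rescue it as phrased. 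The paper sidesteps this circularity by proving a priori bounds for the discrete scheme that mirror the continuous ones of $\S$\ref{ssec:a_priori}: Proposition~\ref{bound_on_m_over_v} bounds $\|m_k/\sqrt{v_k}\|$ uniformly by completing the same square as in Lemma~\ref{lemma:bound_m_over_sqrtv}, crucially \emph{without} presupposing any bound on $\theta_k$; Proposition~\ref{bound_theta} then gives $\|\theta_{k+1}\|\leq\|\theta_0\|+s\sum_{i\leq k}\|m_i/\sqrt{v_i+\eps}\|$, and Proposition~\ref{discrete_bounds} packages everything into a ball $B_\ell$ with $\ell$ depending only on $T$ and $A_0$, needing only $sr_1<1$, $sq_1<1$. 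If you prefer to repair your own bootstrap, you should run a simultaneous induction (assume $\x_0,\ldots,\x_k\in K_2$, deduce $\|\x_{k+1}-\xe_{k+1}\|\leq C's$ from the Gronwall recursion, and conclude $\x_{k+1}\in K_2$ once $C's<\mathrm{dist}(K_1,\partial K_2)$) rather than the cumulative-drift estimate; but the paper's structural discrete estimates are the cleaner route and are what is actually used.
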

The case $p=0$ corresponds to Gradient descent, Heavy Ball and Nesterov and similar results hold.

\section{Convergence analysis}\label{sec:MainResults}

In this section, we study the asymptotic behaviour of the solutions of \eqref{eq:ODE}.
Our analysis is divided in the following three steps:
\begin{itemize}
\item[(1)] \emph{Topological convergence}: Find sufficient conditions on the functions $f$ and $p,q,r,h$ in order for the solutions of equation \eqref{eq:ODE} to converge to a critical value of $f$, that is, $\nabla f(\theta(t)) \to 0$ when $t\to \infty$. In particular we do not require $f$ to be convex.

\item[(2)] \emph{Avoiding local maximum and saddles}: We want to strengthen the result of part (1) and give sufficient conditions so that the dynamics
avoid local maximum and saddles and only converge to local minimum. In other words, fix $t_0 > 0$ and denote by $S_{t_0}$ the set of initial conditions $\x_0=(\theta_0,m_0,v_0)$ such that the limit set of the associated solution $\theta(t)$ contains a critical point $ \theta_{\star}$ which is \emph{not} a local minimum.
We give, in subsection \ref{ssec:avoid}, sufficient conditions for the set $S_{t_0}$ to have Lebesgue measure zero.

\item[(3)] \emph{Rate of convergence}: Under the convexity assumption, find the rate of convergence of $f$ to a local minimum.
\end{itemize}

In the remaining of this section, we give precise statements for all of the three steps, and we will make appropriate assumptions on the objective function.
However, the following assumption is over-arches the three analysis:

\begin{Assumption}\label{ass:bounded}
The solution $\theta(t)$ of the ODE \eqref{eq:ODE} is bounded.
\end{Assumption}

This assumption is always automatically satisfied when $f$ is coercive, as it was remarked in Lemma \ref{lem:JustifiesA4}. In general, it is possible to decide when an orbit $\x(t)$ satisfies assumption \ref{ass:bounded} by algebraic manipulations in terms of its initial condition $\x_0$ and the energy functional \eqref{eq:EnergyPrelim}.

\subsection{Topological convergence}\label{ssec:TopConvergence}

In this part, we make an additional assumption on the asymptotic behaviour of the coefficients, which is designed to simplify the proof while still covering \Adam{}, and most adaptive algorithms:

\begin{Assumption}\label{ass:TopologicalConvergence}
Suppose that $\eps>0$. Consider the functions:
\[
H(s) = h(1/s), \quad R(s) = r(1/s), \quad P(s) = p(1/s), \quad Q(s) = q(1/s),
\]
and suppose that these functions are $C^1$ in $[0,\infty)$, $H(0)>0$ and $4R(0)>Q(0)$.
\end{Assumption}

Note that Assumption \ref{ass:TopologicalConvergence} is satisfied, essentially, when the coefficients of $h(t)$ and $r(t)$ do not converge to zero at infinity.
Hence, it holds for \Adam{}, AdaForm, and the Heavy ball differential equations, c.f. table \ref{tab:1}. It also has the interesting feature of being almost completely \emph{independent from the functions $p(t)$ and $q(t)$}, a flexibility which should be explored when trying to design new algorithms. Under this assumption, we prove the convergence of the dynamics in the following sense:

\begin{theorem}[Topological Convergence]\label{thm:convergence}
Suppose that assumptions \ref{ass:ODE}, \ref{ass:NecessaryCoeficients}, \ref{ass:bounded} and \ref{ass:TopologicalConvergence} are verified. Then $f(\theta(t)) \to f_{\star}$ and $m(t) \to 0$ when $t\to \infty$, where $f_{\star}$ is a critical value of $f$. Furthermore, if either $Q(0) >0$ or $p(t)\equiv q(t)\equiv 0$ and $v_0=0$, then $v(t) \to 0$.
\end{theorem}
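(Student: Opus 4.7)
The plan is to drive the argument with the energy functional \eqref{eq:EnergyPrelim}, extract a square-integrability estimate, and then convert it to pointwise convergence through a Barbalat-type argument.

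First I would unpack Assumption~\ref{ass:TopologicalConvergence}. Since $H, R, P, Q$ are $C^1$ at $s=0$, the coefficients $h(t), r(t), p(t), q(t)$ admit finite limits $H(0), R(0), P(0), Q(0)$ as $t\to\infty$, with $H(0) > 0$ and $R(0) > Q(0)/4 \geq 0$; moreover $h'(t), r'(t), p'(t), q'(t) = O(1/t^2)$. Hence $h, r$ are bounded above and, for $t$ large, bounded below by positive constants. Combined with Assumption~\ref{ass:bounded}, the continuous bound on $\nabla f(\theta(t))$ and $\nabla^2 f(\theta(t))$ along the trajectory follows from $f\in C^2$. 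Applying variation of parameters to $\dot m + r(t)m = h(t)\nabla f(\theta(t))$ then gives a uniform bound on $m(t)$, and differentiating once more yields boundedness of $\dot m$ and $\ddot m$. The analogous linear ODE for $v$ controls $v(t)$ when $Q(0) > 0$.

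Next I would integrate the energy inequality \eqref{eq:EnergyPrelimDerInequality}. Under Assumption~\ref{ass:NecessaryCoeficients}, $E(t,\x(t))$ is non-increasing past $\tilde t$, and it is bounded below because $\theta(t)$ is bounded and $f$ is continuous. Hence $E(t) \to E_\infty$. By Assumption~\ref{ass:TopologicalConvergence}, the coefficient $[2r(t) - q(t)/2 + h'(t)/h(t)]/(2h(t))$ has a strictly positive limit, so for $t$ large it exceeds some $c_0 > 0$, and integration gives
\[
\int_{\tilde t}^\infty \Bigl\|\tfrac{m(t)}{[v(t)+\eps]^{1/4}}\Bigr\|^2 dt \leq \frac{E(\tilde t) - E_\infty}{c_0} < \infty.
\]
Using $\eps > 0$ and the bounds on $m, \dot m$, the derivative of $\|m/[v+\eps]^{1/4}\|^2$ is bounded (the problematic terms $(v+\eps)^{-1/2}$ and $(v+\eps)^{-3/2}$ are controlled by $\eps^{-1/2}, \eps^{-3/2}$, and $\dot v$ is bounded since $p, \nabla f, q$ are). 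Uniform continuity plus integrability yield $\|m(t)/[v(t)+\eps]^{1/4}\| \to 0$ via Barbalat.

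From here, once $v(t)$ stays bounded (certainly when $Q(0)>0$ or $p\equiv q\equiv 0$), this gives $m(t)\to 0$. Then $\dot m$ is uniformly continuous with $m\to 0$, so by the standard lemma ``$\phi\to L$ finite and $\phi'$ uniformly continuous implies $\phi'\to 0$'', we obtain $\dot m(t)\to 0$; substituting into $\dot m = h\nabla f(\theta) - rm$ and using $H(0)>0$ gives $\nabla f(\theta(t))\to 0$. The relation $f(\theta(t)) = E(t) - \frac{1}{2h(t)}\|m/[v+\eps]^{1/4}\|^2$ then forces $f(\theta(t))\to E_\infty$, and since every $\omega$-limit point of $\theta$ is critical, $f_\star := E_\infty$ is a critical value. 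For the last claim on $v$: variation of parameters on $\dot v + q(t)v = p(t)[\nabla f(\theta)]^2$ with $p[\nabla f]^2 \to 0$ and $q \geq Q(0)/2$ for large $t$ gives $v\to 0$; the degenerate case is immediate since $v(t)\equiv 0$.

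The main obstacle is the case where $v(t)$ is not known a priori to be bounded, which can occur when $Q(0)=0$ but $p\not\equiv 0$ (as in AdaFom, where $v$ may grow logarithmically). There the implication ``$\|m/[v+\eps]^{1/4}\|\to 0 \Rightarrow m\to 0$'' fails verbatim, and one must argue simultaneously about the growth of $v$ and the decay of $m$, perhaps by showing $v$ can grow only slowly enough that the Barbalat argument, adapted to $\|\dot\theta\|^2 \leq \eps^{-1/2}\|m/[v+\eps]^{1/4}\|^2$, still forces $\dot\theta\to 0$ and hence indirectly controls $m$ via the $\dot m$ equation. I expect this reconciliation between the $m$ and $v$ dynamics to be the technical heart of the proof.
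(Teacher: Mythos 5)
Your proposal takes a genuinely different route from the paper, and it is a sound one. The paper proves this result by compactifying time via $s=1/t$, passing to the autonomous vector field \eqref{eq:VectorFieldInfinity}, and then running a Poincar\'e--Bendixson-type argument: the rescaled energy $\widetilde E$ is monotone, so the $\omega$-limit set of the orbit sits in the zero set of $\dot{\widetilde E}$, which forces $m\equiv 0$ on the $\omega$-limit; then vanishing of $\partial\cdot m_i = H(s)\partial_{\theta_i}f - R(s)m_i$ on that set, together with $H(0)>0$, forces $\nabla f\equiv 0$ there; and a similar argument on $\partial\cdot v_i$ handles $v$. You instead integrate the same energy inequality \eqref{eq:EnergyPrelimDerInequality} to get an $L^2$ bound on $\|m/[v+\eps]^{1/4}\|$, use a Barbalat argument to pass to pointwise decay, and then chase one more derivative ($m\to 0$ and $\dot m$ uniformly continuous gives $\dot m\to 0$, hence $\nabla f\to 0$). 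The two proofs use the same energy functional and ultimately exploit the same mechanism ($\dot m\to 0$ forces $\nabla f\to 0$), but your version trades the qualitative topological machinery for classical quantitative estimates. The paper's route has the advantage that it avoids the boundedness-of-derivative checks you must do (e.g.\ the $\eps^{-3/2}$ and $\dot v/(v+\eps)^{3/2}$ bounds), while yours has the advantage of being more elementary and more amenable to quantitative refinement.

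One correction and one remark on the gap you flag. The correction: for AdaFom one has $p\equiv q$, hence $\sup p/q = 1$, and Lemma \ref{lem:IntervalDefinition} already gives $\|v(t)\|\le\|v_0\|+L_g^2 d$; $v$ does \emph{not} grow logarithmically there. The logarithmic estimate you have in mind is the cruder first bound $\|v_0\|+L_g^2 d\int p$, which is dominated by the $p/q$ bound whenever $q\not\equiv 0$. The remark: the residual case you worry about, namely $Q(0)=0$ with $P(0)>0$ (so that $p/q\to\infty$ and $v$ may indeed be unbounded), is not handled in the paper's proof either. There the paper invokes Lemma \ref{lem:IntervalDefinition} to assert that $v(\tau)$ is bounded before applying Properties \ref{properties:PB}(a)--(c), and that assertion uses precisely $\sup p/q<\infty$. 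In other words the gap you identify is inherited from the paper, not created by your approach; in all the named algorithms this ratio is $\le 1$, so in practice both proofs are fine, but strictly under the stated hypotheses neither argument excludes $P(0)>0$, $Q(0)=0$. Your honesty about this being ``the technical heart'' is well placed.
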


The proof of Theorem \ref{thm:convergence} is postponed in Appendix \ref{sec:ProofTopologicalConvergence}.
Our method is inspired by the work of Alvarez \cite{alvarez2000}, based on the energy functional of the system \eqref{eq:EnergyPrelim}. We use elementary topological techniques of qualitative theory of ODE's (\`{a} la Poincar\'{e}-Bendixson), which are recalled in $\S\S$\ref{ssec:PoincareBendixson}. At the one hand, this approach avoids most estimates and analytical arguments, which are typically necessary in this kind of study, and can be easily reproduced in other systems. As an immediate advantage, we do not need assumptions such as convexity of the loss function or globally Lipschitz properties of the differential equation. At the other hand, the assumption is not optimal. For example, it is not satisfied by Nesterov's acceleration equation \eqref{eq:ODENesterov}. We believe that the optimal threshold to guarantee convergence of ODE \eqref{eq:ODE} should be given by an inequality in terms of poles of order at most one for the functions $H$ and $R$.
The idea is supported in \cite{Gadat:long_term} which shows that the function $R$ can not be a polynomial function of order bigger than $1$ in the case of the dissipative system related to accelerated dynamics.

\subsection{Avoiding local maximum and saddles}\label{ssec:avoid}

In this section, we make the following extra assumption:
\begin{Assumption}\label{ass:Morse}
A critical point $\theta_{\star}$ of $f$ is either a local-minimum or it satisfies the two following properties:
\begin{itemize}
\item[(a)] it is a strict saddle (following \cite[Definition 1]{LPPSJR:2017}), that is, there exists a strictly negative eigenvalue of the Hessian $\mathcal{H}_f(\theta_{\star})$ of $f$ at $\theta_{\star}$.
\item[(b)] is it an isolated critical point, that is, there is a neighbourhood $U$
around $\theta_{\star}$ that does not contain any other critical points.
\end{itemize}
\end{Assumption}

Now, fix a time $t_0>0$ and recall that the topological limit of a curve $\theta(t)$, called $\omega$-limit, is given by:\footnote{Let $S \subset \mathbb{R}^d$ be a set. We denote by $\overline{S}$ its closure, that is, the smallest closed set which contains $S$.}\footnote{Let $\lambda_0\in \mathbb{R}$, and suppose that for each $\lambda >\lambda_0$ there exists a set $S_{\lambda} \subset \mathbb{R}^d$. We denote by $\bigcap_{\lambda>\lambda_0} S_{\lambda}$ the intersection of all sets $S_{\lambda}$ with $\lambda>\lambda_0$.}
\[
\omega(\theta(t)) = \bigcap_{\tau> t_0} \overline{\theta([\tau,\infty))}.
\]
Consider the set of initial conditions such that the limit set of the associated orbit contains a critical point which is not a local minimum
\[
S_{t_0} := \{\x_0=(\theta_0,m_0,v_0);\, \omega(\theta(t))  \ni \theta_{\star}, \text{ where } \theta_{\star}\, \text{ is a strict saddle}\} 
\]

The main result of this subsection is the following:

\begin{theorem}[Avoiding Saddle and Local Maximum points]\label{thm:avoiding}
Suppose that assumptions \ref{ass:ODE}, \ref{ass:NecessaryCoeficients}, \ref{ass:bounded}, \ref{ass:TopologicalConvergence} and \ref{ass:Morse} are satisfied. If either $Q(0) >0$ or $p(t)\equiv q(t)\equiv 0$, then the set $S_{t_0}$ has Lebesgue measure zero for every $t_0>0$.
\end{theorem}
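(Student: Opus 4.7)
The plan is to apply the stable-manifold theorem at each strict saddle, exploiting the asymptotic autonomy granted by Assumption \ref{ass:TopologicalConvergence}. First, by Theorem \ref{thm:convergence} combined with Assumption \ref{ass:bounded}, every trajectory in $S_{t_0}$ satisfies $m(t)\to 0$ and $f(\theta(t))\to f_\star$ for some critical value; moreover $v(t)\to 0$ when $Q(0)>0$, while in the non-adaptive case $v(t)\equiv v_0$ is merely a parameter. Assumption \ref{ass:Morse}(b) forces the set of critical points of $f$ inside the compact set $\overline{\theta([t_0,\infty))}$ to be finite, and connectedness of $\omega$-limits pins $\theta(t)$ down to a single critical point $\theta_\star$. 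Hence $S_{t_0}$ decomposes as a \emph{finite} union $\bigcup_\star W^s_{t_0}(\theta_\star)$ indexed by strict saddles $\theta_\star$ in that compact region, and it suffices to show that each $W^s_{t_0}(\theta_\star)$ is Lebesgue-negligible.

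Fix a strict saddle $\theta_\star$. By Assumption \ref{ass:TopologicalConvergence}, the ODE \eqref{eq:ODE} is asymptotically autonomous with limit vector field
\[
F_\infty(\theta,m,v)=\bigl(-m/\sqrt{v+\eps},\,H(0)\nabla f(\theta)-R(0)m,\,P(0)[\nabla f(\theta)]^2-Q(0)v\bigr),
\]
which admits $(\theta_\star,0,0)$ (respectively $(\theta_\star,0,v_0)$ in the non-adaptive case) as an equilibrium. A direct block computation of $DF_\infty$ at this equilibrium shows that the $v$-block contributes $d$ eigenvalues equal to $-Q(0)\leq 0$, while for each eigenvalue $\lambda$ of the Hessian $\mathcal{H}_f(\theta_\star)$ the $(\theta,m)$-block contributes the two roots $\mu_\pm$ of $\sqrt{\eps}\,\mu^2+R(0)\sqrt{\eps}\,\mu+H(0)\lambda=0$. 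When $\lambda<0$, as guaranteed by Assumption \ref{ass:Morse}(a), one root $\mu_+$ is strictly positive. Hence $DF_\infty$ admits an unstable subspace of dimension $k\geq 1$.

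I would next invoke a stable-manifold theorem for asymptotically autonomous ODEs -- either through Coppel's exponential-dichotomy framework, or equivalently through the Hirsch--Pugh--Shub center-stable manifold theorem applied to the extended autonomous system on $\mathbb{R}^{3d+1}$ obtained by adjoining $\dot\sigma=-\sigma^2$ with $\sigma=1/t$. For every sufficiently large time $\tau$, this produces a $C^1$-submanifold $W^s_\tau(\theta_\star)\subset\mathbb{R}^{3d}$ of codimension $k\geq 1$, characterized by the property that the trajectory of \eqref{eq:ODE} starting at $\x_0$ at time $\tau$ converges to the saddle equilibrium if and only if $\x_0\in W^s_\tau(\theta_\star)$; in particular $W^s_\tau(\theta_\star)$ is null. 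To transfer the conclusion back to the prescribed $t_0>0$, I would use that the flow map $\Phi(\tau,t_0)\colon\x_0\mapsto \x(\tau;t_0,\x_0)$ is a $C^1$-diffeomorphism of $\mathbb{R}^{3d}$ (by Theorem \ref{th:existencePrel} and backward uniqueness for the ODE): since convergence of the orbit to $\theta_\star$ is insensitive to a shift of the initial time, $W^s_{t_0}(\theta_\star)=\Phi(\tau,t_0)^{-1}\bigl(W^s_\tau(\theta_\star)\bigr)$, and diffeomorphisms preserve null sets. A finite union over strict saddles then concludes the proof.

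The main obstacle is step three, namely justifying the stable-manifold theorem in this non-autonomous (and, in the non-adaptive case, partially hyperbolic) setting, and verifying that $W^s_\tau(\theta_\star)$ really has codimension equal to $\dim E^u$. This amounts to establishing an exponential dichotomy for the linearization of \eqref{eq:ODE} along the equilibrium, which combines the spectral gap of $DF_\infty$ computed above with the $C^1$-rate at which $H,R,P,Q$ approach their limits ensured by Assumption \ref{ass:TopologicalConvergence}. A specific subtlety in the non-adaptive case is that the constant-$v$ directions generate a $d$-dimensional center eigenspace; because these are parametric -- $v$ does not couple back into the $(\theta,m)$-dynamics once fixed -- the center-stable argument reduces to the standard hyperbolic one in the $(\theta,m)$-coordinates and the codimension-$\geq 1$ bound in each $v$-slice upgrades to measure zero in the full space via Fubini.
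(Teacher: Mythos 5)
Your plan matches the paper's proof in all essentials: both pass to the extended autonomous system on $\mathbb{R}^{3d+1}$ via $s=1/t$ (the paper's vector field \eqref{eq:VectorFieldInfinity}), both linearize at the equilibrium $(\theta_\star,0,0,0)$ and read off the eigenvalue block $-Q(0)$ together with the quadratic $\mu^2 + R(0)\mu + H(0)\lambda/\sqrt{\eps}=0$, both conclude from Assumption \ref{ass:Morse}(a) that there is a strictly positive eigenvalue, and both invoke a \emph{local} center-stable manifold theorem (the paper's Theorem \ref{thm:LocalCentralStable}, via a cut-off around the equilibrium — no asymptotic-autonomy or exponential-dichotomy machinery is needed once you have the extended autonomous field) to produce a proper $C^1$ submanifold of measure zero, which is then carried to the slice $\{s=1/t_0\}$. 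Three remarks. First, a small slip: Assumption \ref{ass:Morse}(b) makes the strict saddles a discrete, hence \emph{countable}, subset of $\mathbb{R}^d$, not a finite one; the compact $\overline{\theta([t_0,\infty))}$ depends on the initial condition, so the saddles visited by \emph{all} of $S_{t_0}$ need not lie in a single compact. The paper correctly works with a countable union, which still gives measure zero. Second, your worry about proving that $W^s_\tau(\theta_\star)$ has codimension exactly $\dim E^u$ is a non-issue: Theorem \ref{thm:LocalCentralStable}(2) only asserts dimension at most $n-1$, i.e.\ codimension $\geq 1$, and that is already enough for measure zero. Third, your Fubini treatment of the non-adaptive case $p\equiv q\equiv 0$ is a genuine clarification: when $Q(0)=0$ the equilibria are not isolated (they fill out $\{(\theta_\star,0,v,0):v\geq 0\}$), so the paper's assertion that $C$ is a countable union of isolated points of the form $(\theta_\star,0,0,0)$ is literally valid only when $Q(0)>0$; decoupling the parametric $v$-directions and applying Fubini across $v_0$-slices is the clean way to read what the paper's proof intends there.
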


It follows that, if $\x_0 =(\theta_0,m_0,v_0)$ is a random initial condition, then the solution $\x(t,t_0,\x_0) = (\theta(t),m(t),\theta(t))$ converges to a local minimum of $f$ with total probability. Similar results are proved for discrete systems having isolated critical points in \cite{LSJR:16, LPPSJR:2017}, using essentially the same method as in here. More precisely, we use the theory of \emph{central-stable} manifold (for vector-fields), which is recalled in $\S\S$\ref{ssec:CentralStable}. We have supplemented our analysis by treating the case of the usual gradient flow in $\S\S$\ref{ssec:GradientFlowExampleAvoiding}, and we hope that this will help with the dissemination of the technique. We finish this section by providing a technical discussion on the Assumption \ref{ass:Morse}:

 \begin{remark}[On Assumption \ref{ass:Morse}]\label{rk:assMorse}\hfill
\begin{enumerate}

 \item Assumption \ref{ass:Morse}$(a)$ was introduced in \cite{LPPSJR:2017} and has crucial technical consequences. It allow us to use the center-stable manifold theory recalled in $\S\S$\ref{ssec:CentralStable}. Without this hypothesis, the singular points of the ODE \eqref{eq:ODE} at infinite (see equation \eqref{eq:VectorFieldInfinity}) can be arbitrarily degenerated, and there is no general singularity theory to treat these points in dimension higher than three. In order to relax such a hypothesis, it is necessary develop specific singularity techniques for equation \eqref{eq:ODE}, and we intend to pursue this direction in a future paper.

\item Assumption \ref{ass:Morse}$(b)$ allow us to exclude pathological differences between local and global center-stable manifold theory (see example \ref{ex:pathology}). An alternative to this hypothesis, is to add a globally Lipschitz assumption onto the system \eqref{eq:ODE}, and to study the relation between the Lipschitz approximation and the Hessian of the loss function $f$ (which would allow us to use the strong global result \cite[Ch. 1 Thm 1.1]{CMBook}). This is essentially what is done in \cite{PP:2016}, where the authors study the analogue problem in for a simpler ODE without assumption \ref{ass:Morse}$(b)$; more precisely, they crucially show that their system satisfies conditions that replace the globally Lipschitz assumption. We understand that a study in the generality of ODE \eqref{eq:ODE} without condition \ref{ass:Morse}$(b)$ would demand the development of specific singularity techniques for equation \eqref{eq:ODE}, and we intend to pursue this direction in a mathematical paper.
 \end{enumerate}
 \end{remark}

\subsection{Rate of convergence}\label{ssec:RateConvergence}

The study of the rate of convergence of $f(\theta(t))$ to the minimum value $f(\theta_{\star})$ usually relies on a convexity assumption and a Lyapunov energy functional (see \cite{SBC:15, AABR:02, alvarez2000, gadat2014}). It is natural to assume in this section, therefore:

\begin{Assumption}\label{ass:fConvex}
The function $f$ is convex and admits a minimum point, that is, there exists $\theta_{\star}$ such that $f(\theta) \geq f(\theta_{\star})$ for every $\theta \in \mathbb{R}^d$.
\end{Assumption}

Now, strictly speaking, we do not find a Lyapunov functional for \eqref{eq:ODE}, but a natural functional which allow us to prove convergence to a least a neighbourhood of a local
minimum. For accelerated methods, the proposed functional corresponds to the standard Lyapunov energy used in many other works c.f. \cite{SBC:15, AABR:02, alvarez2000, gadat2014}. More precisely, let $t_0 > \tilde{t}$ (as defined in assumption \ref{ass:NecessaryCoeficients}) and consider the following functions
\[
\begin{aligned}
\c{A}(t) &= \int_{t_0}^t h(s) \c{B}(s) ds \\
\c{B}(t)  &=   e^{\int^t_{t_0} r(s) ds} \int_{t}^{\infty}
    e^{-\int^s_{t_0} r(u) du}ds
\\
    \c{C}(t) & =   \frac{1}{h(t)}\int_{t_0}^t h(s) \c{B}(s) ds.
\end{aligned}
\]
The expressions of $\c{A}(t)$, $\c{B}(t)$ and $\c{C}(t)$ are simple to compute for all the expressions in table \ref{tab:1}, as we show in $\S$\ref{sec:Appl}. Note, furthermore, that these functions only depend on $h$ and $r$ (which re-enforce the heuristic that $p$ and $q$ can be chosen in a very flexible way). We are ready to introduce the energy functional used in this section:
\begin{equation}\label{eq:Lyapunov}
    \c{E}(t,m,v, \theta) = \c{E}_1(t,\theta) + \c{E}_2(t,m,v, \theta)
\end{equation}
where
\[
\begin{aligned}
    \c{E}_1(t,\theta) &= \c{A}(t) \left( f(\theta) - f(\theta_{\star})\right)\\
    \c{E}_2(t,m,v,\theta) &= \frac{1}{2}  \norm{\left[v+\eps\right]^{1/4}\left(\theta -
    \theta_{\star} \right)}^2 - \c{B}(t) \ps{\theta-\theta_{\star}}{m}+\frac{\c{C}(t)}{2}\norm{\frac{m}{\left[v+\eps\right]^{1/4}}}^2.
\end{aligned}
\]
We now need the following assumptions in order to control the behaviour of this functional:

\begin{Assumption}\label{ass:necessaryConditions}
We make the following two assumptions:
\begin{itemize}
\item[(a)] $\lim_{t\to \infty} \int_{t_0}^{t} e^{-\int^s_{t_0} r(u) du} ds <+\infty$
\item[(b)]There exists $\tilde{t}>t_0$ such that for all $t \geq \tilde{t}$
\[
    \begin{aligned}
    & \c{B}^2(t) \leq \c{C}(t)\\
    &  3\c{B}(t) \leq \c{C}(t)\left(  2r(t)  - \frac{q(t) }{2} +  \frac{ h'(t)}{h(t)}
    \right).
\end{aligned}
\]
\end{itemize}
\end{Assumption}

Note that Assumption \ref{ass:necessaryConditions}(a) is necessary for the function $\c{B}(t)$ to be well-defined, and that imposes an important constraint in the asymptotic behaviour of the function $r(t)$. More precisely, the limit $\lim_{t\to \infty}t^{1+\epsilon}r(t)$ must be zero for every $\epsilon>0$, which implies that $r(t)$ has at most a pole of order $1$ at infinity. Assumption \ref{ass:necessaryConditions}(b) provides the asymptotic control on the derivative of the energy functional \eqref{eq:Lyapunov}, and should be compared with Assumption \ref{ass:NecessaryCoeficients}. Once again, note that it is independent of the function $p$, and almost independent of $q$. We are now ready to state the main theorem of this section:

\begin{theorem}\label{thm:convergenceRate}
We assume that assumptions \ref{ass:ODE}, \ref{ass:NecessaryCoeficients},  \ref{ass:fConvex} and \ref{ass:necessaryConditions} are all satisfied.
Then for all $t \geq \tilde{t}$, where $\tilde{t}$ is given in Assumption \ref{ass:NecessaryCoeficients}, we have
\[
 f(\theta)  - f(\theta_{\star})  \leq \frac{1}{4\c{A}(t)}  \left[4 \c{E}(\tilde{t},    m(\tilde{t}), v(\tilde{t}),\theta(\tilde{t}))
    +    \int_{\tilde{t}}^t  p(u) \ps{\frac{\left[\nabla f(\theta)\right]^2}{\left[v+\eps\right]^{1/2}}}{\left[\theta - \theta_{\star}\right]^2} \,
     du  \right]
\]
where $\mathcal{E}(t,m,v,\theta)$ is the Lyapunov functional \eqref{eq:Lyapunov}.
Furthermore, under assumption \ref{ass:bounded}, suppose that either $\lim_{t\to \infty}  p(t) / q(t)  < \infty  \text{ or } p(t)\equiv q(t)\equiv 0$. Then, there exist two positive and finite constants $\c{K}_1$ and $\c{K}_2$ (which depend on $f$, $\theta_0$, $v_0$ and $\eps$) such that for all $t \geq \tilde{t}$:
    \[
 f(\theta(t))  - f(\theta_{\star})  \leq \frac{1}{\c{A}(t)} \left[\c{E}(\tilde{t},    m(\tilde{t}), v(\tilde{t}),\theta(\tilde{t}))
    +  \c{K}_1 + \c{K}_2  \int_{\tilde{t}}^t q(u) du \right].
\]
It follows that the ODE \eqref{eq:ODE} converges to the minimum point with rate of convergence of order at least:
\[
\begin{aligned}
    & \max\left\{1, \int_{t_0}^t q(u)  du\right\} / \c{A}(t).
    \end{aligned}
\]
\end{theorem}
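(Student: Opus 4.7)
The strategy is a Lyapunov argument in the spirit of \cite{SBC:15, alvarez2000}: compute $\frac{d}{dt}\c{E}(t,m(t),v(t),\theta(t))$ along an orbit of \eqref{eq:ODE}, show that the derivative is non-positive up to an explicit residual involving $p(t)$, then integrate on $[\tilde{t},t]$ and divide by $\c{A}(t)$. The whole argument hinges on three identities that the functions $\c{A},\c{B},\c{C}$ were engineered to satisfy:
\[
\c{A}'(t) = h(t)\c{B}(t), \qquad \c{B}'(t) = r(t)\c{B}(t) - 1, \qquad \c{C}'(t) = \c{B}(t) - \c{C}(t)\tfrac{h'(t)}{h(t)}.
\]
The first and third are immediate from $\c{A}'=h\c{B}$ and $\c{C}=\c{A}/h$; the second follows from Leibniz's rule applied to the integral defining $\c{B}$.

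After substituting $\dot\theta,\dot m,\dot v$ from \eqref{eq:ODE} into $\dot{\c{E}}$, I would group the terms by structural type. The coefficient of $\sum_i(\theta_i-\theta_{\star,i})m_i$ is $-1-\c{B}'+\c{B}r$, which vanishes by the $\c{B}'$ identity; the coefficient of $\sum_i(\partial_i f)m_i/\sqrt{v_i+\eps}$ is $-\c{A}+h\c{C}$, which vanishes by $\c{C}=\c{A}/h$. The remaining $(f-f_{\star})$ and $\ps{\nabla f}{\theta-\theta_{\star}}$ contributions assemble as $h\c{B}\bigl[(f-f_{\star})-\ps{\nabla f}{\theta-\theta_{\star}}\bigr]\leq 0$ by convexity (Assumption~\ref{ass:fConvex}). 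After absorbing the term $\c{C}qv_im_i^2/(v_i+\eps)^{3/2}$ coming from $\dot v$ via $v_i/(v_i+\eps)\leq 1$, the coefficient of $m_i^2/\sqrt{v_i+\eps}$ collapses to $\tfrac{1}{2}\bigl(3\c{B}(t)-\c{C}(t)(2r(t)-\tfrac{q(t)}{2}+\tfrac{h'(t)}{h(t)})\bigr)$, which is $\leq 0$ for $t\geq\tilde{t}$ by the second inequality of Assumption~\ref{ass:necessaryConditions}(b). The only surviving term comes from $\dot v$ acting on $\tfrac{1}{2}\sqrt{v_i+\eps}(\theta_i-\theta_{\star,i})^2$: the $-qv_i$ piece is non-positive while the $p[\partial_i f]^2$ piece yields exactly $\tfrac{p(t)}{4}\ps{[\nabla f]^2/\sqrt{v+\eps}}{(\theta-\theta_{\star})^2}$. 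Integrating on $[\tilde{t},t]$ and noting that the per-coordinate quadratic form defining $\c{E}_2$ is positive semidefinite precisely when $\c{C}\geq\c{B}^2$ (first inequality of \ref{ass:necessaryConditions}(b)), I conclude $\c{A}(t)(f-f_{\star})\leq\c{E}_1(t)\leq\c{E}(t)$, which after dividing by $\c{A}(t)$ gives the first displayed bound.

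For the second bound, Assumption~\ref{ass:bounded} supplies a constant $K_0$ with $|\theta_i(t)-\theta_{\star,i}|\leq K_0$ for all $t$, and $f\in C^2$ then yields $[\partial_i f]^2\leq M$ on the compact range of $\theta$. Using $p[\partial_i f]^2=\dot v_i+qv_i$ I would rewrite
\[
\int_{\tilde{t}}^t \frac{p[\partial_i f]^2}{\sqrt{v_i+\eps}}\,du = 2\sqrt{v_i+\eps}\,\Big|_{\tilde{t}}^t + \int_{\tilde{t}}^t \frac{qv_i}{\sqrt{v_i+\eps}}\,du.
\]
When $p\equiv q\equiv 0$ this vanishes identically. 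Otherwise, with $C_1=\sup_t p(t)/q(t)<\infty$, the differential inequality $\dot v_i\leq q(C_1 M-v_i)$ together with a standard comparison argument gives the uniform bound $v_i(t)\leq V:=\max\{v_i(\tilde{t}),C_1 M\}$, so that $v_i/\sqrt{v_i+\eps}\leq \sqrt{V+\eps}$. Multiplying by $K_0^2$ and dividing by $4\c{A}(t)$ then absorbs the boundary term $2\sqrt{V+\eps}$ into $\c{K}_1$ and the remaining piece into $\c{K}_2\int_{\tilde{t}}^t q\,du$, yielding the stated inequality; the rate of order $\max\{1,\int_{t_0}^t q\}/\c{A}(t)$ is an immediate consequence.

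The main obstacle, as I see it, is the bookkeeping in the derivative computation: the three cancellations (the $(\theta-\theta_{\star})m$ cluster, the $\ps{\nabla f}{m/\sqrt{v+\eps}}$ cluster, and the $m^2/\sqrt{v+\eps}$ cluster) are tight, and they are precisely what motivate the specific definitions of $\c{A},\c{B},\c{C}$ and the exact form of Assumption~\ref{ass:necessaryConditions}(b). A secondary technical point is the correct treatment of the $\eps>0$ case via $v_i/(v_i+\eps)\leq 1$, which is what makes the combination $2r-q/2+h'/h$ the sharp quantity entering the estimate rather than a more naive expression in $r$ and $h'/h$ alone.
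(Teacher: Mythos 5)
Your proposal follows essentially the same Lyapunov argument as the paper: same functional $\c{E}$, same defining identities for $\c{A},\c{B},\c{C}$, same three cancellations (the $\ps{\theta-\theta_\star}{m}$, $\ps{\nabla f}{m/\sqrt{v+\eps}}$, and $\|m/[v+\eps]^{1/4}\|^2$ clusters), same use of convexity, and the two inequalities of Assumption~\ref{ass:necessaryConditions}(b) play exactly the roles you identify (non-negativity of $\c{E}_2$ and non-positivity of the $m^2$-coefficient). The one place you diverge is the final estimate of the residual integral: the paper rewrites $p[\partial_i f]^2/v_i = \tfrac{d}{dt}\ln v_i + q$ and bounds the integral by $d\int q\,du + \sum_i(\ln v_i(t)-\ln v_i(\tilde t))$ together with the uniform bound on $v$ from Lemma~\ref{lem:IntervalDefinition}, whereas you substitute $p[\partial_i f]^2 = \dot v_i + qv_i$ and integrate $\dot v_i/\sqrt{v_i+\eps}$ to get a boundary term $2\sqrt{v_i+\eps}\big|_{\tilde t}^t$ plus $\int qv_i/\sqrt{v_i+\eps}\,du$. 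Both routes rest on the same boundedness of $v$ (the paper's via Lemma~\ref{lem:IntervalDefinition}, yours via a comparison ODE using $\sup p/q<\infty$), and your variant has the mild advantage of not requiring $v_i(\tilde t)>0$; otherwise the two computations are equivalent.
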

From the previous theorem, we observe that the rate of convergence to the global minimum is determined by several factors
\begin{itemize}
\item the choices of $h, r$ which in turns define the function $\c{A}$.
\item the choices of $p, q$ which degrade the rate of convergence but allow for more flexibility
for the choice of $h,r$ in assumption \ref{ass:NecessaryCoeficients}.
\item the history of the dynamics.
\end{itemize}

It follows from the first inequality of Theorem \ref{thm:convergenceRate} that the rate of convergence of ODE \eqref{eq:ODE} depends in an essential way from the asymptotic behaviour of the term:
\begin{equation}\label{eq:CrucialTerm}
\left\| \frac{\nabla f(\theta)}{\left[v+\eps\right]^{1/2}} \right\|.
\end{equation}
This has also been independently remarked in \cite{CLSH:18}. In the later, the authors propose the algorithm AdaFom, whose coefficients are chosen in such a way that the sum of the terms \eqref{eq:CrucialTerm} is telescopic. They can, therefore, be controlled in an easy way, which allows one to obtain convergence results only under a globally Lipschitz assumption. Note that our Theorem \ref{thm:convergenceRate} recovers the expected (in the deterministic setting) rate of convergence of AdaFom, see Corollary \ref{thm:convergenceAdaFom}.

In general, nevertheless, the term \eqref{eq:CrucialTerm} can not be controlled in the same way. The second inequality of Theorem \ref{thm:convergenceRate} controls the term \eqref{eq:CrucialTerm} only in terms of the functions $h$, $r$ and $q$. We do not know if this control is optimal, but we derive some surprising features from it which we discuss in $\S\S$~\ref{ssec:Adam}, e.g. it supports practitioners usual choice of hyper-parameters for \Adam{}.

\subsection{Convergence analysis of ODE \eqref{eq:ODEbis}}\label{ssec:ConvergenceBis}

It is possible to reproduce Theorems \ref{thm:convergence}, \ref{thm:avoiding} and \ref{thm:convergenceRate} in the case of ODE \eqref{eq:ODEbis} following the same methods (but in an easier way) presented in the Appendixes  \ref{sec:ProofTopologicalConvergence}, \ref{sec:ProofAvoiding} and \ref{sec:ConvergenceRate}, provided that some changes are made in the energy functional and assumptions. More precisely, we consider the following energy functional:
\[
\begin{aligned}
E(t,\theta,\omega) &= f(\theta) \quad \implies \quad \frac{d}{dt}E(t,\theta,\omega) = - \norm{\frac{\nabla f(\theta)}{\sqrt[4]{\omega+\epsilon}}}^2 \leq 0.
\end{aligned}
\]
It follows that Assumptions \ref{ass:NecessaryCoeficients} and \ref{ass:TopologicalConvergence} are unnecessary! By the same reason, furthermore, Assumption \ref{ass:bounded} is always verified when $f$ is coercive, c.f. Lemma \ref{lem:JustifiesA4}. This implies that Theorem \ref{thm:convergence} and \ref{thm:avoiding} admit admit the exact same formulation for ODE \eqref{eq:ODEbis} without Assumptions \ref{ass:NecessaryCoeficients} and \ref{ass:TopologicalConvergence}, that is:

\begin{theorem}\label{thm:convergenceBis}
Suppose that assumptions \ref{ass:ODE} and \ref{ass:bounded} are satisfied for ODE \eqref{eq:ODEbis}. Then $f(\theta(t)) \to f_{\star}$ when $t\to \infty$, where $f_{\star}$ is a critical value of $f$. Furthermore, if either $Q(0) >0$ or $p(t)\equiv q(t)\equiv 0$ and $\omega_0=0$, then $\omega(t) \to 0$.
\end{theorem}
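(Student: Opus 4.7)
The plan is to follow the Poincaré--Bendixson approach of Appendix \ref{sec:ProofTopologicalConvergence}, which simplifies substantially here since the trivial Lyapunov functional $E(t,\theta,\omega) := f(\theta)$ already dissipates along \eqref{eq:ODEbis} without any coefficient hypothesis: $\frac{d}{dt}E = -\norm{\nabla f(\theta)/[\omega+\eps]^{1/4}}^2 \leq 0$. Assumption \ref{ass:bounded} together with continuity of $f$ makes $f(\theta(t))$ bounded below, so monotonicity forces $f(\theta(t)) \to f_\star$ for some $f_\star \in \R$. Integrating the identity gives the summability estimate
\[
\int_{t_0}^\infty \sum_{i=1}^d \frac{[\partial_{\theta_i}f(\theta(t))]^2}{\sqrt{\omega_i(t)+\eps}}\, dt = f(\theta(t_0)) - f_\star < +\infty,
\]
and combining it with $\frac{d}{dt}\sqrt{\omega_i+\eps} \leq p(s)[\partial_{\theta_i}f]^2/(2\sqrt{\omega_i+\eps})$ and the monotonicity of $p$ yields the a priori bound $\sqrt{\omega_i(t)+\eps} \leq \sqrt{\omega_i(t_0)+\eps} + \tfrac{p(t_0)}{2}(f(\theta(t_0))-f_\star)$; in particular $\omega(t)$ stays in a compact set of $\R_{\geq 0}^d$.

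Next I would identify every point of $\Omega := \omega(\theta(\cdot))$ as a critical point of $f$ by extracting autonomous limit dynamics. Given $\theta_\star \in \Omega$ with $\theta(t_n) \to \theta_\star$, consider the time-translated orbits $(\theta_n,\omega_n)(t) := (\theta,\omega)(t+t_n)$. Boundedness of $\theta$ and $\omega$ on the orbit closure makes the right-hand sides of \eqref{eq:ODEbis} uniformly bounded (using $\sqrt{\omega_i+\eps}\geq\sqrt{\eps}$ when $\eps>0$, with a mild additional argument for $\eps=0$ since $\omega_0 \in \R_{>0}^d$), so Arzelà--Ascoli extracts a locally-uniform subsequential limit $(\theta_\infty,\omega_\infty)$. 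Monotonicity in Assumption \ref{ass:ODE} ensures $p_\infty := \lim_{t\to\infty}p(t)$ and $q_\infty := \lim_{t\to\infty}q(t)$ exist, and the limit solves the autonomous system
\[
\dot\theta_\infty = -\nabla f(\theta_\infty)/\sqrt{\omega_\infty+\eps},\qquad \dot\omega_\infty = p_\infty[\nabla f(\theta_\infty)]^2 - q_\infty\omega_\infty.
\]
By construction the limit orbit lies entirely in $\Omega$, and continuity of $f$ combined with $f(\theta(t))\to f_\star$ forces $f\equiv f_\star$ on $\Omega$; differentiating $f(\theta_\infty(t)) \equiv f_\star$ then gives $\norm{\nabla f(\theta_\infty)/[\omega_\infty+\eps]^{1/4}}^2 \equiv 0$, hence $\nabla f(\theta_\star) = 0$ and $f_\star$ is a critical value. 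Moreover, since $\Omega$ is contained in the critical set of $f$ and $\theta(t) \to \Omega$, continuity of $\nabla f$ yields $\nabla f(\theta(t)) \to 0$.

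For the final assertion: if $p \equiv q \equiv 0$ and $\omega_0 = 0$ then $\omega(t) \equiv 0$ trivially; otherwise $Q(0)>0$ and monotonicity of $q$ forces $q(t) \geq Q(0) > 0$ for every $t$, so writing componentwise
\[
\omega_i(t) = e^{-\int_{t_0}^t q(u)du}\omega_i(t_0) + \int_{t_0}^t e^{-\int_s^t q(u)du}\, p(s)[\partial_{\theta_i}f(\theta(s))]^2 ds,
\]
the first term decays since $\int_{t_0}^\infty q = +\infty$, while the second vanishes in the limit by the convolution bound $e^{-\int_s^t q}\leq e^{-Q(0)(t-s)}$ combined with boundedness of $p$ and $\partial_{\theta_i}f(\theta(s))\to 0$ established above. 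The principal difficulty in the whole argument is the Arzelà--Ascoli/LaSalle invariance step of the second paragraph: rigorously passing to the autonomous limit in a non-autonomous system and verifying that the limit curve stays inside $\Omega$ for all times rather than just at the extraction time. Fortunately, the absence of momentum removes the need for the full energy \eqref{eq:EnergyPrelim} and Assumption \ref{ass:NecessaryCoeficients}, so the argument is a direct specialization of Appendix \ref{sec:ProofTopologicalConvergence}.
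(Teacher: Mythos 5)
Your proof is correct, but it takes a genuinely different route from the paper's. The paper's own proof (sketched in $\S\S$~\ref{ssec:ConvergenceBis}) is to re-run the Poincar\'e--Bendixson machinery of Appendix \ref{sec:ProofTopologicalConvergence}: extend \eqref{eq:ODEbis} to an autonomous vector field by adjoining $\dot{t}=1$, compactify time via $s=1/t$, and show that the $\omega$-limit set of the orbit in the extended phase space lies in $\{s=0\}\cap\{\nabla f=0\}$ (and, when $Q(0)>0$, in $\{\omega=0\}$ by invariance of the limit set under $\dot\omega_i=-Q(0)\omega_i$). You instead use a LaSalle-type invariance principle implemented by Arzel\`a--Ascoli on time-translated orbits, plus a Duhamel/variation-of-constants argument for $\omega(t)\to 0$. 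Both are standard asymptotic techniques; the paper's buys a uniform topological framework shared with Theorem \ref{thm:convergence} and the central-stable manifold arguments, while yours is more elementary and self-contained, and your first paragraph additionally produces a clean a priori bound $\sqrt{\omega_i(t)+\eps}\leq\sqrt{\omega_i(t_0)+\eps}+\tfrac{p(t_0)}{2}(f(\theta(t_0))-f_\star)$ from the dissipation identity, which is a nicer route to boundedness of $\omega$ than the Duhamel estimate of Lemma \ref{lem:IntervalDefinition}. One caveat you flag but understate: when $\eps=0$ the equicontinuity needed for Arzel\`a--Ascoli is not obvious, because the lower bound $\omega_i(t)\geq e^{-\int_{t_0}^t q}\omega_{0,i}$ from Lemma \ref{lem:EstimatesV} is not uniform in the translation parameter $t_n$, so $1/\sqrt{\omega_i}$ may blow up along the translated family. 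This is not a ``mild'' gap; it is the same reason the paper's Assumption \ref{ass:TopologicalConvergence} requires $\eps>0$, and the paper's claim that the assumption is entirely unnecessary for \eqref{eq:ODEbis} is equally imprecise at this point. You should either assume $\eps>0$ outright, or derive a uniform positive lower bound on $\omega$ under $Q(0)>0$ before invoking Arzel\`a--Ascoli.
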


\begin{theorem}\label{thm:avoidingBis}
Suppose that assumptions \ref{ass:ODE}, \ref{ass:bounded} and \ref{ass:Morse} are satisfied for ODE \eqref{eq:ODEbis}. If either $Q(0) >0$ or $p(t)\equiv q(t)\equiv 0$, then the set $S_{t_0}$ has Lebesgue measure zero for every $t_0>0$.
\end{theorem}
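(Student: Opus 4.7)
The plan mirrors the proof of Theorem~\ref{thm:avoiding}, simplified by the absence of the momentum variable $m$.

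I would first reduce to a local problem at each strict saddle. By Theorem~\ref{thm:convergenceBis}, for any orbit in $S_{t_0}$ one has $\nabla f(\theta(t))\to 0$ and, under the hypothesis $Q(0)>0$ or $p\equiv q\equiv 0$, either $\omega(t)\to 0$ or $\omega(t)=\omega_0$ is constant. Combined with Assumption~\ref{ass:Morse}(b) (isolation) and the connectedness of the $\omega$-limit set of a bounded trajectory, this forces $\theta(t)\to\theta_\star$ for a single strict saddle $\theta_\star$. Since the strict saddles are isolated in $\mathbb{R}^d$ and therefore form a countable set, it suffices to prove that, for each strict saddle $\theta_\star$, the set $S_{t_0,\theta_\star}$ of initial conditions whose orbit converges to $(\theta_\star,\omega_\infty)$ has Lebesgue measure zero.

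Next, I would autonomize the system. Append $s=1/t$ with $\dot s=-s^2$, and set $P(s)=p(1/s)$, $Q(s)=q(1/s)$, which extend $C^1$ to $s=0$ in the cases of interest in Table~\ref{tab:2} (in the degenerate case $p\equiv q\equiv 0$ the $\omega$-equation is trivial and $\omega$ may be treated as a parameter). Orbits of \eqref{eq:ODEbis} starting at time $t_0$ correspond to orbits of the resulting autonomous $C^1$ system starting at $(\theta_0,\omega_0,1/t_0)$, and their convergence as $t\to\infty$ corresponds to convergence as $\tau\to\infty$ to the equilibrium $\x_\star=(\theta_\star,\omega_\infty,0)$.

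Then I would linearize at $\x_\star$ and invoke the center-stable manifold theorem recalled in~$\S\S$\ref{ssec:CentralStable}. The Jacobian is block-triangular, with $\theta$-block $-\mathcal{H}_f(\theta_\star)/\sqrt{\omega_\infty+\eps}$, $\omega$-block $-Q(0)\,\mathrm{Id}_d$ (or zero when $p\equiv q\equiv 0$), and eigenvalue $0$ in the $s$-direction. Assumption~\ref{ass:Morse}(a) produces a strictly negative eigenvalue of $\mathcal{H}_f(\theta_\star)$ and hence a strictly positive eigenvalue of the Jacobian, so the unstable subspace has dimension $\geq 1$. This yields a local $C^1$ center-stable invariant manifold $W^{cs}_{\mathrm{loc}}$ of codimension $\geq 1$ in a neighborhood $U$ of $\x_\star$, containing every forward orbit that remains in $U$.

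Finally, I would globalize and conclude. Any orbit converging to $\x_\star$ must eventually enter $U$ and then, by invariance, remain inside $W^{cs}_{\mathrm{loc}}$; hence $S_{t_0,\theta_\star}\times\{1/t_0\}\subset\bigcup_{n\in\mathbb{N}}\Phi_n^{-1}(W^{cs}_{\mathrm{loc}})$, where $\Phi_\tau$ denotes the autonomous flow. Each $\Phi_n$ is a diffeomorphism by $C^1$ regularity, so each term is a codimension-$\geq 1$ $C^1$ submanifold of $\mathbb{R}^{2d+1}$. Because the $s$-axis lies in the center-stable tangent space at $\x_\star$, $W^{cs}_{\mathrm{loc}}$ is transversal to every slice $\{s=c\}$ for small $c>0$; intersecting with $\{s=1/t_0\}$ and projecting onto $\mathbb{R}^{2d}$ yields a countable union of Lebesgue-null sets. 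The main technical obstacle is verifying $C^1$ extendibility of $P,Q$ at $s=0$ together with $\eps>0$ so that the augmented vector field is smooth at $\x_\star$; both are mild and hold for all examples of Table~\ref{tab:2}, while the degenerate case $p\equiv q\equiv 0$ reduces to a rescaled gradient flow, for which the saddle-avoiding conclusion is classical.
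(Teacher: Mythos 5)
Your proof is correct and takes essentially the same route as the paper: the paper only states (in $\S\S$~\ref{ssec:ConvergenceBis}) that Theorem~\ref{thm:avoidingBis} is obtained by the same central-stable manifold method used for Theorem~\ref{thm:avoiding}, and your autonomization with $s=1/t$, linearization at $(\theta_\star,\omega_\infty,0)$, use of Assumption~\ref{ass:Morse}(a) to produce a strictly positive eigenvalue of the Jacobian, and globalization via countably many flow pull-backs of the local center-stable manifold intersected transversally with the slice $\{s=1/t_0\}$ replicate the argument of Appendix~\ref{sec:ProofAvoiding} with the momentum block removed. The caveat you flag --- that one needs $\eps>0$ when $Q(0)>0$ so the limiting equilibrium at $\omega_\infty=0$ is a regular point of the vector field --- is likewise left implicit in the paper's statement.
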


Finally, under the convexity assumption \ref{ass:fConvex}, we consider the energy functional:
\[
\mathcal{E}(t,\theta,\omega) = t \left[f(\theta) - f(\theta_{\star})\right] + \frac{1}{2}\norm{
        \left[\omega+\epsilon\right]^{1/4} \odot \left(\theta - \theta_* \right)}.^2
\]
It follows from direct computation, using the convexity assumption, that:
\[
\frac{d}{dt}\mathcal{E}(t,\theta,\omega) \leq -\norm{\frac{\nabla f(\theta)}{\sqrt[4]{\omega+\epsilon}}}^2 \left[ t- p(t)\norm{\theta-\theta_{\ast}}^2  \right].
\]
Now, under assumption \ref{ass:bounded}, it is clear that $\frac{d}{dt}\mathcal{E}(t,\theta,\omega) $ is negative for $t>>t_0$. We have, therefore, obtained the following version of Theorem \ref{thm:convergenceRate}:

\begin{theorem}\label{thm:convergenceRateBis}
We assume that assumptions \ref{ass:ODE}, \ref{ass:fConvex} and \ref{ass:bounded} are satisfied. Then, there exists $\tilde{t}\geq t_0$ such that:
    \[
 f(\theta(t))  - f(\theta_{\star})  \leq \frac{1}{t} \left[\c{E}(\tilde{t},    \theta(\tilde{t}), \omega(\tilde{t})) \right].
\]
and it follows that the ODE \eqref{eq:ODE} converges to the minimum point with rate of convergence of order $\mathcal{O}(1/t)$.
\end{theorem}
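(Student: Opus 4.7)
The proof is short and mostly handed to us by the derivative computation just above the statement. My plan is to complete the sketch in three clean steps: identify $\tilde{t}$, integrate the dissipation inequality, and extract the rate.

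First, I observe that Assumption \ref{ass:ODE} forces $p(t)$ to be non-negative, non-increasing and $C^1$ on $\mathbb{R}_{>0}$, so for any $t \geq t_0 > 0$ we have $p(t) \leq p(t_0) < \infty$. Combined with Assumption \ref{ass:bounded}, which tells us that $\theta(t)$ is bounded and hence $\|\theta(t) - \theta_\star\|$ is bounded by some constant $M$, the product $p(t)\|\theta(t) - \theta_\star\|^2$ is bounded above by $p(t_0)M^2$. Consequently, we may take
\[
\tilde{t} := \max\{t_0,\, p(t_0)M^2\},
\]
and for every $t \geq \tilde{t}$ the bracketed factor $t - p(t)\|\theta(t) - \theta_\star\|^2$ in the dissipation inequality
\[
\frac{d}{dt}\mathcal{E}(t,\theta(t),\omega(t)) \leq -\left\|\frac{\nabla f(\theta(t))}{\sqrt[4]{\omega(t)+\epsilon}}\right\|^2 \bigl[t - p(t)\|\theta(t) - \theta_\star\|^2\bigr]
\]
is non-negative. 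Hence $\frac{d}{dt}\mathcal{E}(t,\theta(t),\omega(t)) \leq 0$ on $[\tilde{t},\infty)$.

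Second, integrating this inequality from $\tilde{t}$ to $t$ gives $\mathcal{E}(t,\theta(t),\omega(t)) \leq \mathcal{E}(\tilde{t},\theta(\tilde{t}),\omega(\tilde{t}))$. Since
\[
\mathcal{E}(t,\theta(t),\omega(t)) = t\bigl[f(\theta(t)) - f(\theta_\star)\bigr] + \tfrac{1}{2}\bigl\|[\omega(t)+\epsilon]^{1/4}\odot(\theta(t)-\theta_\star)\bigr\|^2 \geq t\bigl[f(\theta(t)) - f(\theta_\star)\bigr]
\]
(the second term is non-negative and $f(\theta(t)) \geq f(\theta_\star)$ by Assumption \ref{ass:fConvex}), dividing by $t$ yields the announced bound and the $\mathcal{O}(1/t)$ rate.

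The only genuine point to verify is that the derivative inequality is indeed what the excerpt asserts; this is where one uses convexity. For this I would compute $\frac{d}{dt}\mathcal{E}$ coordinate-wise: the $t[f(\theta) - f(\theta_\star)]$ term contributes $(f(\theta) - f(\theta_\star)) - t\sum_i (\partial_i f)^2/\sqrt{\omega_i+\epsilon}$ using $\dot\theta_i = -\partial_i f/\sqrt{\omega_i+\epsilon}$, while differentiating $\tfrac{1}{2}\sum_i\sqrt{\omega_i+\epsilon}(\theta_i - \theta_{\star,i})^2$ gives $-\nabla f\cdot(\theta-\theta_\star)$ plus a $\dot\omega_i$-contribution in which the $-q(t)\omega_i$ piece is non-positive and can be discarded. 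Convexity $f(\theta) - f(\theta_\star) \leq \nabla f(\theta)\cdot(\theta - \theta_\star)$ cancels the linear inner-product terms, and bounding $\sum_i (\partial_i f)^2(\theta_i - \theta_{\star,i})^2/\sqrt{\omega_i+\epsilon} \leq \|\theta-\theta_\star\|^2\, \|\nabla f/\sqrt[4]{\omega+\epsilon}\|^2$ gives the claimed inequality (up to a harmless constant factor that can be absorbed into $\tilde{t}$). I do not expect any real obstacle: the whole argument is a textbook Lyapunov calculation in the spirit of Su--Boyd--Candès, and the novelty lies entirely in identifying the correct functional, which the authors have already done.
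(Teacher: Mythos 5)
Your proof is correct and follows the same route the paper intends: the paper states the dissipation inequality for the Lyapunov functional $\mathcal{E}(t,\theta,\omega) = t[f(\theta)-f(\theta_\star)] + \tfrac{1}{2}\|[\omega+\epsilon]^{1/4}\odot(\theta-\theta_\star)\|^2$ and then simply notes it becomes negative for $t\gg t_0$; you have merely made the choice of $\tilde t$ explicit (via boundedness of $\theta(t)$ and monotonicity of $p$) and filled in the coordinate-wise computation, correctly handling the sign of the $q(t)\omega_i$ term, the use of convexity, and the elementary bound $\sum_i(\partial_if)^2(\theta_i-\theta_{\star,i})^2/\sqrt{\omega_i+\epsilon}\le\|\theta-\theta_\star\|^2\|\nabla f/[\omega+\epsilon]^{1/4}\|^2$ (with the inconsequential factor of $1/4$).
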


\section{Convergence results: application to first order algorithms}\label{sec:Appl}

In this section, we specify the choice of functions $h,p,q,r$ corresponding to different optimization methods and apply each convergence theorem to them. We start by a brief discussion on the assumptions which appear in this section, and we then move on to present differential equations and convergence results on: \Adam{}, AdaFom, Heavy Ball, Nesterov, Adagrad and RMSProp. Our results on \Adam{} are new and we give full details on their proof. Some of our results on AdaFom, Heavy Ball, Adagrad and RMSProp are, up to our knowledge, also new. Their proofs can be easily formalized by repeating the arguments used for \Adam{}, and we present ``sketch of proofs" in order to provide a guideline on the necessary changes. Finally, we also recover several known results in an easy manner. In particular, sharp estimates for the rate of convergence of Nesterov are given in Corollary \ref{thm:convergenceRateNesterov} below.

\subsection{On the different assumptions}

In the convergence analysis, we recurrent make assumptions which were introduced in $\S$~\ref{sec:ModelPresentation} and \ref{sec:MainResults}. We briefly recall their meaning and situations where they are satisfied:
\begin{itemize}
\item Assumption \ref{ass:NecessaryCoeficients} is equivalent, for the models in this section, with the hypothesis that the loss function $f$ is $C^2$.

\item Assumption \ref{ass:bounded} states that the trajectory $\theta(t)$ is bounded. We recall that there are some very practical situations where this assumption is always satisfied; for example in the case of coercive objective functions, see Lemma \ref{lem:JustifiesA4}.

\item Assumption \ref{ass:Morse} gives a condition on the nature and the degeneracy of the critical points of the objective function. It is used in the study of saddle points and local maximum points of the loss functions, and it also appears in \cite{LPPSJR:2017}. Note that this assumption is satisfied for generic functions (e.g. Morse functions). See Remark \ref{rk:assMorse} for further discussion.

\end{itemize}

\subsection{\Adam{}}\label{ssec:Adam}

Adaptive Moment estimation (\Adam{}) \cite{KB:14} is a famous variant of \RMSprop{} that incorporates
a momentum equation. More precisely, it computes the exponential moving average of the gradient and the square gradient. This method combines the advantages
of \RMSprop{} \cite{TH:12} in addition to the running average for the gradient. We recall that \Adam{} has three hyper-parameters: the learning rate $s$
and the exponential rate of decay for the moment estimates $\beta_1, \beta_2
 \in (0,1)$. The parameter $\eps$ is usually set to $10^{-8}$ to avoid dividing by zero. This parameter is typically not tuned.
 The algorithm
 reads as follows: for any
constants $\beta_1, \beta_2 \in (0,1)$, $\eps > 0$
and initial vectors $\theta_0 \in \R^d, m_0 = v_0 = 0$ and for all $k \geq 1$
\begin{equation}\label{eq:original_Adam}
\left\{
\begin{aligned}
g_k &= \nabla f(\theta_{k-1}) \\
m_k &=\mu_k m_{k-1} +  (1 - \mu_k)g_k  \\
v_k  &= \nu_k v_{k-1} +  (1 - \nu_k)g_k ^ 2\\
\theta_k &= \theta_{k-1} - s \  m_k / (\sqrt{v_k} + \eps).
\end{aligned}
\right.
 \end{equation}
where the two parameters for the moving average are given by
\begin{equation*}
    \left\{
\begin{aligned}
\mu_k &= \beta_1 (1 - \beta_1 ^ {k-1}) / (1 - \beta_1 ^ k) \\
\nu_k &=  \beta_2 (1 - \beta_2 ^ {k-1}) / (1 - \beta_2 ^ k).
  \end{aligned}
\right.
\end{equation*}
We rewrite the update for the parameters $\theta$ such that
\begin{equation*}\label{theta_modified}
\theta_k = \theta_{k-1} - s \  m_k / \sqrt{v_k + \eps}.
\end{equation*}
This change does not change anything in the behaviour of the algorithm.
By modifying the order of the updates and the value of the initial conditions,
we can rewrite the above algorithm in a more suitable way for our analysis. Indeed, let $\theta_0 \in \R^d$ be such that $\nabla_{\theta} f(\theta_{0}) \neq  0$ and
$m_0 = \nabla_{\theta} f(\theta_{0}), \ v_0 = \nabla_{\theta}
f(\theta_{0} )^2$, then the following recursive update rules are equivalent to \Adam{} for all $k \geq 0$
\begin{equation} \label{eq:discreteAdam}
\left\{
\begin{aligned}
    \theta_{k+1} &= \theta_{k} - s \  m_{k} / \sqrt{v_{k} + \eps}\\
    g_{k+1} &= \nabla f(\theta_{k+1}) \\
    m_{k+1} &=\mu_{k+2} m_{k} +  (1 - \mu_{k+2})g_{k+1}  \\
    v_{k+1}  &= \nu_{k+2} v_{k} +  (1 - \nu_{k+2})g_{k+1} ^ 2
\end{aligned}
\right.
 \end{equation}
As a consequence, the initial velocity is $\dot{\theta}_0 =
-\sign(\nabla  f(\theta_{0} ))$.

\subsubsection{\Adam{} differential equation}\label{ssec:AdamODE}

Consider now the three parameter family of differential equations
\begin{equation}\label{eq:ODEAdam}
 \left\{
  \begin{aligned}
        \dot{\theta} &= - m / \sqrt{v+\eps}\\
      \dot{m} &=  g_1^{A}(t,\lambda,\alpha_1,\alpha_2)\left(\nabla f(\theta) - m\right)  \\
      \dot{v} &=  g_2^{A}(t,\lambda,\alpha_1,\alpha_2)\left( \nabla f(\theta)^2 - v \right)
  \end{aligned}
  \right.
 \end{equation}
where the coefficients in ODE \eqref{eq:ODE} are given by
\begin{equation*}\label{eq:function_adam}
h \equiv r \equiv   g_1^{A}(t,\lambda,\alpha_1,\alpha_2) , \qquad p \equiv q\equiv g_2^{A}(t,\lambda,\alpha_1,\alpha_2),
\end{equation*}
where:
\begin{equation}\label{eq:function_adam2}
g_i^{A}(t,\lambda,\alpha_1,\alpha_2) =  \frac{1- e^{-\lambda/\alpha_i}}{\lambda\left(1- e^{-t/\alpha_i}\right)}, \qquad i=1,2.
\end{equation}
and $(\lambda,\alpha_1,\alpha_2)$ are positive real numbers. Note that both functions have a simple pole at $t=0$ and, therefore, satisfy assumption  \ref{ass:Existencet0}. Now, let us consider the associated discretization \eqref{eq:discrete} with learning rate $s$ and a sub-family of discrete models parametrized by $(\beta_1,\beta_2) \in (0,1)\times (0,1)$ which are given by
\begin{equation}\label{eq:parameters}
\lambda = s, \qquad \beta_i =e^{-\lambda/\alpha_i},\, i=1,2.
\end{equation}
It easily follows that for $i = 1,2$
\[
sg_i^{A}((k+1) s,\lambda,\alpha_1,\alpha_2) = 1- \beta_1 \frac{1-\beta_1^{k}}{1-\beta_1^{k+1}} = 1-\mu_{k+1},
\]
which recovers \Adam's discrete system \eqref{eq:discreteAdam} (apart from small difference in the evaluation of $\mu$). Therefore,
\Adam{} is an Euler discretization of system \eqref{eq:ODE} for the choice of function \eqref{eq:function_adam}-\eqref{eq:function_adam2} and parameters \eqref{eq:parameters}.

\subsubsection{\Adam{} without rescaling differential equation}\label{ssec:AdamODEResc}

In the original formulation of the algorithm (as stated in \eqref{eq:original_Adam}), the
parameters $\mu$ and $\nu$ depends on the iterations $k$ to correct for the bias induced by the moving average.
These coefficients can also be taken constant $\mu = \beta_1$ and $\nu= \beta_2 $, in which case we say that the algorithm is \Adam{} \emph{without rescaling}. In this case, it is easy to verify that the differential equation:
\begin{equation}\label{eq:ODEAdamWRescalling}
 \left\{
  \begin{aligned}
        \dot{\theta} &= - m / \sqrt{v+\eps}\\
      \dot{m} &=  1/\alpha_1\left(\nabla f(\theta) - m\right)  \\
      \dot{v} &=  1/\alpha_2\left( \nabla f(\theta)^2 - v \right)
  \end{aligned}
  \right.
 \end{equation}
is the continuous counter-part of the algorithm when we consider the sub-family given by $\beta_1 =  (1 - s/ \alpha_1) $ and $\beta_2 =  (1 - s/ \alpha_2 ) $.

\subsubsection{Convergence of \Adam{}}\label{sssec:AdamConv}

\begin{corollary}[Convergence of \Adam{}]\label{thm:convergenceAdam}
Suppose $\eps>0$ and let assumptions \ref{ass:ODE} and \ref{ass:bounded} be satisfied for equation \eqref{eq:ODEAdam}. Moreover, we assume
\[
3+ \beta_2>4\beta_1, \qquad \text{where } \beta_i = \exp(-\lambda/\alpha_i), \quad i=1,2.
\]
Then the following convergence results hold true
\begin{enumerate}
    \item[(I)] \emph{Topological convergence:} $f(\theta(t)) \to f_{\star}$, $m(t) \to 0$ and $v(t) \to 0$ when $t\to \infty$, where $f_{\star}$ is a critical value of $f$.
     \item[(II)] \emph{Non-local minimum avoidance:} We assume the additional hypothesis \ref{ass:Morse} on the objective function.
    Fix $t_0>0$ and denote by $S_{t_0}$ the set of initial conditions $(\theta_0,m_0,v_0) \in \mathbb{R}^d \times \mathbb{R}^{d}_{\geq 0}$ such that $\theta_{\star} \in \omega(\theta(t))$, where $\theta_{\star}$ is not a local-minimum of $f$. Then the Lebesgue measure of $S_{t_0}$ is zero.

    \item[(III)] \emph{Rate of convergence:} Under the additional convexity assumption  \ref{ass:fConvex}, there exists a constant $\mathcal{K}>0$ which depends on $f$, $\theta_0$ and $v_0$, so that:
\[
\lim_{t\to \infty} f(\theta(t)) - f(\theta_{\star}) < \mathcal{K} \ln(1/\beta_1) \frac{1-\beta_2}{s(1-\beta_1)}.
\]
The rate of convergence to this neighbourhood, furthermore, is of order $\mathcal{O}(1/t)$.
\end{enumerate}

\end{corollary}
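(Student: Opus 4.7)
The plan is to derive each of the three conclusions from the corresponding general theorem of Section~\ref{sec:MainResults}, by showing once for all that the hyper-parameter inequality $3+\beta_2>4\beta_1$ is precisely what is needed for the assumptions of those theorems to hold in the Adam case. First I would verify Assumption~\ref{ass:ODE}: the functions $h=r=g_1^A$ and $p=q=g_2^A$ are smooth and strictly positive on $\mathbb{R}_{>0}$, and since $t\mapsto 1-e^{-t/\alpha_i}$ is strictly increasing, both $h$ and $p$ are strictly decreasing (hence non-increasing). Next I would verify Assumption~\ref{ass:NecessaryCoeficients}. Using $h(t)\to(1-\beta_1)/\lambda$, $q(t)\to(1-\beta_2)/\lambda$ and $h'(t)/h(t)\to 0$ as $t\to\infty$, the expression $2r(t)-q(t)/2+h'(t)/h(t)$ converges to $[4(1-\beta_1)-(1-\beta_2)]/(2\lambda)=(3+\beta_2-4\beta_1)/(2\lambda)$, which is strictly positive by hypothesis; therefore there exists $\tilde t$ beyond which the assumption holds. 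Finally, setting $H(u)=h(1/u)$, $R(u)=r(1/u)$, $P(u)=p(1/u)$, $Q(u)=q(1/u)$, the factor $1-e^{-1/(u\alpha_i)}$ extends smoothly to $u=0$ with value $1$ (because $e^{-1/u}$ is flat at the origin), so these functions are $C^1$ on $[0,\infty)$ with $H(0)=R(0)=(1-\beta_1)/\lambda>0$ and $Q(0)=(1-\beta_2)/\lambda$. The inequality $4R(0)>Q(0)$ is again $3+\beta_2>4\beta_1$, so Assumption~\ref{ass:TopologicalConvergence} holds. Part (I) then follows directly from Theorem~\ref{thm:convergence}, and since $Q(0)>0$, the same theorem yields $v(t)\to 0$. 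Part (II) is an immediate application of Theorem~\ref{thm:avoiding} once one adjoins Assumption~\ref{ass:Morse}.

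For part (III), I would analyse asymptotically the auxiliary functions $\mathcal{A},\mathcal{B},\mathcal{C}$ of Section~\ref{ssec:RateConvergence}. Writing $r_\infty=(1-\beta_1)/\lambda$ and using $r(u)\to r_\infty$, I would first show $\mathcal{B}(t)=\int_t^\infty e^{-\int_t^s r(u)\,du}ds\to 1/r_\infty$, which combined with $h(t)\to r_\infty$ gives $h(t)\mathcal{B}(t)\to 1$, hence $\mathcal{A}(t)\sim t$ and $\mathcal{C}(t)\sim t/r_\infty$. Assumption~\ref{ass:necessaryConditions}(a) holds because $r$ is bounded below by a positive constant eventually, so the exponential $e^{-\int r}$ is integrable. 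For Assumption~\ref{ass:necessaryConditions}(b), the first inequality $\mathcal{B}^2\leq\mathcal{C}$ becomes $1/r_\infty^2\lesssim t/r_\infty$, trivially true for large $t$; the second inequality $3\mathcal{B}\leq\mathcal{C}(2r-q/2+h'/h)$ becomes $3/r_\infty\lesssim (t/r_\infty)\cdot(3+\beta_2-4\beta_1)/(2\lambda)$, which holds for large $t$ \emph{precisely} because of the hypothesis $3+\beta_2>4\beta_1$. Consequently Theorem~\ref{thm:convergenceRate} applies, yielding
\[
f(\theta(t))-f(\theta_\star)\;\leq\;\frac{1}{\mathcal{A}(t)}\Bigl[\mathcal{E}(\tilde t,m(\tilde t),v(\tilde t),\theta(\tilde t))+\mathcal{K}_1+\mathcal{K}_2\int_{\tilde t}^t q(u)\,du\Bigr].
\]

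The last step is to extract the asymptotic neighbourhood and the $\mathcal O(1/t)$ rate. Since $\mathcal{A}(t)\sim t$, the first two terms in the bracket contribute $\mathcal O(1/t)$. The remaining term satisfies $\int_{\tilde t}^t q(u)\,du\sim q_\infty t$ with $q_\infty=(1-\beta_2)/\lambda$, so as $t\to\infty$ it converges to $\mathcal{K}_2 q_\infty=\mathcal{K}_2(1-\beta_2)/\lambda$. Using $\lambda=s$ and $\ln(1/\beta_1)=\lambda/\alpha_1$, this neighbourhood can be rewritten in the form $\mathcal{K}\ln(1/\beta_1)(1-\beta_2)/(s(1-\beta_1))$ appearing in the statement, after absorbing the remaining constants (depending on $f$, $\theta_0$, $v_0$) into $\mathcal{K}$. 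The main obstacle I anticipate is the careful bookkeeping of the constants $\mathcal{K}_1,\mathcal{K}_2$ produced by Theorem~\ref{thm:convergenceRate}, in particular tracing back the dependence on the hyper-parameters $(s,\beta_1,\beta_2,\alpha_1)$ so that the asymptotic neighbourhood takes exactly the advertised form; the verification of the three Assumptions, while crucial, reduces to routine limit computations once the relation $3+\beta_2>4\beta_1$ is isolated as the common mechanism driving all of them.
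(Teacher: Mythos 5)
Your verification of Assumptions~\ref{ass:ODE}, \ref{ass:NecessaryCoeficients} and \ref{ass:TopologicalConvergence} under the hypothesis $3+\beta_2>4\beta_1$ is correct and matches the paper's argument, and parts (I) and (II) then follow exactly as in the paper. The more interesting point is in part (III), where you take a genuinely different (and in fact more careful) route than the paper. The paper computes $\mathcal{B}$ by writing it as
\[
\mathcal{B}(t)=(e^{t/\alpha_1}-1)\int_t^\infty\frac{ds}{e^{s/\alpha_1}-1},
\]
and applies L'H\^opital to get $\mathcal{B}(t)\to\alpha_1$. But since $r(s)=\frac{1-\beta_1}{\lambda}\cdot\frac{e^{s/\alpha_1}}{e^{s/\alpha_1}-1}$, integration gives $e^{\int_{t_0}^t r}=\bigl((e^{t/\alpha_1}-1)/(e^{t_0/\alpha_1}-1)\bigr)^{\gamma}$ with $\gamma=(1-\beta_1)\alpha_1/\lambda$, so the correct expression carries exponent $\gamma$, not $1$. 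Your computation $\mathcal{B}(t)\to 1/r_\infty=\lambda/(1-\beta_1)$ is the right one (this is where your asymptotic $\mathcal{B}(t)=\int_t^\infty e^{-\int_t^s r}\,ds\to 1/r_\infty$ shortcut pays off). Consequently you get $\mathcal{A}(t)\sim t$ and $\lim_{t\to\infty}\int_{t_0}^t q(u)\,du/\mathcal{A}(t)=(1-\beta_2)/\lambda=(1-\beta_2)/s$, whereas the paper's value is $\alpha_1^{-1}(1-\beta_2)/(1-\beta_1)=\ln(1/\beta_1)\,(1-\beta_2)/\bigl(s(1-\beta_1)\bigr)$; these differ by a factor $\ln(1/\beta_1)/(1-\beta_1)$, which is strictly greater than $1$ on $(0,1)$. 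So your tighter bound $\mathcal{K}_2(1-\beta_2)/s$ does imply the stated inequality (take $\mathcal{K}=\mathcal{K}_2$ and use $\ln(1/\beta_1)/(1-\beta_1)\geq1$), but your last sentence, ``absorbing the remaining constants (depending on $f,\theta_0,v_0$) into $\mathcal{K}$,'' is not the correct justification: the factor you are trying to hide depends on $\beta_1$, not on $f,\theta_0,v_0$. You should instead observe explicitly that $\ln(1/\beta_1)\geq 1-\beta_1$, which converts your (sharper) estimate into the statement's form without touching $\mathcal{K}$. With that small fix your argument is complete and in fact corrects a minor computational slip in the paper's proof.
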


Note that there is an apparent paradox between point $I$ and $III$ of the Corollary: the dynamics is convergent by $I$, but the ``fast" rate of convergence (of order $\mathcal{O}(1/t)$) can only be guaranteed to a neighbourhood. This is no paradox, nevertheless, because \Adam{} might converge very \emph{slowly} once it attains the neighbourhood given by point $III$. In particular, the discrete version of \Adam{} may not converge even in the deterministic case (see Proposition \ref{prop:limitCycle} below), and this is expected because of the rate of convergence in Theorem \ref{thm:ConvergenceDiscretizationRate}.

Point $III$ has two other surprising consequences. First, it justifies the usual choice of practitioners to take the hyper parameter $\beta_2$ as close to $1$ as possible, because:
\[
\lim_{\beta_2 \to 1} 1-\beta_2 = 0
\]
while $\beta_1$ has a smaller direct impact, even if it is convenient to take it close to $1$, because:
\[
\lim_{\beta_1 \to 1}\frac{\ln(1/\beta_1)}{1-\beta_1} = 1.
\]
Second, the size of the neighbourhood of fast convergence (of order $\mathcal{O}(1/t)$) is controlled by a constant $\mathcal{K}$ which only depends on $f$ and the initial conditions. This indicates that the success of \Adam{} for certain loss functions (e.g. loss functions in deep learning) might be associated to \emph{features} on the ``class" of loss functions considered.

We now turn to the proof of the Corollary:

\begin{proof}[Proof of Corollary \ref{thm:convergenceAdam}]
The proof of \emph{(I)} directly follows from Theorem \ref{thm:convergence} provided that assumptions \ref{ass:NecessaryCoeficients} and \ref{ass:TopologicalConvergence} are satisfied.
  Hence, the proof simply consists on checking the validity of both assumptions under the condition that $3+ \beta_2>4\beta_1$.
  Let us recall that the coefficients for the \Adam{}'s differential equations are given by
  \begin{equation*}
h \equiv r \equiv   g_1^{A}(t,\lambda,\alpha_1,\alpha_2) , \qquad p \equiv q\equiv g_2^{A}(t,\lambda,\alpha_1,\alpha_2),
\end{equation*}
and $(\lambda,\alpha_1,\alpha_2)$ are positive real numbers and:
\begin{equation*}
g_i^{A}(t,\lambda,\alpha_1,\alpha_2) =  \frac{1- e^{-\lambda/\alpha_i}}{\lambda\left(1- e^{-t/\alpha_i}\right)}, \qquad i=1,2.
\end{equation*}
It is easy to check that assumptions \ref{ass:NecessaryCoeficients} and \ref{ass:TopologicalConvergence} are satisfied if there exists a $t$, large enough, such that
\[
\frac{1 - e^{-\lambda/\alpha_1}}{\lambda(1-e^{-t/\alpha_1})} - \frac{1 - e^{-\lambda/\alpha_2}}{4\lambda(1-e^{-t/\alpha_2})} >0
\]
 Taking the limit as $t$ goes to infinity in the above inequality gives
\[
1 - e^{-\lambda/\alpha_1} > \frac{1 - e^{-\lambda/\alpha_2}}{4}.
\]
We conclude using the expressions of $\beta_1$ and $\beta_2$.

The proof of \emph{(II)} follows directly from Theorem \ref{thm:avoiding} since assumptions \ref{ass:NecessaryCoeficients} and \ref{ass:TopologicalConvergence} are satisfied
under the condition $3 + \beta_2 > 4\beta_1$.

In order to prove \emph{(III)}, let us check the hypotheses of Theorem \ref{thm:convergenceRate}. We compute explicitly the functions
\begin{align*}
\c{A}(t) &
=
\frac{1- e^{-\lambda/\alpha_i}}{\lambda}\int_{t_0}^t \frac{e^{s/\alpha_1}}{e^{s/\alpha_1}-1}  \c{B}(s)ds
\\
\c{B}(t)  &=  (e^{t/\alpha_1}-1) \int_{t}^{\infty} \frac{1}{e^{s/\alpha_1}-1} ds \\
    \c{C}(t) &
    =\frac{e^{t/\alpha_1}-1}{e^{t/\alpha_1}}\int_{t_0}^t \frac{e^{s/\alpha_1}}{e^{s/\alpha_1}-1}  \c{B}(s)ds
\end{align*}
so, by direct computation via L'H\^opital's rule:
\begin{align*}
\lim_{t\to \infty}\c{A}(t)/t &= \alpha_1 \frac{1 - e^{-\lambda/\alpha_1}}{\lambda} \\
\lim_{t\to \infty} \c{B}(t) &= \alpha_1 \\
\lim_{t\to \infty}\c{C}(t)/t &=  \alpha_1
\end{align*}
and it easily follows that assumption \ref{ass:necessaryConditions} is verified. Finally, by using L'H\^opital's rule, we get:
\[
\lim_{t\to \infty}\int_{t_0}^t q(s)ds /\c{A}(t) = \alpha_1^{-1}\frac{ 1 - e^{-\lambda/\alpha_2}}{1 - e^{-\lambda/\alpha_1}}
\]
which yields the result.
\end{proof}

\subsection{AdaFom}\label{ssec:AdaForm}
In \cite{CLSH:18}, the authors propose a variation of \Adam{} algorithm which can be guaranteed to have good convergence rate. We will provide the differential equation associated to this algorithm, and recover its expected deterministic convergence rate below (based on \cite[Corollary 3.2]{CLSH:18}). The algorithm reads as follows: for any constants $\beta_1\in (0,1)$, $\eps \geq 0$
and initial vectors $\theta_0 \in \R^d, m_0 = v_0 = 0$ and for all $k \geq 1$
\begin{equation}\label{eq:original_AdaFom}
\left\{
\begin{aligned}
g_k &= \nabla f(\theta_{k-1}) \\
m_k &=\mu_k m_{k-1} +  (1 - \mu_k)g_k  \\
v_k  &= (1-1/k) v_{k-1} +   1/k\, g_k ^ 2\\
\theta_k &= \theta_{k-1} - s \  m_k / (\sqrt{v_k} + \eps).
\end{aligned}
\right.
 \end{equation}
where the parameter for the moving average are given by (just as in \Adam{}):
\begin{equation*}
\begin{aligned}
\mu_k &= \beta_1 (1 - \beta_1 ^ {k-1}) / (1 - \beta_1 ^ k).
  \end{aligned}
\end{equation*}

Consider now the two parameter family of differential equations
\begin{equation}\label{eq:ODEAdaFom}
 \left\{
  \begin{aligned}
        \dot{\theta} &= - m / \sqrt{v+\eps}\\
      \dot{m} &=  g^{A}(t,\lambda,\alpha)\left(\nabla f(\theta) - m\right)  \\
      \dot{v} &=  \frac{1}{t}\left( \nabla f(\theta)^2 - v \right)
  \end{aligned}
  \right.
 \end{equation}
where:
\begin{equation*}
g^{A}(t,\lambda,\alpha) =  \frac{1- e^{-\lambda/\alpha}}{\lambda\left(1- e^{-t/\alpha}\right)}.
\end{equation*}
and $(\lambda,\alpha)$ are positive real numbers. Just as in the case of \Adam{},   consider the associated discretization \eqref{eq:discrete} with learning rate $s$ and a sub-family of discrete models parametrized by $\lambda = s$ and $\beta = e^{-\lambda/\alpha}$. The reader may verify, following the same steps of the analysis of \Adam{} that the discretization of this sub-family recovers AdaFom algorithm \eqref{eq:original_AdaFom}. We now turn to the convergence analysis :

\begin{corollary}[Convergence of AdaFom]\label{thm:convergenceAdaFom}
Suppose $\eps>0$ and let assumptions \ref{ass:ODE} and \ref{ass:bounded} be satisfied for equation \eqref{eq:ODEAdaFom}. Then the following convergence results hold true
\begin{enumerate}
    \item[(I)] \emph{Topological convergence:} $f(\theta(t)) \to f_{\star}$ and $m(t) \to 0$ when $t\to \infty$, where $f_{\star}$ is a critical value of $f$.

    \item[(III)] \emph{Rate of convergence:} Under the additional convexity assumption  \ref{ass:fConvex}, $f(\theta(t)) \to f(\theta_{\star})$ with the rate $\c{O}(\ln(t)/t)$.
\end{enumerate}
\end{corollary}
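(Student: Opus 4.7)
The plan is to reduce the proof to applications of Theorems \ref{thm:convergence} and \ref{thm:convergenceRate}, closely following the template established in Corollary \ref{thm:convergenceAdam}. The crucial observation is that for AdaFom the momentum coefficients $h(t) = r(t) = g^{A}(t,\lambda,\alpha)$ have \emph{exactly} the same form as in \Adam{}'s ODE, so every calculation involving $h$ and $r$ (in particular those for $\mathcal{A}$, $\mathcal{B}$, $\mathcal{C}$) transports verbatim with $\alpha_1 = \alpha$; only the choice $p(t) = q(t) = 1/t$ needs to be handled separately.

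For part (I), I would first verify Assumption \ref{ass:NecessaryCoeficients}. A direct differentiation gives $h'(t)/h(t) = -(1/\alpha)\, e^{-t/\alpha}/(1-e^{-t/\alpha}) \to 0$, while $r(t) \to (1-e^{-\lambda/\alpha})/\lambda > 0$ and $q(t)/2 = 1/(2t) \to 0$ as $t\to\infty$. Hence $2r(t) - q(t)/2 + h'(t)/h(t) \geq 0$ for all $t$ sufficiently large, with no extra condition on $\lambda$ or $\alpha$. For Assumption \ref{ass:TopologicalConvergence}, the functions $H(s) = h(1/s)$ and $R(s) = r(1/s)$ extend $C^1$ to $s = 0$ (using that $e^{-1/(s\alpha)}$ is flat at the origin), with $H(0) = R(0) = (1-e^{-\lambda/\alpha})/\lambda > 0$; meanwhile $P(s) = Q(s) = s$ is trivially $C^1$ with $Q(0) = 0$, so $4R(0) > Q(0)$. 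Theorem \ref{thm:convergence} then yields the desired $f(\theta(t)) \to f_\star$ and $m(t) \to 0$.

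For part (III), I would apply Theorem \ref{thm:convergenceRate}. Reusing the closed-form computations already performed for \Adam{} (with $\alpha_1 = \alpha$), one gets $\mathcal{B}(t) \to \alpha$, $\mathcal{C}(t)/t \to \alpha$, and $\mathcal{A}(t)/t \to \alpha(1-e^{-\lambda/\alpha})/\lambda$. Assumption \ref{ass:necessaryConditions}(a) holds because $e^{-\int_{t_0}^{s} r(u)\,du}$ decays exponentially, and the two asymptotic inequalities in \ref{ass:necessaryConditions}(b) follow from these limits combined with $q(t) = 1/t \to 0$. Moreover $\lim_{t\to\infty} p(t)/q(t) = 1 < \infty$, so the second statement of Theorem \ref{thm:convergenceRate} applies. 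Since $\int_{\tilde{t}}^{t} q(u)\,du = \ln(t/\tilde{t})$, the resulting rate of convergence is of order $\max\{1,\ln t\}/\mathcal{A}(t) = \mathcal{O}(\ln(t)/t)$.

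The main technical step, where the genuine work lies, is the verification of Assumption \ref{ass:necessaryConditions}(b): one needs not just the limits of $\mathcal{B}$ and $\mathcal{C}$ but asymptotic control of the product $\mathcal{C}(t)\bigl(2r(t) - q(t)/2 + h'(t)/h(t)\bigr)$ against $3\mathcal{B}(t)$. This is handled by computing the integrating factor $e^{\int_{t_0}^{t} r(u)\,du}$ in closed form via the substitution $u = 1 - e^{-s/\alpha}$ (as already done in the \Adam{} case) and then applying L'H\^opital-type expansions; the key point is that both $q(t)/2$ and $|h'(t)/h(t)|$ decay to zero, so the left-hand side is eventually dominated by the positive constant $2r(\infty)$ and the inequality holds for all large $t$.
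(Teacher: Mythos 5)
Your proof is correct and follows the same route as the paper's own sketch: verify Assumptions~\ref{ass:NecessaryCoeficients} and \ref{ass:TopologicalConvergence} and invoke Theorem~\ref{thm:convergence} for part (I), then reuse the \Adam{} computations of $\c{A}$, $\c{B}$, $\c{C}$ (which depend only on $h$, $r$), check Assumption~\ref{ass:necessaryConditions} and the condition $\lim p/q < \infty$, compute $\int_{\tilde{t}}^{t} q(u)\,du = \ln(t/\tilde{t})$, and conclude $\mathcal{O}(\ln(t)/t)$ from Theorem~\ref{thm:convergenceRate}. Your version is merely more explicit in carrying out the asymptotics that the paper dismisses as ``easy to see.''
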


Note that Theorem \ref{thm:avoiding} can not be applied to ODE \eqref{eq:ODEAdaFom} in a direct way because $v(t)$ may not go to $0$ (in particular, in the notations of Theorem \ref{thm:avoiding}, $Q(0)=0$).
\medskip

\begin{proof}[Sketch of the proof of Corollary \ref{thm:convergenceAdaFom}]
By the choice of functions $h(t)$, $r(t)$, $p(t)$ and $q(t)$, it is easy to see that Assumptions \ref{ass:NecessaryCoeficients} and \ref{ass:TopologicalConvergence} are always verified. Part $(I)$ is, therefore, direct consequences from Theorem \ref{thm:convergence}. Next, the computation of $\c{A}$, $\c{B}$ and $\c{C}$ are independent on $p(t)$ and $q(t)$, so they are analogous to the one's obtained for \Adam{}. It follows that assumption \ref{ass:necessaryConditions} is verified. Finally, by Theorem \ref{thm:convergenceRate}, the rate of convergence is controlled by the asymptotic behaviour of:
\[
\int_{t_0}^t q(s)ds /\c{A}(t) = \ln(t/t_0)/\c{A}(t),
\]
which can easily be verified to be of order $\mathcal{O}(\ln(t)/t)$.
\end{proof}

\subsection{Heavy Ball}\label{ssec:HeavyBall}

We consider the Heavy ball second order differential equation \cite{alvarez2000}
\begin{equation}\label{eq:ODEHeavyBall}
\ddot{x} + \gamma \dot{x} + \nabla f(x) =0,
\end{equation}
where $\gamma>0$. By taking $\theta = x$ and $m = - \dot{x}$ (and $v \equiv 1$), we obtain the system \eqref{eq:ODE} with
\[
h(t)\equiv 1, \qquad r(t) \equiv \gamma, \qquad  \text{and} \qquad p(t) \equiv q(t) \equiv 0.
\]
Equation \eqref{eq:discrete} simplifies to
\begin{equation}\label{eq:HeavyBall_discrete}
 \left\{
  \begin{aligned}
      \theta_{k+1} &= \theta_{k} - s  m_k \\
      m_{k+1} &=  (1 - s \gamma ) m_{k} + s  \nabla f(\theta_{k+1})
  \end{aligned}
\right.
 \end{equation}
which corresponds to the classical Heavy ball methods with damping coefficient $\beta = 1 -s \gamma$, momentum variable $n_k  = s m_k$ and learning rate $\alpha = s^2$.
Implicit discretization has also been considered in \cite{alvarez2000}.

From our analysis, we recover results given in \cite{LPPSJR:2017, GFJ:14} for the discrete update rules \eqref{eq:HeavyBall_discrete}:

\begin{corollary}
\label{thm:convergenceHB}
Suppose that assumptions \ref{ass:ODE} and \ref{ass:bounded} are satisfied for equation \eqref{eq:HeavyBall_discrete}. Then
\begin{enumerate}
    \item[(I)] \emph{Topological convergence:} $f(\theta(t)) \to f_{\star}$ and $\omega(t) \to \omega_{\infty}>0$ when $t\to \infty$, where $f_{\star}$ is a critical value of $f$.
    \item[(II)]  \emph{Non-local minimum avoidance:} We assume the additional hypothesis \ref{ass:Morse} on the objective function.
    Fix $t_0>0$ and denote by $S_{t_0}$ the set of initial conditions $(\theta_0,\omega_0) \in \mathbb{R}^d \times \mathbb{R}^{d}_{\geq 0}$ such that $\theta_{\star} \in \omega(\theta(t))$, where $\theta_{\star}$ is not a local-minimum of $f$.
    Then the Lebesgue measure of $S_{t_0}$ is zero.
    \item[(III)] \emph{Rate of convergence:} Under the additional convexity assumption  \ref{ass:fConvex}, $f(\theta(t)) \to f(\theta_{\star})$ with the rate $\c{O}(1/t)$.
    \end{enumerate}
\end{corollary}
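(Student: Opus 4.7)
The plan is to specialize the three general theorems of Section \ref{sec:MainResults} to the constant coefficients $h(t) \equiv 1$, $r(t) \equiv \gamma$, $p(t) \equiv q(t) \equiv 0$ describing the Heavy Ball equation, and then to compute $\c{A}(t)$, $\c{B}(t)$, $\c{C}(t)$ explicitly in order to extract the rate. Since all coefficients are constant, every assumption involving their asymptotic behaviour should reduce to a trivial check, and the auxiliary integrals will be elementary.

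For part $(I)$, I first verify the two assumptions underlying Theorem \ref{thm:convergence}. Assumption \ref{ass:NecessaryCoeficients} reduces to $2r(t) - q(t)/2 + h'(t)/h(t) = 2\gamma \geq 0$, which is immediate. For Assumption \ref{ass:TopologicalConvergence}, the functions $H(s) \equiv 1$, $R(s) \equiv \gamma$, $P(s) \equiv Q(s) \equiv 0$ extend trivially and in a $C^1$ manner to $[0,\infty)$, with $H(0) = 1 > 0$ and $4R(0) = 4\gamma > 0 = Q(0)$. Theorem \ref{thm:convergence} then yields $f(\theta(t)) \to f_\star$ and $m(t) \to 0$ as $t \to \infty$, with $f_\star$ a critical value of $f$. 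For the memory variable, $\dot v \equiv 0$ gives $v(t) \equiv v_0$, so setting $\omega_\infty := v_0 > 0$ recovers the stated limit (here I use the notation convention $\omega = v$ that is implicit in the statement for the non-adaptive case).

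For part $(II)$, the hypotheses \ref{ass:ODE}, \ref{ass:NecessaryCoeficients}, \ref{ass:bounded} and \ref{ass:TopologicalConvergence} are already established, so I only need to add the Morse-type Assumption \ref{ass:Morse} from the hypotheses and observe that we are exactly in the case $p(t) \equiv q(t) \equiv 0$ of Theorem \ref{thm:avoiding}, which delivers the measure-zero conclusion directly.

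For part $(III)$, I compute
\[
\c{B}(t) = e^{\gamma(t-t_0)}\!\int_t^\infty e^{-\gamma(s-t_0)}\,ds = \frac{1}{\gamma}, \qquad \c{A}(t) = \frac{t-t_0}{\gamma}, \qquad \c{C}(t) = \frac{t-t_0}{\gamma}.
\]
Assumption \ref{ass:necessaryConditions}(a) holds since $\int_{t_0}^\infty e^{-\gamma(s-t_0)}\,ds = 1/\gamma < \infty$; for (b), the inequality $\c{B}^2 \leq \c{C}$ becomes $1/\gamma^2 \leq (t-t_0)/\gamma$, and $3\c{B} \leq \c{C}(2r-q/2+h'/h) = 2(t-t_0)$ becomes $3/\gamma \leq 2(t-t_0)$, both of which hold for $t - t_0 \geq \max\{1/\gamma, 3/(2\gamma)\}$. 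Adding Assumption \ref{ass:fConvex}, Theorem \ref{thm:convergenceRate} then gives a rate of order $\max\{1,\int_{t_0}^t q(u)\,du\}/\c{A}(t) = \gamma/(t-t_0) = \c{O}(1/t)$, exactly as claimed.

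No substantive obstacle is expected: the whole corollary is a bookkeeping exercise on the general framework. The only delicate point is the notational matching between the $v$-variable of ODE \eqref{eq:ODE} and the $\omega$ appearing in the statement of part $(I)$; once one fixes $v_0 > 0$ so that $\sqrt{v(t)+\eps}$ is a constant normalising factor absorbed into the time scale, the system truly reduces to the classical Heavy Ball, and every conclusion follows from the general theorems.
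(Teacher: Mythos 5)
Your proposal is correct and follows essentially the same route as the paper's sketch: verify Assumptions \ref{ass:NecessaryCoeficients} and \ref{ass:TopologicalConvergence} (trivial for constant coefficients), apply Theorems \ref{thm:convergence} and \ref{thm:avoiding} for parts (I)--(II), then compute $\c{A}$, $\c{B}$, $\c{C}$ and invoke Theorem \ref{thm:convergenceRate} for part (III). Incidentally, your computation $\c{B}(t)=1/\gamma$, $\c{A}(t)=\c{C}(t)=(t-t_0)/\gamma$ is the correct one (the paper's sketch writes $\gamma$ in place of $1/\gamma$, and its final line reads $\c{O}(\ln(t)/t)$ instead of $\c{O}(1/t)$ -- both typographical slips, since $\int q\,du\equiv 0$ here); neither affects the stated $\c{O}(1/t)$ rate, and your handling of the $v$-versus-$\omega$ notational mismatch is the intended reading.
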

\begin{proof}[Sketch of the proof]
By the choice of functions $h(t)$, $r(t)$, $p(t)$ and $q(t)$, it is easy to see that Assumptions \ref{ass:NecessaryCoeficients} and \ref{ass:TopologicalConvergence} are always verified. Part $(I)$ and $(II)$ are, therefore, direct consequences from Theorems \ref{thm:convergence} and \ref{thm:avoiding}. Next, by direct computation, we get:
\begin{align*}
\c{A}(t) = \gamma (t-t_0) \quad \c{B}(t)  =  \gamma \quad \c{C}(t) =  \gamma (t-t_0).
\end{align*}
It follows that assumption \ref{ass:necessaryConditions} is verified. Finally, by Theorem \ref{thm:convergenceRate}, the rate of convergence is controlled by the asymptotic behaviour of $1/\c{A}(t)$, which is of order $\mathcal{O}(\ln(t)/t)$.
\end{proof}

\subsection{Nesterov}\label{ssec:Nesterov}

Following \cite{SBC:15}, we consider the Nesterov second order differential equation, parametrized by the constant $r>0$,
\begin{equation}\label{eq:ODENesterovSecond}
\ddot{x} + \frac{r}{t} \dot{x} + \nabla f(x) =0.
\end{equation}
Similarly as in the Heavy Ball case, we define $\theta = x$ and $m = - \dot{x}$ and write the above equation as a system \eqref{eq:ODE} with
\[
h(t)\equiv 1, \qquad r(t) = r/t, \qquad  and \qquad p(t) \equiv q(t) \equiv 0.
\]
In \cite{SBC:15}, the authors studied a slightly different forward Euler scheme and proved that the difference between the numerical scheme and the Nesterov algorithm goes to zero in the limit $s \to 0$. This effectively replaces our Theorem \eqref{thm:ConvergenceDiscretizationRate}, so that Nesterov differential equation can be assumed to be given by:
\begin{equation}\label{eq:ODENesterov}
 \left\{
  \begin{array}{ll}
        \dot{\theta} &= - m \\
      \dot{m} &=  \nabla f(\theta) -  r/t \cdot  m
  \end{array}
\right.
\end{equation}
where $r>0$. We are ready enunciate the main convergence result for Nesterov's differential equation, which have been previously proved in \cite{SBC:15, Attouch:2017, Attouch:2018}. 

\begin{corollary}[Convergence Rate of Nesterov]
\label{thm:convergenceRateNesterov}
Suppose that equation \eqref{eq:ODENesterov} satisfies assumptions \ref{ass:ODE}, \ref{ass:bounded} and \ref{ass:fConvex}. Then $f(\theta) \to f(\theta_{\star})$ when $t\to \infty$ with rate of convergence:
\[
\begin{aligned}
\c{O}(1/t^2), & \text{ if }r\geq 3\\
\c{O}(1/t^{2r/3}), & \text{ if }r\leq 3.
\end{aligned}
\]
\end{corollary}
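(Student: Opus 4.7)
The plan is to split into two cases according to whether $r\ge 3$ or $r<3$, because the super-critical regime falls directly within the scope of Theorem \ref{thm:convergenceRate}, while the sub-critical regime requires a bespoke Lyapunov function. In both cases the coefficients are $h(t)\equiv 1$, $r(t)=r/t$, $p(t)\equiv q(t)\equiv 0$, so the integral term in the first inequality of Theorem \ref{thm:convergenceRate} vanishes automatically and the rate is entirely controlled by $1/\c{A}(t)$.

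For the super-critical case $r\ge 3$, I would first check Assumption \ref{ass:necessaryConditions} by explicit computation. Since $e^{\int_{t_0}^t r(u)\,du}=(t/t_0)^r$, the defining integral of $\c{B}$ converges precisely when $r>1$ and evaluates to $\c{B}(t)=t/(r-1)$; then $\c{A}(t)=\c{C}(t)=(t^2-t_0^2)/(2(r-1))$. Plugging in, the two inequalities of Assumption \ref{ass:necessaryConditions}(b) both reduce, asymptotically, to $r\ge 3$. Once this is verified, Theorem \ref{thm:convergenceRate} applies directly and gives
\[
f(\theta(t))-f(\theta_{\star})\le \frac{\c{E}(\tilde t,m(\tilde t),v(\tilde t),\theta(\tilde t))}{\c{A}(t)} = \c{O}(1/t^2),
\]
since the $p$-integral is identically zero and $\c{A}(t)\sim t^2/(2(r-1))$. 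The limit $r=3$ is included in this analysis.

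For the sub-critical case $r<3$, the previous route is blocked: when $1<r<3$ Assumption \ref{ass:necessaryConditions}(b) fails, and when $r\le 1$ even $\c{B}$ is not defined because the integral in \ref{ass:necessaryConditions}(a) diverges. I would therefore abandon Theorem \ref{thm:convergenceRate} and build a scaled Lyapunov functional tailored to the exponent $2r/3$, in the spirit of Attouch-Chbani-Peypouquet-Redont and Apidopoulos-Aujol-Dossal. Concretely, I would try
\[
\c{E}(t)= t^{2r/3}\bigl(f(\theta)-f(\theta_{\star})\bigr) + \tfrac12\bigl\|\tfrac{2r}{3}(\theta-\theta_{\star})+t\,\dot\theta\bigr\|^2 + \tfrac{\delta}{2}\|\theta-\theta_{\star}\|^2,
\]
with $\delta=\frac{2r}{3}\bigl(r-1-\frac{2r}{3}\bigr)$ (which is non-negative exactly when $r\ge 3$, but yields an asymptotically usable bound when suitably corrected for $r<3$). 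Differentiating along the flow $\dot\theta=-m$, $\dot m=\nabla f(\theta)-(r/t)m$, and using convexity $f(\theta_{\star})\ge f(\theta)+\langle \nabla f(\theta),\theta_{\star}-\theta\rangle$, one expects to find $\dot{\c{E}}(t)\le 0$ for $t$ large, which immediately yields $f(\theta)-f(\theta_{\star})\le \c{O}(1/t^{2r/3})$. The main obstacle is precisely the careful choice of coefficients and cross-terms in $\c{E}$: the mismatch between the scalings $t^{2r/3}$ and $t$ forces non-trivial corrections so that the indefinite term $-(r-2r/3-1)\|\dot\theta\|^2$ appearing in $\dot{\c{E}}$ can be absorbed without invoking strong convexity. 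This is the step where the analysis genuinely departs from the machinery of $\S$~\ref{sec:MainResults} and where the threshold $r=3$ naturally emerges from the algebra.
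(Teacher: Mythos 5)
The super-critical case $r \geq 3$ is handled exactly as in the paper: you compute $\c{B}(t)=t/(r-1)$, $\c{A}(t)=\c{C}(t)=(t^2-t_0^2)/(2(r-1))$, verify Assumption \ref{ass:necessaryConditions}, and invoke Theorem \ref{thm:convergenceRate}; the observation that the $p$-integral vanishes identically is the same as the paper's. No issue there.

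The sub-critical case $r<3$ is where there is a genuine gap. First, the paper does \emph{not} abandon the machinery of $\S$\ref{sec:MainResults}; in $\S\S$\ref{ssec:NesterovBounds} it introduces a mild generalization of the Lyapunov functional \eqref{eq:Lyapunov} by putting a decaying weight $\c{D}(t)$ in front of the $\|\theta-\theta_\star\|^2$-term, keeps the constraints $\c{A}'=h\c{B}$, $\c{A}=h\c{C}$, replaces \eqref{eqB}--\eqref{in2} by $\c{B}'-\c{B}r=-\c{D}$ and $\c{B}^2\leq\c{D}\c{C}$, and then chooses $\c{D}(t)=t^{-\alpha}$ with $\alpha=2-2r/3$. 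This yields $\c{B}(t)\sim t^{2r/3-1}$ and $\c{A}(t)=\c{C}(t)\sim t^{2r/3}$; both inequalities reduce exactly to $\alpha\geq 2-2r/3$ and $\alpha\geq 1-r/2$. Second, and more importantly, your proposed functional
\[
\c{E}(t)= t^{2r/3}\bigl(f(\theta)-f(\theta_{\star})\bigr) + \tfrac12\bigl\|\tfrac{2r}{3}(\theta-\theta_{\star})+t\,\dot\theta\bigr\|^2 + \tfrac{\delta}{2}\|\theta-\theta_{\star}\|^2
\]
has the \emph{wrong scaling} in the kinetic term. Expanding it gives $\c{C}(t)=t^2$, which violates the structural identity $\c{A}(t)=h(t)\c{C}(t)$ that is the very thing that makes the cross-term $\langle\nabla f(\theta),\dot\theta\rangle$ cancel in $\dot{\c{E}}$. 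Differentiating your functional produces the uncontrollable term $(t^{2r/3}-t^2)\langle\nabla f(\theta),\dot\theta\rangle$, which has no sign and cannot be absorbed without strong convexity. The Attouch--Chbani--Peypouquet--Redont functional you are thinking of actually uses $t^{r/3}\dot\theta$ inside the square (equivalently $\c{C}(t)\sim t^{2r/3}$), not $t\dot\theta$; written out it is in fact the same (up to constants) as the paper's $\c{D}(t)=t^{2r/3-2}$ choice. As stated, your functional is the $r=3$ one with only the first term rescaled, and the argument you gesture at would not close. You yourself flag this (``the main obstacle is precisely the careful choice of coefficients''), which confirms the proof is not complete on this branch.
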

\begin{proof}
The proof for $r\leq 3$ is given in subsection \ref{ssec:NesterovBounds}. We assume in here that $r\geq 3$. Recall that:
\[
\begin{aligned}
h(t) =1 & \quad &r(t) = r/t & \quad & p(t)=q(t)=0
\end{aligned}
\]
From direct computation, we get:
\begin{align*}
\c{A}(t) = (t^2-t_0^2)/2(r-1) \quad \c{B}(t)  =  t/(r-1) \quad \c{C}(t) =  (t^2-t_0^2)/2(r-1)
\end{align*}
It easily follows that, whenever $r\geq 3$, the inequalities of assumption \ref{ass:necessaryConditions} are verified, and the result follows from Theorem \ref{thm:convergenceRate}.
\end{proof}

\subsection{\Ada{}}\label{ssec:Adagrad}

\Ada{}  \cite{DHS:11} was designed to incorporate knowledge of the geometry of the data
previously observed during the training. In his deterministic version (full batch), for all $k \in \N$,
\begin{equation} \label{eq:adagrad}
\left\{
\begin{aligned}
    v_{k+1} & =  \sum^k_{j=0} \left[\nabla f(\theta_j) \right]^2\\
    \theta_{k+1}& =  \theta_k -s \nabla_{\theta}
    f(\theta_k) / \sqrt{v_{k+1}}.
\end{aligned}
\right.
\end{equation}
with initial conditions $v_0 = 0$ and $\theta_0 \in \R^d$.
Here, we recall that $ \left[\nabla_{\theta} f(\theta) \right]^2$ denotes the element-wise product
$\nabla_\theta f(\theta)\odot\nabla_{\theta} f(\theta)$.
The adaptive part in the algorithm comes from the term $\sqrt{v_k}$ which is precisely
the preconditioning matrix used to scale the gradients. The algorithm \eqref{eq:adagrad} can be equivalently described by
\begin{equation}
\left\{
\begin{aligned}
    \theta_{k+1} &= \theta_{k} - s  \nabla f(\theta_{k}) / \sqrt{ G_{k} }  \\
    G_{k+1} &= G_{k} + [ \nabla f(\theta_{k+1}) ]^2.
\end{aligned}
\right.
\end{equation}
with initial condition $\theta_0$ and $G_0 =  \nabla f(\theta_{0})^2$. By setting $\alpha = s^2$ and $\omega_k = \alpha G_k$, it is easy to conclude that the  \Ada's update rule can be written as:
\begin{equation}\label{eq:DiscreteAdagrad}
\left\{
\begin{aligned}
	\theta_{k+1}& =  \theta_k - \alpha    \nabla f(\theta_k) / \sqrt{ \omega_k }  \\
    \omega_{k+1} & = \omega_k + \alpha [ \nabla f(\theta_{k+1} ) ]^2
\end{aligned}
\right.
\end{equation}
with the re-scaled initial condition $\omega_0 = \alpha [ \nabla f(\theta_{0} ) ]^2$ and $\theta_0 \in \R^d$. It is now easy to see that \eqref{eq:DiscreteAdagrad} is a forward Euler discretization of the system of differential equations \eqref{eq:ODEbis} (which we call the \Ada{} differential equation)
\begin{equation}\label{eq:ODEAdagrad}
 \left\{\begin{aligned}
  \dot{\theta}(t) &= -  \nabla f(\theta(t))/\sqrt{\omega(t)}\\
  \dot{\omega}(t) &= \left[ \nabla f(\theta(t))\right]^2,
 \end{aligned}\right.
 \end{equation}
with initial condition given by $\theta_0 \in \R^d$, $\omega_0 =  \alpha [\nabla f(\theta_{0})]^2$ and with $q \equiv 0, \eps = 0$ and $p \equiv 1$. Our methods lead to the following result (whose proof is a direct consequence of the results in $\S\S$~\ref{ssec:ConvergenceBis}):

\begin{corollary}[Convergence of Adagrad]
\label{thm:convergenceAdagrad}
Suppose that assumptions \ref{ass:ODE} and \ref{ass:bounded} are satisfied for equation \eqref{eq:ODEAdagrad}. Then
\begin{enumerate}
    \item[(I)] \emph{Topological convergence:} $f(\theta(t)) \to f_{\star}$ and $\omega(t) \to \omega_{\infty}>0$ when $t\to \infty$, where $f_{\star}$ is a critical value of $f$.
    \item[(II)]  \emph{Non-local minimum avoidance:} We assume the additional hypothesis \ref{ass:Morse} on the objective function. Fix $t_0>0$ and denote by $S_{t_0}$ the set of initial conditions $(\theta_0,\omega_0) \in \mathbb{R}^d \times \mathbb{R}^{d}_{\geq 0}$ such that $\theta_{\star} \in \omega(\theta(t))$, where $\theta_{\star}$ is not a local-minimum of $f$. Then the Lebesgue measure of $S_{t_0}$ is zero.
    \item[(III)] \emph{Rate of convergence:} Under the additional convexity assumption  \ref{ass:fConvex}, $f(\theta(t)) \to f(\theta_{\star})$ with the rate $\c{O}(1/t)$.
    \end{enumerate}
\end{corollary}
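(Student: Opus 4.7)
The plan is to specialize the three general theorems of \S\S\ref{ssec:ConvergenceBis} on ODE \eqref{eq:ODEbis} (Theorems \ref{thm:convergenceBis}, \ref{thm:avoidingBis} and \ref{thm:convergenceRateBis}) to the Adagrad choice $p\equiv 1$, $q\equiv 0$, $\eps=0$, with admissible initial data $\omega_0 = \alpha[\nabla f(\theta_0)]^2 > 0$ coordinate-wise. Assumption \ref{ass:ODE} holds trivially since $p$ is constant and $q$ vanishes, and Assumption \ref{ass:bounded} is part of the hypotheses.

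For part (I), I would first invoke Theorem \ref{thm:convergenceBis} to obtain $f(\theta(t)) \to f_{\star}$ with $f_{\star}$ a critical value. To strengthen the conclusion from the ``$\omega \to 0$'' alternative to $\omega(t) \to \omega_{\infty} > 0$, I would exploit the sign structure of \eqref{eq:ODEAdagrad}: each component satisfies $\dot{\omega}_i = (\partial_i f(\theta))^2 \geq 0$, hence $\omega_i$ is monotone non-decreasing and bounded below by $\omega_{0,i} > 0$. For boundedness from above I would combine the energy identity $\tfrac{d}{dt} f(\theta) = -\|\nabla f(\theta)/\omega^{1/4}\|^2$ (the dissipation relation for \eqref{eq:ODEbis}) with the pointwise equality $\tfrac{d}{dt}\sqrt{\omega_i} = (\partial_i f)^2/(2\sqrt{\omega_i})$. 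Assumption \ref{ass:bounded} and continuity of $f$ imply $f(\theta(t))$ is bounded below, so integration of the first identity yields $\int_{t_0}^{\infty}(\partial_i f)^2/\sqrt{\omega_i}\,ds < \infty$; by the second, this integral equals $2(\sqrt{\omega_{i,\infty}} - \sqrt{\omega_{0,i}})$, so each $\omega_i(t)$ converges to a finite, strictly positive limit.

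Part (III) follows directly from Theorem \ref{thm:convergenceRateBis} once the convexity Assumption \ref{ass:fConvex} is added, yielding the $\mathcal{O}(1/t)$ rate without additional work.

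Part (II) is the subtle point: Theorem \ref{thm:avoidingBis} is not immediately applicable since $Q(s) = q(1/s) \equiv 0$ (so $Q(0)=0$) while $p \not\equiv 0$, placing Adagrad outside the stated hypotheses. My strategy would be to leverage (I): since $\omega(t)$ converges to a strictly positive bounded limit $\omega_{\infty}$, the $(\theta,\omega)$ dynamics near any isolated critical point is a smooth bounded perturbation of the rescaled gradient flow $\dot{\theta} = -\nabla f(\theta)/\sqrt{\omega_{\infty}}$. One would then repeat the center-stable manifold argument recalled in \S\S\ref{ssec:CentralStable}: Assumption \ref{ass:Morse}(a) produces an unstable eigendirection of the linearized augmented vector field at a strict saddle, so the corresponding local stable manifold has positive codimension and is Lebesgue-null, and Assumption \ref{ass:Morse}(b) lets one globalize via a countable cover of the strict-saddle set. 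Carrying out this adaptation of the stable-manifold machinery to the degenerate case $Q(0)=0$ is the main obstacle, and the only step in the proof that genuinely requires more than a verbatim invocation of the results of \S\S\ref{ssec:ConvergenceBis}.
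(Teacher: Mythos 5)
Your handling of (I) and (III) matches the paper's intended ``direct consequence of \S\S\ref{ssec:ConvergenceBis}'' route, but you have filled in an important gap that the paper leaves implicit: Theorem \ref{thm:convergenceBis} does \emph{not} conclude anything about $\omega(t)$ when $Q(0)=0$ and $p\not\equiv 0$, which is precisely the Adagrad case, so the claim $\omega(t)\to\omega_\infty>0$ needs a separate argument. The monotonicity argument you supply is correct and, in fact, hides a pleasant conservation law for Adagrad: combining $\frac{d}{dt}f(\theta)=-\|\nabla f(\theta)/\omega^{1/4}\|^2$ with $\frac{d}{dt}\sqrt{\omega_i}=(\partial_i f)^2/(2\sqrt{\omega_i})$ gives $\frac{d}{dt}\bigl[f(\theta)+2\sum_i\sqrt{\omega_i}\bigr]=0$, from which the boundedness and convergence of each $\omega_i$ to a strictly positive limit follow immediately under Assumption \ref{ass:bounded}. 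This is a genuine, correct addition to what the paper says.

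Your diagnosis of part (II) is also correct and worth underlining: Theorem \ref{thm:avoidingBis} requires either $Q(0)>0$ or $p\equiv q\equiv 0$, and Adagrad satisfies neither, so the paper's assertion of ``direct consequence'' is imprecise here. The reason the hypothesis appears in Theorem \ref{thm:avoidingBis} is structural: when $Q(0)=0$ and $p\not\equiv 0$, the singular set of the autonomous field at $s=0$ is not a countable set of isolated points but a continuum $\{(\theta_\star,\omega,0):\nabla f(\theta_\star)=0,\ \omega\in\mathbb{R}^d_{\geq 0}\}$, so the countability step in the globalization of Appendix \ref{sec:ProofAvoiding} breaks in general. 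Your proposed repair is in the right direction, and the key ingredient is exactly your part-(I) argument: since $\omega(t)$ converges to a specific point $\omega_\infty$ (and $\theta(t)\to\theta_\star$ by connectedness of the $\omega$-limit and Assumption \ref{ass:Morse}(b)), the orbit in the extended space converges to the single point $(\theta_\star,\omega_\infty,0)$; the Jacobian there, computed as in Appendix \ref{sec:ProofAvoiding} but with $\omega_\infty$ in place of $0$, is block lower-triangular with upper block $-\mathrm{diag}(\omega_{\infty,i})^{-1/2}\mathcal{H}_f(\theta_\star)$ and the remaining eigenvalues all zero, hence retains a strictly positive eigenvalue by Assumption \ref{ass:Morse}(a); the singular continuum is $\sigma$-compact, so a countable cover by neighborhoods where Theorem \ref{thm:LocalCentralStable} applies again yields a measure-zero union. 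Two points you should make explicit rather than leave to a metaphor about ``perturbation of the rescaled gradient flow'': one must linearize the actual autonomous vector field, not an idealized limit flow (the two differ in the $\omega$- and $s$-directions), and the countable-cover globalization is legitimate only because the orbit converges to a \emph{single} limit point, which is exactly what your monotonicity argument provides. With those clarifications the sketch closes the gap; as written it leaves the hardest step slightly too informal.
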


\subsection{RMSProp}\label{ssec:RMSProp}

The only difference between RMSProp and \Ada{} is how the preconditioning matrix is computed. In \RMSprop, it consists of an exponentially decaying
moving average, rather than a sum of the previous gradients, which has many similarities to the update rule of \Adam{}
\begin{equation} \label{eq:rmsprop}
\left\{
\begin{aligned}
    v_{k+1} &= \beta v_{k} + (1-\beta)\nabla_{\theta} f(\theta_k)^2\\
    \theta_{k+1}& =  \theta_k -s \nabla_{\theta}
    f(\theta_k) / \sqrt{v_{k+1}}.
\end{aligned}
\right.
\end{equation}
In particular, following the same analysis as performed in $\S\S$~\ref{ssec:Adam}, we derive the differential equation:
\begin{equation}\label{eq:ODERSMProp}
 \left\{\begin{aligned}
  \dot{\theta}(t) &= - \nabla f(\theta(t))/\sqrt{\omega(t)}\\
  \dot{\omega}(t) &= \frac{1}{\alpha} \cdot \left( \left[ \nabla f(\theta(t))\right]^2 - \omega(t)\right),
 \end{aligned}\right.
 \end{equation}
where $\beta = (1-s/\alpha)$, with initial condition given by $\theta_0 \in \R^d$ and $\omega_0 =  s [\nabla f(\theta_{0})]^2$. This differential equation allow us to prove the following result (whose proof is a direct consequence of the results in $\S\S$~\ref{ssec:ConvergenceBis}):

\begin{corollary}[Convergence of RMSProp]
\label{thm:convergenceRMSProp}
Suppose that assumptions \ref{ass:ODE} and \ref{ass:bounded} are satisfied for equation \eqref{eq:ODERSMProp}. Then
\begin{enumerate}
    \item[(I)] \emph{Topological convergence:} $f(\theta(t)) \to f_{\star}$ and $\omega(t) \to 0$ when $t\to \infty$, where $f_{\star}$ is a critical value of $f$.
    \item[(II)]  \emph{Non-local minimum avoidance:} We assume the additional hypothesis \ref{ass:Morse} on the objective function.
    Fix $t_0>0$ and denote by $S_{t_0}$ the set of initial conditions $(\theta_0,\omega_0) \in \mathbb{R}^d \times \mathbb{R}^{d}_{\geq 0}$ such that $\theta_{\star} \in \omega(\theta(t))$, where $\theta_{\star}$ is not a local-minimum of $f$. Then the Lebesgue measure of $S_{t_0}$ is zero.
    \item[(III)] \emph{Rate of convergence:} Under the additional convexity assumption  \ref{ass:fConvex}, $f(\theta(t)) \to f(\theta_{\star})$ with the rate $\c{O}(1/t)$.
    \end{enumerate}
\end{corollary}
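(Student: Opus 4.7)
The plan is to verify that ODE \eqref{eq:ODERSMProp} falls directly within the scope of the three general results for the auxiliary system \eqref{eq:ODEbis} established in $\S\S$~\ref{ssec:ConvergenceBis}, namely Theorems \ref{thm:convergenceBis}, \ref{thm:avoidingBis} and \ref{thm:convergenceRateBis}. First, I identify the coefficients: writing \eqref{eq:ODERSMProp} in the form of \eqref{eq:ODEbis}, one has $p(t)\equiv q(t)\equiv 1/\alpha$ and $\varepsilon=0$ (or arbitrary $\varepsilon\geq 0$, the argument is the same). Both coefficients are positive, non-increasing $C^1$-functions, so Assumption \ref{ass:ODE} is immediate, and in the notation of the auxiliary theorems $Q(0)=1/\alpha>0$.

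For part (I), I would directly apply Theorem \ref{thm:convergenceBis}: Assumption \ref{ass:bounded} is given by hypothesis, and since $Q(0)=1/\alpha>0$, the theorem yields $f(\theta(t))\to f_\star$ for a critical value $f_\star$ of $f$ and $\omega(t)\to 0$, which is the claimed statement.

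For part (II), the additional Assumption \ref{ass:Morse} is made, and once more $Q(0)=1/\alpha>0$, so Theorem \ref{thm:avoidingBis} applies verbatim and gives that $S_{t_0}$ has Lebesgue measure zero for every $t_0>0$.

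For part (III), under the convexity Assumption \ref{ass:fConvex} together with Assumption \ref{ass:bounded}, the only thing to check for Theorem \ref{thm:convergenceRateBis} is the asymptotic behaviour of the energy derivative
\[
\frac{d}{dt}\mathcal{E}(t,\theta,\omega) \leq -\norm{\frac{\nabla f(\theta)}{\sqrt[4]{\omega+\varepsilon}}}^2 \bigl[\, t - p(t)\norm{\theta-\theta_\star}^2 \,\bigr],
\]
as derived in $\S\S$~\ref{ssec:ConvergenceBis}. With $p(t)\equiv 1/\alpha$ constant and $\theta(t)$ bounded by Assumption \ref{ass:bounded}, the bracket $t-\alpha^{-1}\|\theta(t)-\theta_\star\|^2$ is eventually positive; hence there exists $\tilde{t}\geq t_0$ such that $\mathcal{E}$ is non-increasing on $[\tilde{t},\infty)$, and Theorem \ref{thm:convergenceRateBis} delivers the rate $\mathcal{O}(1/t)$.

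No step should be a real obstacle: the entire argument is a structural verification that the coefficients of \eqref{eq:ODERSMProp} meet the hypotheses of the three general theorems. The only mild subtlety is in part (III), where one must invoke the boundedness of the orbit to absorb the $p(t)\|\theta-\theta_\star\|^2$ term into $t$ for large $t$; but this is immediate since $p$ is constant and $\theta(t)$ is bounded by assumption.
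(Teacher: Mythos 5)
Your proposal is correct and follows exactly the route the paper intends: the paper states the corollary is ``a direct consequence of the results in $\S\S$~\ref{ssec:ConvergenceBis}'', and your verification that $p\equiv q\equiv 1/\alpha$, $Q(0)=1/\alpha>0$, and that Assumptions \ref{ass:ODE}, \ref{ass:bounded}, \ref{ass:Morse}, \ref{ass:fConvex} feed into Theorems \ref{thm:convergenceBis}, \ref{thm:avoidingBis} and \ref{thm:convergenceRateBis} is precisely that direct consequence, spelled out. The re-derivation of the sign of $\frac{d}{dt}\mathcal{E}$ in part (III) is already part of Theorem \ref{thm:convergenceRateBis} rather than an extra hypothesis to check, but that is a harmless redundancy, not an error.
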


\section{Existence and uniqueness of solutions: $t_0=0$}\label{sec:ExistenceResult}

In this section we improve Theorem \ref{th:existencePrel}, and we consider the case that $t_0=0$. Recall that our functions $h(t)$, $r(t)$, $p(t)$ and $q(t)$ are allowed to have poles in the origin (c.f. \ref{tab:1}) in order to capture the phenomena present in accelerated methods (c.f. Nesterov) and on rescaling due to bias correction (c.f. ADAM). This impose some technical difficulties, as illustrated by the following example:

\begin{example}\label{ex:ExistenceIssue}
Consider the differential equation, with $L\neq 0$:
 \[
 \dot{\theta}(t) = \frac{L}{t} \theta(t), \quad \theta(0)=\theta_0
 \]
Now the solutions of this equation with initial condition at $t_0=1$ are given by
\[
\theta(t) = \theta(1) \cdot t^L.
\]
This allow us to make the following considerations:
\begin{itemize}
\item[(a)] Suppose $\theta_0 \neq 0$. Then there is no solution for the system.
\item[(b)] Suppose $\theta_0 =0$. Then there always exist a solution given by $\theta(t) \equiv 0$. Uniqueness, nevertheless, only holds if $L< 0$. Indeed, if $L> 0$, for every $\theta(1) \in \mathbb{R}$ the solution $\theta(t) = \theta(1) \cdot t^L$ converges to $0$ when $t\to 0$.
\end{itemize}
\end{example}

We, therefore, need to demand extra assumptions on the coefficients of our model and the initial conditions in order to guarantee the existence and uniqueness of the solution at time $t=0$:

\begin{Assumption}\label{ass:Existencet0}
We assume one of the following condition
    \begin{enumerate}

        \item The functions $h,r,p,q$ have a simple pole at $t=0$.
        \item If $h \in C^1([0, + \infty)) $ (resp $p \in C^1([0, +\infty))$), then $r$ (resp. $q$) can have (at most) a simple pole at zero.
        \item In any other cases, all functions are assumed to be $C^1$ on
            $[0,\infty) $.
    \end{enumerate}
    In cases $(1)$ and $(2)$, furthermore, we demand the following two extra-conditions:
    \begin{itemize}
    \item[(a)] the initial conditions must be taken as:
    \[
    m_0 = \nabla f(\theta_0) \lim_{t \to 0^+} h(t)/r(t) ,\quad  v_0 = [\nabla f(\theta_0)]^2 \lim_{t \to 0^+} p(t)/q(t).
    \]
    \item[(b)] We assume that $p(t) \not\equiv 0$ and that there exists a small time $\hat{t} $ such that
    \[
     2r(t) - q(t) \geq 0, \qquad \forall t < \hat{t},
    \]
\end{itemize}
\end{Assumption}

Conditions (a) and (b) of Assumption \ref{ass:Existencet0} should be compared with the issues (a) and (b) raised in Example \ref{ex:ExistenceIssue}, respectively. In terms of the algorithms in table \ref{tab:1}, the assumption is always verified for Heavy-Ball, Nesterov, AdaForm and ADAM without rescaling. For ADAM, nevertheless, it is necessary to add mild assumptions on the hyper-parameters such as $\beta_1\geq 0.21$. We are now ready to enunciate our main result:

\begin{theorem}\label{th:existence}
Suppose that the ODE \eqref{eq:ODE} satisfies assumptions \ref{ass:ODE}, and that either $p(t)\not\equiv0$, or assumption \ref{ass:NecessaryCoeficients} with $\tilde{t}=0$ is satisfied. For any $t_0 > 0$ and admissible initial condition $\x(t_0)$,
 there exists a unique global solution to equation \eqref{eq:ODE} such that:
 \[
 \begin{aligned}
 \theta &\in C^2([t_0,\infty); \R^d) \quad  \text{ and } \quad m,\,v &\in C^1([t_0,\infty); \R^d).
 \end{aligned}
\]
Suppose, furthermore, that assumption \ref{ass:Existencet0} is also satisfied. Then, there exists a unique global solution to equation \eqref{eq:ODE} such that:
 \[
 \begin{aligned}
 \theta &\in C^2((0,\infty); \R^d) \cap C^1 ([0,\infty); \R^d)
\quad  \text{ and } \\ m,\,v &\in C^1((0,\infty); \R^d) \cap C([0,\infty); \R^d).
 \end{aligned}
\]
\end{theorem}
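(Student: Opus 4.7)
For the first assertion ($t_0 > 0$), I would follow the classical strategy already outlined for Theorem \ref{th:existencePrel}: on the open set $\{t > 0,\ v + \eps > 0\}$, the right-hand side of \eqref{eq:ODE} is locally $C^1$, so the Cauchy-Lipschitz theorem yields a unique maximal solution on some interval $[t_0, t_\infty)$. To show $t_\infty = +\infty$, I would combine three ingredients: the energy functional \eqref{eq:EnergyPrelim} together with Assumption \ref{ass:NecessaryCoeficients} to control $f(\theta)$ and $m/[v+\eps]^{1/4}$; in the adaptive case with $\eps = 0$, the coordinatewise estimate $v_i(t) \geq v_i(t_0)\exp(-\int_{t_0}^{t} q)$ (obtained from $\dot v_i \geq -q v_i$) to keep the denominator away from zero; and a Gronwall argument on the ODE for $v$ to prevent blow-up on any compact interval. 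Together these rule out any finite-time singularity and extend the solution to $[t_0, \infty)$.

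For the second assertion ($t_0 = 0$), I would build the solution by approximation. Setting $t_n = 1/n$, take
\[
m_{0,n} = \nabla f(\theta_0) \, \frac{h(t_n)}{r(t_n)}, \qquad v_{0,n} = [\nabla f(\theta_0)]^2 \, \frac{p(t_n)}{q(t_n)}
\]
(with obvious modifications in cases (2) and (3) of Assumption \ref{ass:Existencet0}). By the first part there is a unique global solution $\x_n$ on $[t_n, \infty)$ with $\x_n(t_n) = (\theta_0, m_{0,n}, v_{0,n})$, and $(m_{0,n}, v_{0,n}) \to (m_0, v_0)$ by Assumption \ref{ass:Existencet0}(a). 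The crucial observation is that this initialization cancels the leading singularity: substituting $m(t) = \nabla f(\theta_0)\, h(t)/r(t) + \tilde m(t)$ (and analogously for $v$) into \eqref{eq:ODE} yields a singular linear ODE for $(\tilde m, \tilde v)$ whose unique solution vanishing at $0$ is $O(t)$, via the integrating factor $\exp(\int r)$ producing an integrable weight. Combined with Assumption \ref{ass:Existencet0}(b), the energy estimate extends down to $t_n$ uniformly in $n$, yielding $n$-independent bounds on $\x_n$ and $\dot{\x}_n$ on any compact $[0, T]$. Arzel\`a-Ascoli then provides a subsequence converging uniformly on compacts to a limit $\x \in C([0, \infty))$ satisfying \eqref{eq:ODE} on $(0, \infty)$ in integral form; a bootstrap delivers the claimed regularity $\theta \in C^2((0,\infty)) \cap C^1([0,\infty))$ and $m, v \in C^1((0,\infty)) \cap C([0,\infty))$.

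The main obstacle is uniqueness at $t_0 = 0$: the Lipschitz constant of the vector field blows up as $t \to 0^+$, and Example \ref{ex:ExistenceIssue} shows uniqueness is genuinely delicate, depending on the sign of the leading pole coefficient. Here, positivity of $r, q$ together with Assumption \ref{ass:Existencet0}(b) plays the role of $L < 0$ in that example. To implement this, I would take two candidate solutions $\x^{(1)}, \x^{(2)}$ with the same initial data at $0$, set $(\delta\theta, \delta m, \delta v) = \x^{(1)} - \x^{(2)}$, and derive a linear singular system with dissipative leading terms. Multiplying the equation for $\delta m$ by the integrating factor $\exp(\int r)$ (and analogously for $\delta v$), and applying a Gronwall lemma adapted to integrable singularities, forces $\delta m, \delta v \equiv 0$ in a right neighborhood of $0$; a Lipschitz estimate for $\delta \theta$ closes the argument, and global uniqueness on $[0, \infty)$ then follows from the first assertion applied at any small $t_0 > 0$.
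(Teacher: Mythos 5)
Your treatment of the first assertion ($t_0>0$) matches the paper's: Picard--Lindel\"of for local existence, the energy functional under Assumption \ref{ass:NecessaryCoeficients} (or, when $p\not\equiv 0$, an estimate on $m/\sqrt{v+\eps}$ via Gronwall) to bound $\theta$, the lower bound $v_i(t)\geq v_i(t_0)e^{-\int q}$ to keep $v$ positive, and a Gronwall argument to preclude finite-time blow-up. This is the same argument as in Appendix A.

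For the second assertion your approximation scheme is genuinely different from the paper's. You move the initial time $t_n=1/n \to 0$ while keeping the singular vector field, whereas the paper keeps the initial time at $0$ and replaces the coefficients by the truncated $h_\delta(t)=h(\max(\delta,t))$, etc., so that the approximating system is honestly $C^1$ on all of $[0,T]$. Both constructions funnel through the same three ideas: the initial condition $m_0=\nabla f(\theta_0)\lim h/r$, $v_0=[\nabla f(\theta_0)]^2\lim p/q$ cancels the leading-order singularity; uniform a priori estimates plus Arzel\`a--Ascoli produce a limit; a Gronwall/integrating-factor argument with the weight $e^{\int r}$ (respectively $e^{\int q}$) taken from a time $\eta\to 0$ gives uniqueness. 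The paper's smoothing has the advantage that its Proposition \ref{Holder_continuity} can be stated and proved on the full interval $[0,T]$ without bookkeeping for a moving left endpoint; your scheme would require an analogous $n$-uniform estimate on $[t_n,T]$ together with, e.g., a constant extension to $[0,t_n]$ before invoking Arzel\`a--Ascoli. Your uniqueness sketch is essentially the paper's argument.

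There is, however, a genuine gap at the crux of the compactness step. You attribute the ``$n$-independent bounds on $\dot{\x}_n$'' to ``the energy estimate,'' but the energy functional \eqref{eq:EnergyPrelim} controls $\frac{1}{2h(t)}\|m/[v+\eps]^{1/4}\|^2$, and since $h(t)\to\infty$ as $t\to 0^+$ this does \emph{not} give a bound on $\dot\theta_n = -m_n/\sqrt{v_n+\eps}$ near the singular time. The paper's uniform bound on $m/\sqrt{v+\eps}$ comes instead from the dedicated Lemma \ref{lemma:bound_m_over_sqrtv} (a completion-of-the-square argument that needs $p\not\equiv 0$ and the bound $2r-q\geq 0$ from Assumption \ref{ass:Existencet0}(b)), promoted to an $n$-uniform statement as in Lemma \ref{uniform_bound_m_over_v}. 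Similarly, your heuristic that $m(t)-\nabla f(\theta_0)h(t)/r(t)$ satisfies a ``singular linear ODE whose unique solution vanishing at $0$ is $O(t)$'' is not literally a linear ODE (the forcing involves $\nabla f(\theta(t))-\nabla f(\theta_0)$, which is controlled only after one already has a bound on $\dot\theta$), and the claim of a unique $O(t)$ solution through the origin is asserted rather than proved. To make your scheme rigorous you would need to reproduce, with $t_n$ in place of $\delta$, the estimates that in the paper give $\|\dot{\x}_\delta\|\leq C(T)$ uniformly, in particular the Duhamel decomposition into terms $N_1,N_2,N_3$ that exploits the $C^1$-regularity of $h/r$ and $p/q$ at $t=0$. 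As written, the proposal names the correct assumptions but does not connect them to the actual estimate that makes Arzel\`a--Ascoli applicable.
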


The proof is postponed in Appendix \ref{sec:Existence}, and it is technically much harder than the proof of Theorem \ref{th:existencePrel} because of the pole at the origin.

\section{Considerations and guidelines for adaptive algorithms}\label{sec:considerations}

We make here empirical observations, discuss some limitations, and provide some extra guidelines for the use of adaptive algorithms, based on our previous analysis.

\subsection{The discrete dynamics does not necessarily converge.}
One strong limitation of \Adam{} is the existence of discrete limit cycles in the sense that the algorithm produces oscillations that never damp out.
If the discrete dynamics reaches such an equilibrium, the difference $f(\theta_k) - f(\theta_{\star})$ can not converge arbitrarily close to zero
with an increasing number of steps.
However, it reaches a neighborhood of the critical point whose radius is determined by the learning rate $s$.
Decaying the learning rate is therefore necessary to obtain convergence of the dynamics
Numerically, we found that \Adam{} with $\beta_1 > 0$ suffers from the same phenomena but the limit cycles are more difficult to establish.
We believe that the existence of such cycles depend on the local curvature of the function $f$ near the optimum.

\begin{proposition}[Existence of a discrete limit cycle for \Adam{}]\label{prop:limitCycle}
Let $\beta_1 =  0$ and $f(\theta) = \theta^2/2$. Then there exists a discrete limit cycle for \eqref{eq:discreteAdam}.
    \end{proposition}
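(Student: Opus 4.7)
The plan is to substitute $\beta_1 = 0$ and $f(\theta) = \theta^2/2$ into the discrete recursion \eqref{eq:discreteAdam}, reduce the problem to a two-dimensional update, and exhibit an explicit period-two orbit. Since $\beta_1 = 0$, the bias-corrected coefficient $\mu_{k+2}$ vanishes for every $k \geq 0$, so the momentum update collapses to $m_{k+1} = g_{k+1} = \nabla f(\theta_{k+1}) = \theta_{k+1}$. Together with the prescribed initialisation $m_0 = \nabla f(\theta_0) = \theta_0$, this forces $m_k = \theta_k$ for all $k \geq 0$, and the full system reduces to
\[
\theta_{k+1} = \theta_k \Bigl( 1 - \frac{s}{\sqrt{v_k + \eps}} \Bigr), \qquad v_{k+1} = \nu_{k+2}\, v_k + (1 - \nu_{k+2})\, \theta_{k+1}^2.
\]

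Next I would look for a period-two cycle of the ansatz $\theta_{k+1} = -\theta_k$ with $v_k$ constant. The first identity forces $1 - s/\sqrt{v_k + \eps} = -1$, i.e.\ $v_k = s^2/4 - \eps$, which is a positive real number under the very mild assumption $s > 2\sqrt{\eps}$ (trivially met for the conventional $\eps = 10^{-8}$ and any reasonable learning rate). Setting $a := \sqrt{s^2/4 - \eps}$, the crucial observation is that along this candidate orbit one has $\theta_{k+1}^2 = \theta_k^2 = a^2 = v_k$, so the update for $v$ becomes $v_{k+1} = \nu_{k+2}\, a^2 + (1 - \nu_{k+2})\, a^2 = a^2$. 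The non-autonomous coefficient $\nu_{k+2}$ drops out entirely, which is the key point making this argument work for the rescaled (bias-corrected) \Adam.

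To finish, I would choose the initial state $\theta_0 = a$, which by the prescribed initialisation automatically gives $m_0 = a$ and $v_0 = a^2$---precisely the values on the cycle. A short induction on $k$ then shows that
\[
(\theta_k, m_k, v_k) = \bigl( (-1)^k a,\, (-1)^k a,\, a^2 \bigr)
\]
solves \eqref{eq:discreteAdam} for every $k \geq 0$, which is an explicit discrete limit cycle of period two that fails to converge to the minimiser $\theta_\star = 0$.

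The main obstacle is conceptual rather than technical: a priori, the non-autonomy of the map (through $\nu_{k+2}$) is an obstruction to true periodic orbits, and one must first observe that along this particular orbit the $\nu_{k+2}$-dependent terms cancel exactly because $v_k$ coincides with $\theta_{k+1}^2$. Once this cancellation is spotted, the proof is a direct verification. A stronger (but unrequested) statement---that the cycle is \emph{attracting}---would require computing the Jacobian of the squared reduced map at the fixed point $(a, a^2)$ and checking that both eigenvalues lie strictly inside the unit disk, which is routine linear algebra but not needed for existence.
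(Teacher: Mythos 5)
Your proof is correct and follows essentially the same strategy as the paper: exhibit a period-two orbit where $\theta$ flips sign and $v$ stays put. The paper's proof assumes the state $(\theta_k, v_k) = (s/2, s^2/4)$ at some step and writes the update with $\sqrt{v_k}$ in the denominator, implicitly suppressing the $\eps$ that appears in \eqref{eq:discreteAdam}. Your version is more careful on two counts: you keep the $\eps$ in the denominator and adjust the cycle accordingly (setting $a = \sqrt{s^2/4 - \eps}$, under the innocuous condition $s > 2\sqrt{\eps}$), and you verify that the prescribed initialisation $m_0 = \nabla f(\theta_0)$, $v_0 = [\nabla f(\theta_0)]^2$ places the trajectory on the cycle from $k = 0$, rather than assuming the cycle is reached at some unspecified $k$. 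You also make explicit the observation — left implicit in the paper — that the non-autonomous coefficient $\nu_{k+2}$ cancels because $\theta_{k+1}^2 = v_k$ along the orbit. These are minor but genuine improvements in rigour over the paper's argument.
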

    \begin{proof}
        Let us assume that there exists a $k$ such that $\theta_k  = s/2 $ and that $v_k = (s/2)^2$, where $s$ is the learning rate. It easily follows from the update rules that
        \begin{align*}
            \theta_{k+1} &= \theta_k -s \frac{\nabla f(\theta_k) }{\sqrt{v_k}} =
            \frac{s}{2} - s  = - \theta_k \\
            v_{k+1}  &= (s/2)^2
        \end{align*}
        Therefore
        $\theta_{k+2} = -\frac{s}{2} + s  = \theta_k $ and the system has entered a discrete equilibrium.
    \end{proof}
We illustrate this behavior in Figure \ref{fig:limit_cycle} on the strongly convex toy function $f(x,y) = x^2  + y^2$.
\begin{figure}[t!]
    \centering
    \begin{subfigure}[b]{0.45\textwidth}
        \caption{Discrete limit cycles for Adam}
        \includegraphics[width=\textwidth]{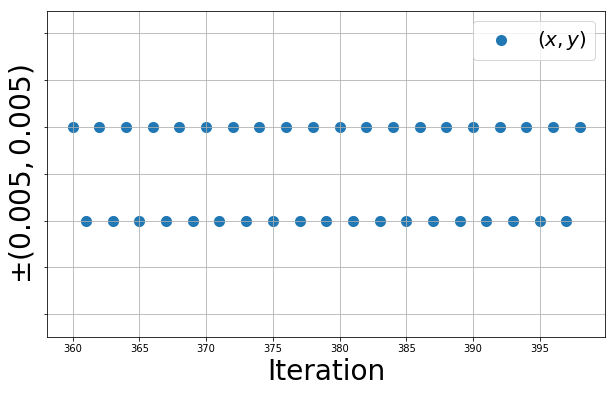}
    \end{subfigure}
    \begin{subfigure}[b]{0.45\textwidth}
        \caption{Convergence}
        \includegraphics[width=\textwidth]{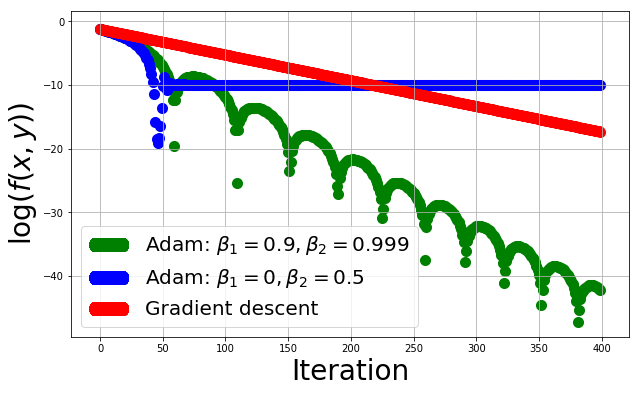}
    \end{subfigure}
    \caption[Illustration of discrete limit cycles for the \Adam{}'s algorithm.]
    {\small Illustration of discrete limit cycles for the \Adam's algorithm with $\eps = 10^{-8}, \beta_2 = 0.5, s=10^{-2}$.
    {\bf a)} Limit cycle of period two for \Adam. The algorithm oscillates between two points $(0.005, 0.005)$ and $(-0.005, -0.005)$. {\bf b)} Plot of the logarithm of $f$ versus the number of iterations.
    The loss plateau after $50$ iterations.
    }
 \label{fig:limit_cycle}
\end{figure}

It is important to note that the value of the gap between $f(\theta_k)$ and $f(\theta_{\star})$ depends on the learning rate. Choosing a smaller learning rate reduce the gap, but doesn't remove it.

\subsection{The hyper-parameters $\beta_1, \beta_2$ in ADAM should be tuned in terms of the learning rate}
The second observation is related to the hyper-parameters of the optimizers
and give important guidance on how to tune them.
As observed in section \ref{ssec:Adam}, the parameters $\beta_1$ and $\beta_2$ are chosen as functions of the learning rate $s$ and parameters $\alpha_1$ and $\alpha_2$. It is often the case in practice (in particular in stochastic optimization)
to decay the learning rate during the training process.
By doing so, the discrete dynamics is completely modified unless the $\beta$'s are adjusted to keep the parameters $\alpha_i$ constant. Therefore, once a particular choice of hyper-parameters seems promising, a decay in the learning rate should be accompanied by changing the hyper-parameters $\beta_i$ according to the formula \eqref{eq:parameters}, which we recall here
\[
 \beta_i = e^{-s/\alpha_i},\, i=1,2.
\]
Indeed, by doing so the underlying dynamics is preserved. We illustrate this in plot {\bf b)}, Figure \ref{fig:discretization}, where we compute the logarithm of the error between different trajectories
\begin{enumerate}
    \item The reference dynamics: $\beta_2 = 0.99$ and $s=0.001$. According to formula \eqref{eq:parameters}, $\alpha_2  = - 0.001 / \log(0.99) \approx 0.0995$
    \item Second dynamics: $\beta_2 = 0.99$ and $s=0.01$
    \item Third dynamics: same learning rate as the second dynamics $s=0.01$ but we adjust the hyper-parameter: $\beta_2 = \exp(- 0.01 / 0.0995) = 0.90438$.
\end{enumerate}
   \begin{figure}[t!]
    \centering
    \begin{subfigure}[b]{0.45\textwidth}
        \caption{Trajectories}
        \includegraphics[width=\textwidth]{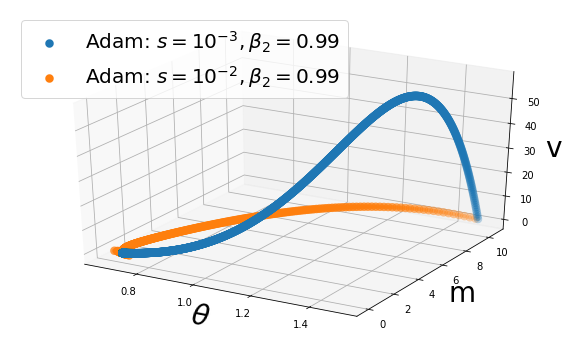}
    \end{subfigure}
    \begin{subfigure}[b]{0.45\textwidth}
        \caption{Comparison of $L^2$ norms}
        \includegraphics[width=\textwidth]{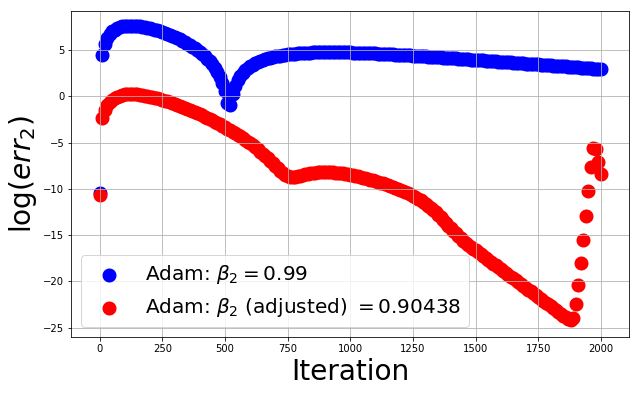}
    \end{subfigure}
    \caption[discretization]
    {\small Fixing $\beta_2$ and changing the learning rate $s$ lead to different dynamics.
    {\bf a)} Trajectories of \Adam{} (1) $\&$ (2) when only the learning rate is changed.
    {\bf b)} Comparison of the error between trajectories (1) $\&$ (2) and (1) $\&$ (3). As expected the discrepancies between (1) $\&$ (3) is very small.}
      \label{fig:discretization}
\end{figure}

\subsection{Convergence properties of \Adam{} depend on the class of loss functions}\label{ssec:Flat}

The convergence analysis in the convex case (see Theorem \ref{thm:convergenceRate}) seem to indicate that \Adam{}
is a rather slow algorithm because quick convergence (in the order $o(1/t)$) is only guaranteed in a neighbourhood of the global minimum. Under a closer look, however, we note that the size of the neighbourhood might be very small depending on the class of loss functions because of its impact to the term \eqref{eq:CrucialTerm} and the constant $\mathcal{K}$. In particular, we believe that there are situations where \Adam{} should perform consistently better than other algorithms, provided that the hyper-parameters are well-chosen; e.g. for flat loss functions (see figure \ref{fig:convergence}). We intend to deepen this remark in forthcoming works.

\begin{figure}[t!]
    \centering
    \begin{subfigure}[b]{0.45\textwidth}
        \caption{$f(x,y) = (x+y)^4 +  (x/2-y/2)^4$}
        \includegraphics[width=\textwidth]{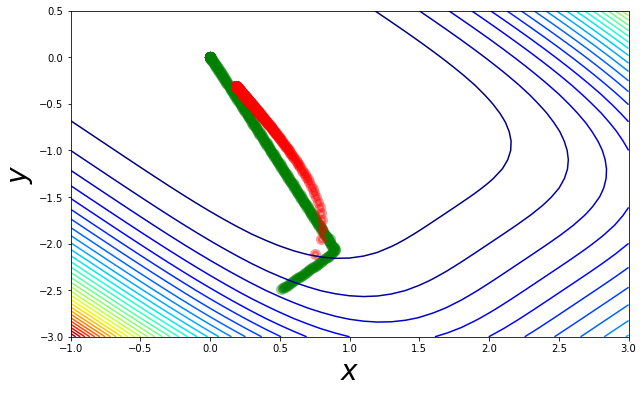}
    \end{subfigure}
    \begin{subfigure}[b]{0.45\textwidth}
        \caption{Rate of convergence}
        \includegraphics[width=\textwidth]{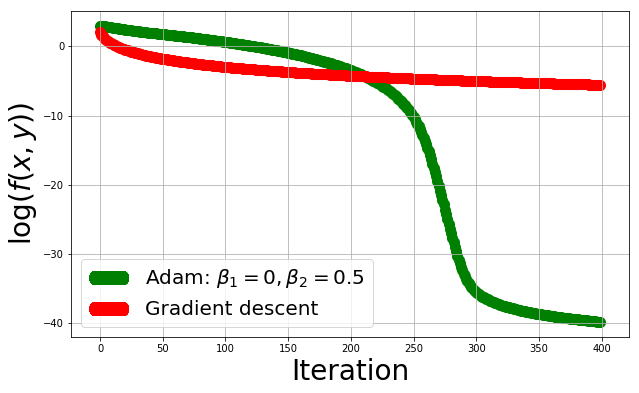}
    \end{subfigure}
    \caption[]
    {\small Comparison between gradient descent and \Adam. Gradient Descent converges faster
    initially when the gradients are large but \Adam{} outperforms Gradient descent after entering the flat region.
    Both trajectories start from the point $(0.5, -2.5)$.
    }
    \label{fig:convergence}
\end{figure}
    \begin{figure}[t!]
    \centering
    \begin{subfigure}[b]{0.45\textwidth}
        \caption{$f(x,y) = (x+y)^4 +  (x/2-y/2)^4$}
        \includegraphics[width=\textwidth]{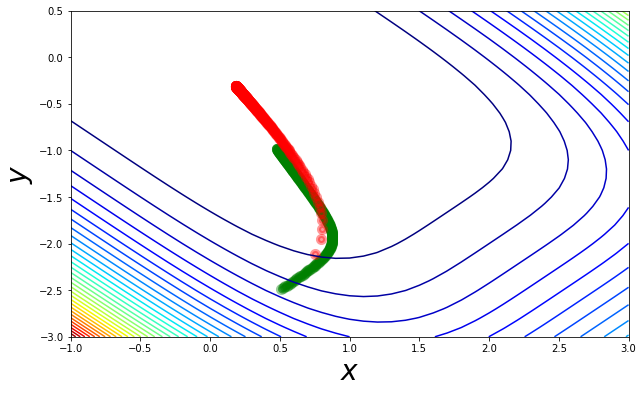}
    \end{subfigure}
    \begin{subfigure}[b]{0.45\textwidth}
        \caption{Rate of convergence}
        \includegraphics[width=\textwidth]{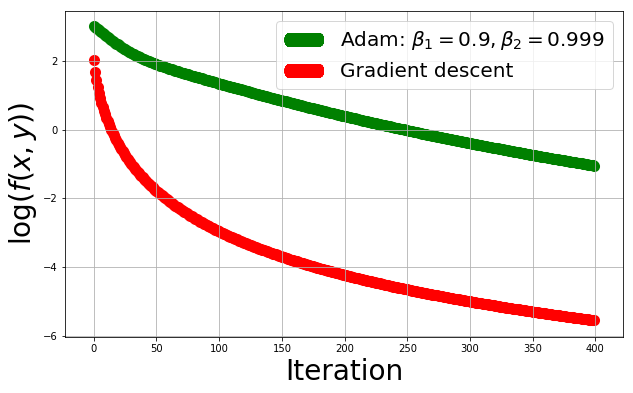}
    \end{subfigure}
    \caption[]
    {\small Comparison between gradient descent and \Adam. Gradient Descent outperforms \Adam{} in this example because $\beta_1, \beta_2$ are large and \Adam{} keeps memory of the past large gradients. Both trajectories start from the point $(0.5, -2.5)$.}
    \label{fig:convergencebis}
\end{figure}

\subsection{On future algorithms}\label{ssec:Future}

Existence and uniqueness of solutions for ODE \eqref{eq:ODE} holds for almost arbitrary functions $h(t)$, $r(t)$, $p(t)$ and $q(t)$. Our convergence analysis, nevertheless, imposes important restriction on the choice of these functions (see Assumptions \ref{ass:TopologicalConvergence}, \ref{ass:Morse}, \ref{ass:necessaryConditions}), which we now discuss.
From Theorem \ref{thm:convergenceRate}, the rate of convergence to the global minimum is at least given by the decay of the function
\[
 \max\left\{1, \int_{t_0}^t q(u)  du\right\} / \c{A}(t),
\]
where $\mathcal{A}$ depends only on $h$ and $r$.
It is natural to seek for functions $h(t)$, $r(t)$, $p(t)$ and $q(t)$
such that this upper-bound decays faster to zero, while satisfying the conditions \ref{ass:NecessaryCoeficients} and \ref{ass:necessaryConditions}.

Note that in Theorem \ref{thm:convergenceRate}, we obtained tighter estimates on the rate of convergence depending on the history of the gradient and the variables $v$ and $\theta$. This suggest that the efficiency of the algorithm
does not only depend on the choice of the functions $h(t)$, $r(t)$, $p(t)$ and $q(t)$ but also on
the path the dynamics is taking and therefore on properties of the loss.

\medskip
\noindent
\textbf{On the flexibility of the coefficients $h(t)$ and $r(t)$.} We start by recalling that assumption \ref{ass:necessaryConditions}(a) implies a strong constraint on the function $r(t)$. Indeed, it is necessary that $\lim_{t\to \infty} r(t) \cdot t^{1+\epsilon} =0$ for all $\epsilon>0$. This is a strong restriction, which have consequences to the choice of $h(t)$. In order to illustrate this, let us consider two examples $r_1(t) =r_1$ and $r_2(t)=r_2/t$ with $r_2>1$, which are the natural allowed examples.
With these choices of functions, we seek for $h$ such that $\c{A}$ has at least linear growth.
In the notation of $\S\S$~\ref{ssec:RateConvergence}, we have that:
\[
\mathcal{B}_1(t) = 1/r_1, \quad \mathcal{B}_2(t) = t/(r_2-1)
\]
 This imposes strong restrictions on $h(t)$, indeed:
\[
\lim_{t\to \infty}h_1(t) >0, \quad \lim_{t\to \infty} t \cdot h_2(t)>0.
\]
It follows that the natural reasonable choices are one of the following three:
\begin{enumerate}
\item $r(t) =r $, $h(t)=h$ (c.f. Heavy Ball, \Adam{} and AdaFom). Under these conditions, one can expect a convergence rate of order at least $\c{O}(1/t)$ and convergence guarantees (including avoidance of saddle points) even in the non-convex setting;
\item $r(t) = r/t$, $h(t) =h$ (c.f Nesterov). Under these conditions, one can expect a fast convergence rate of order at least $\c{O}(1/t^2)$, but we obtain fewer guarantees in the non-convex setting because of the convergence to zero of the function $r$;
\item $r(t)=r/t$ and $h(t)=h/t$.  Under these conditions, one can expect a convergence rate of order at least $\c{O}(1/t)$, but we obtain fewer guarantees in the non-convex setting;
\end{enumerate}
Note that assumptions \ref{ass:NecessaryCoeficients} and \ref{ass:necessaryConditions} impose further relations on the hyper-parameters $r$ and $h$.

\medskip
\noindent
\textbf{On the flexibility of the coefficients $p(t)$ and $q(t)$.} In sharp contrast with the previous analysis, the coefficients $p(t)$ and $q(t)$ require fewer restrictions. In particular, note that assumptions \ref{ass:NecessaryCoeficients}, \ref{ass:TopologicalConvergence} and \ref{ass:necessaryConditions} are almost independent on these coefficients (besides mild constraints on $q(t)$).

Our analysis leads to an interesting dilemma, nevertheless, which deserves further investigation. At the one hand, in Theorem \ref{thm:avoiding} (which guarantee avoidance of saddle points) it is necessary to assume that $\lim_{t \to \infty} q(t) >0$ (c.f. \Adam{}). On the other hand, in order to obtain faster convergence, Theorem \ref{thm:convergenceRate} indicates that the function $q(t)$ should have a fast decay to zero to control the growth of $\int_{t_0}^t q(u)du$. Indeed:
\begin{itemize}
\item[$(i)$] if $\lim_{t \to \infty} q(t) >0$ (c.f \Adam{}), then the denominator $\int_{t_0}^t q(u)du$  has linear growth which degrades the rate of convergence of the algorithm.

\item[$(ii)$] if $\lim_{t \to \infty} t \cdot q(t) >0$ (c.f. AdaFom) then the denominator has logarithmic growth.

\item[$(iii)$] if $\lim_{t \to \infty} t^{1+\epsilon} \cdot q(t)= 0$ for some $\epsilon>0$, then there is no loss in the expected convergence rate.
\end{itemize}

It is, of course, interesting that an optimization algorithm avoids saddle point and converges as fast as possible. This is an intriguing point, which we feel that deserves further empirical investigation.

\medskip
\noindent
\textbf{Final remarks.} From the analysis outlined above, we feel that the combination of choices $1.ii$ (AdaFom), $1.iii$ and $2.ii$ are promising, at least from the perspective of our current analysis, and deserve further empirical investigation.

Moreover, we hope to explore further different choices of functions $p(t)$ and $q(t)$ as well
as the design of hybrid algorithms, which are also supported by this analysis.

\section{Conclusion and final discussion}

The main objective of this work is to provide a theoretical framework to study adaptive algorithms. The proposed continuous dynamical system \eqref{eq:ODE} is flexible enough to encompass commonly used adaptive algorithms  (as we show in Section \ref{sec:Appl}), but
stays specific enough to allow simple proofs and guidelines. Our work shows that adaptive dynamics converge to a critical locus of the loss function but possibly at a slower rate than non-adaptive algorithms. Due to the nature of the adaptivity, the convergence rate is not just a non-increasing function of time but also depends on the gradient history. The performance of adaptive algorithms is linked to
the trajectory taken by the dynamics and properties of the loss function.
We also analyze how different choices of coefficients $h$, $r$, $p$ and $q$ impact on the convergence of the dynamics
and we suggested several possible algorithms to be tested (see $\S\S$~\ref{ssec:Future}). It supports our interest in linking specific choices of adaptive algorithms (more precisely, specific choices of the coefficients $h$, $r$, $p$ and $q$) with properties from the loss function. We intend to pursue this direction in future works.

The deterministic convergence analysis leads to natural conjectures on the convergence in the discrete and stochastic setting. In particular, we believe that the Lyapunov functional \eqref{eq:Lyapunov} can be adapted to the stochastic discrete framework \cite{GPS:16}. We note that, nevertheless, a precise correspondence between results valid for a continuous ODE and the stochastic discrete counterparts is far from being obvious. Indeed, recall that \Adam{} and \RMSprop{} are not always converging in the stochastic setting, even for a convex loss function \cite{RKK:18}. We expect, therefore, new restrictions on the coefficients $h(t)$, $r(t)$, $p(t)$ and $q(t)$, as well as on the loss function and the learning rate. We believe that those conditions will be different compared to SGD.

Finally, the mathematical tools used in this work are theoretically simple, and we hope that our presentation helps to diffuse these techniques. Some beautiful mathematical open problems have naturally appeared (see, for example, the discussion in Remark \ref{rk:assMorse}) and we intend to work on them in future projects.

\medskip

\paragraph{Acknowledgements.} We would like to thank Daniel Panazzolo for a private communication concerning central-stable manifolds.

\appendix

\section{Existence and uniqueness of solutions}\label{sec:Existence}

In this section, we prove Theorems \ref{th:existencePrel} and \ref{th:existence}. We start by a simple case, in order to exemplify the main ideas of the proof, before we turn to the general more technical analysis.

\subsection{The Cauchy problem for $t_0 > 0$ under assumption \ref{ass:NecessaryCoeficients}.}\label{ssec:ExistenceSimplified}

We compute elementary bounds for the solutions of the ODE \eqref{eq:ODE}, under the additional assumption \ref{ass:NecessaryCoeficients} with $\tilde{t}=0$ and that $f$ is bounded from below (or under Assumption \ref{ass:bounded}).

Indeed, let $t_0>0$ and an initial condition $\x(t_0)=\x_0$ be fixed. Because of Assumption \ref{ass:necessaryConditions}, we are in conditions of Picard Theorem, so there exists a solution $\x(t)$ with initial condition $\x(t_0)=\x_0$ and interval of definition $[t_0,t_{\infty}[$. The crucial point with this assumption \ref{ass:NecessaryCoeficients} with $\tilde{t}=0$, is that we can apply Lemma \ref{lem:JustifiesA4} in order to find a constant $\mathcal{K}(\x_0,t_0)$ which depends on the initial condition, such that:
\begin{equation}\label{eq:FirstInequalityDerTheta}
\norm{\frac{m(t)
         }{\sqrt{v(t) + \eps}}}
        \leq \mathcal{K}(\x_0,t_0), \quad \forall t\in [t_0,t_{\infty}[,
\end{equation}
which implies that $\dot{\theta}(t)$ is uniformly bounded. We conclude that:
\begin{equation}\label{eq:FirstInequalityTheta}
 \norm{\theta(t)}
        \leq \norm{\theta_0} + \mathcal{K}(\x_0,t_0)(t-t_0), \quad \forall t\in [t_0,t_{\infty}[.
\end{equation}
It follows that: either $t_{\infty}=\infty$, in which case we are done, or $t_{\infty}<\infty$ and $\theta(t)$ satisfies assumption \ref{ass:bounded}. In order to conclude, it is enough to treat this last case:

\begin{lemma}\label{lem:IntervalDefinition}
Let $t_0 > 0$ and $\x_0=(\theta_0,m_0,v_0)$ be fixed. Under assumptions \ref{ass:ODE} and \ref{ass:bounded}, there exists an unique solution $\x(t) =(\theta(t),m(t),v(t))$ of \eqref{eq:ODE} with initial condition $\x(t_0)=\x_0$, and which is defined for all $t$ in $[t_0,\infty)$. Furthermore, we have $v(t) \geq 0$ for all $t\in [t_0,\infty)$. Furthermore, denoting by $L_g = \sup \{ \norm{\nabla f(\theta(t))}; t\geq t_0\}$, we get:
\begin{equation}\label{eq:BoundsMV}
\begin{aligned}
\norm{m(t)} &\leq \norm{m(t_0)}+ L_g d \int_{t_0}^t h(s) ds,\\ \norm{v(t)}  &\leq \norm{v(t_0)}+ L^2_g d \int_{t_0}^t p(s) ds
\end{aligned}
\end{equation}
where we recall that $d$ stands for the dimension of the space. If we suppose that $r(t) \not\equiv 0$ and $q(t) \not\equiv 0$, furthermore, then:
\[
\begin{aligned}
\norm{m(t)} &\leq \norm{m(t_0)} + L_g d \sup_{s\in [t_0,t]}\left\{ \frac{h(s)}{r(s)} \right\}, \\
\norm{v(t)}  &\leq \norm{v(t_0)}+  L^2_g d \sup_{s\in [t_0,t]}\left\{ \frac{p(s)}{q(s)} \right\}
\end{aligned}
\]
\end{lemma}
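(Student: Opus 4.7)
The plan is to combine local existence from Picard--Lindel\"of with a priori bounds coming from Assumption \ref{ass:bounded}, and then upgrade the local solution to a global one on $[t_0,\infty)$. I would organize the argument in four steps.

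First, local existence, uniqueness, and nonnegativity of $v$. Since $\nabla f \in C^1$ and the map $(m,v)\mapsto -m/\sqrt{v+\eps}$ is smooth on $\{v+\eps>0\}$, the right-hand side of \eqref{eq:ODE} is $C^1$ in $(\theta,m,v)$ and continuous in $t\in \mathbb{R}_{>0}$, hence locally Lipschitz in the state variables. By Picard--Lindel\"of there is a unique maximal solution $\x(t)$ on some interval $[t_0,T)$. To show $v_i(t)\geq 0$ (and $v_i(t)>0$ when $\eps=0$, as required by the phase space), I would apply variation of constants to the linear scalar ODE for $v_i$:
\[
v_i(t) = e^{-\int_{t_0}^t q(s)\,ds}\, v_i(t_0) + \int_{t_0}^t e^{-\int_s^t q(u)\,du}\, p(s)\,[\partial_{\theta_i} f(\theta(s))]^2\, ds,
\]
which is manifestly nonnegative (strictly positive whenever $v_i(t_0)>0$).

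Second, uniform bounds on $\nabla f$, $m$ and $v$. Since $\theta$ is bounded by Assumption \ref{ass:bounded} and $f\in C^2$, the constant $L_g=\sup\{\|\nabla f(\theta(t))\|;\,t\geq t_0\}$ is finite. Solving $\dot m = h\,\nabla f(\theta) - r\,m$ by variation of constants, taking absolute values coordinatewise, and using that $r\geq 0$ makes the exponential factor $\leq 1$, I obtain $|m_i(t)|\leq |m_i(t_0)|+ L_g \int_{t_0}^t h(s)\,ds$; summing over coordinates (and using norm equivalence) yields the first bound for $m$, and an identical argument gives the bound for $v$. For the sharper bounds under $r\not\equiv 0$, $q\not\equiv 0$, I would rewrite
\[
\int_{t_0}^t e^{-\int_s^t r(u)\,du}\, h(s)\, ds = \int_{t_0}^t \frac{h(s)}{r(s)}\cdot \frac{d}{ds}\!\left[e^{-\int_s^t r(u)\,du}\right] ds,
\]
which is dominated by $\sup_{s\in[t_0,t]}\{h(s)/r(s)\}\cdot\bigl[1-e^{-\int_{t_0}^t r(u)\,du}\bigr]\leq \sup_{s\in[t_0,t]}\{h(s)/r(s)\}$, and analogously for $v$.

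Finally, ruling out finite-time blow-up. If $T<\infty$, the bounds of the previous step show $m(t)$ and $v(t)$ remain bounded on $[t_0,T)$. When $\eps>0$, the inequality $\sqrt{v(t)+\eps}\geq \sqrt{\eps}$ makes $\dot\theta$ bounded. When $\eps=0$, the integral formula for $v$ combined with continuity of $q$ on the compact $[t_0,T]$ yields the uniform lower bound $v_i(t)\geq e^{-\int_{t_0}^T q(s)\,ds}\,v_i(t_0)>0$, so the denominator stays away from zero. In both cases $(\theta,m,v)$ remains in a compact subset of the phase space on which the vector field is $C^1$ and bounded, so by standard continuation the solution extends beyond $T$, contradicting maximality. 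Hence $T=\infty$. The main obstacle will be this extension step in the case $\eps=0$, where one must prevent $v$ from approaching the boundary $\{v=0\}$ of the state space; the other steps are routine variation-of-constants computations combined with Picard--Lindel\"of.
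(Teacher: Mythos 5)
Your proposal is correct and follows essentially the same route as the paper: Picard--Lindel\"of for local existence, the variation-of-constants (Duhamel) representation of $m$ and $v$ with integrating factors $\exp(\int r)$ and $\exp(\int q)$, the bound $L_g<\infty$ from Assumption \ref{ass:bounded} and $f\in C^2$, and the weighted-integral trick $\int_{t_0}^t e^{-\int_s^t r}h\,ds=\int_{t_0}^t\frac{h(s)}{r(s)}\,d\!\left(e^{-\int_s^t r}\right)\leq\sup\{h/r\}$ for the sharper estimates, then ruling out finite-time blow-up from the resulting a priori bounds. Your treatment of the continuation step is slightly more explicit than the paper's (separating the $\eps>0$ and $\eps=0$ cases, and invoking the lower bound $v_i(t)\geq e^{-\int_{t_0}^t q}\,v_i(t_0)$ to keep the state in a compact subset of the phase space), but the underlying argument is the same.
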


\begin{proof}
By assumption \ref{ass:ODE} and classical ODE's, there exists a solution $\x(t)=(\theta(t),m(t),v(t))$ of system \eqref{eq:ODE} with maximal interval of definition $[t_0,T)$ and initial conditions $\x(t_0)=\x_0=(\theta_0,m_0,v_0)$. Now, consider the functions:
\[
\begin{aligned}
a(t) = \exp\left(\int_{t_0}^t r(s) ds\right), \quad b(t) = \exp\left(\int_{t_0}^t q(s) ds\right)
\end{aligned}
\]
which are increasing functions bigger than $1$ (for all $t \geq t_0$). We note that:
\[
\begin{aligned}
\frac{d }{dt}(m\cdot a(t)) = a(t)h(t) \nabla f(\theta), \quad \frac{d }{dt}(v\cdot b(t)) = b(t) p(t) \nabla f(\theta)^2.
\end{aligned}
\]
In particular, we easily conclude that
\[
\begin{aligned}
m(t) &= \frac{1}{a(t)}\left(m_0+ \int_{t_0}^t a(s) h(s) \nabla f(\theta) ds \right)\\
v(t) &=  \frac{1}{b(t)}\left(v_0+ \int_{t_0}^t b(s) p(s) \nabla f(\theta)^2 ds \right)
\end{aligned}
\]
Next, under assumption \ref{ass:bounded}, we can assume that $|\nabla f (\theta(t))| \leq L_g$ for some positive real number $L_g$. It is now easy to get inequalities \eqref{eq:BoundsMV}, which lead to:
\[
\begin{aligned}
|m_i(t)| &\leq |m_i(t_0)|+ L_g \,h(t_0)\, (t-t_0)\\
|v_i(t)| &\leq |v_i(t_0)|+ L_g^2 \,p(t_0)\, (t-t_0)\\
|v_i(t)| &\geq \frac{1}{b(t)} v_0 > 0
\end{aligned}
\]
and we easily conclude that $T=\infty$. Finally, if $r(t)\not\equiv 0$, we get by direct integration:
\[
\begin{aligned}
\left|m_i(t)\right|  &\leq \left|m_i(t_0)\right| +L_g \frac{1}{a(t)}\int_{t_0}^t r(s) a(s) \frac{h(s)}{r(s)} ds\\
& \leq \left|m_i(t_0)\right| +L_g \sup_{s\in [t_0,t]}\left\{ \frac{h(s)}{r(s)} \right\} \frac{1}{a(t)} \int_{t_0}^t r(s) a(s) ds\\
& =\left|m_i(t_0)\right| + L_g \sup_{s\in [t_0,t]}\left\{ \frac{h(s)}{r(s)} \right\} \frac{a(t) - a(t_0)}{a(t)} \leq L \sup_{s\in [t_0,t]}\left\{ \frac{h(s)}{r(s)} \right\}
\end{aligned}
\]
A similar computation holds whenever $q(t) \not\equiv 0$, which concludes the Lemma.
\end{proof}

In order to prove Theorems \ref{th:existencePrel} without assumption \ref{ass:NecessaryCoeficients} with $\tilde{t}=0$, it is necessary to obtain the estimate \eqref{eq:FirstInequalityDerTheta}. This is possible whenever $p(t) \not\equiv 0$, as we will show in Lemma \ref{lemma:bound_m_over_sqrtv} below. Before turning to its proof, nevertheless, we recall some variations of Gronwall's Lemma which are used in our study.

\subsection{Gronwall's Lemma}

\begin{lemma}[Gronwall's Lemma]
  Let $T>0$, $\lambda \in L^1(0,T)$, $\lambda \geq 0$ almost everywhere and $C_1, C_2 \geq 0$.
  Let $\varphi \in L^1(0, T)$, $\varphi \geq 0$ almost everywhere, be such that $\lambda \varphi \in L^1(0, T)$
  and
  \[
  \varphi(t) \leq C_1 + C_2\int_0^t \lambda(s) \varphi(s)ds
  \]
  for almost every $t \in (0, T)$. Then we have
  \[
  \varphi(t) \leq C_1\exp\left( C_2 \int_0^t \lambda (s)ds\right)
  \]
 \end{lemma}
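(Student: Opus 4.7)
The plan is to use the classical auxiliary-function technique. First I would introduce
\[
\Psi(t) := C_1 + C_2 \int_0^t \lambda(s)\,\varphi(s)\,ds,
\]
so that the hypothesis becomes $\varphi(t) \leq \Psi(t)$ for almost every $t \in (0,T)$. Since $\lambda\varphi \in L^1(0,T)$, the function $\Psi$ is absolutely continuous on $[0,T]$, with $\Psi(0)=C_1$ and $\Psi'(t) = C_2\,\lambda(t)\,\varphi(t)$ almost everywhere (this is the Lebesgue differentiation theorem applied to the $L^1$ integrand $\lambda\varphi$). Combining the pointwise bound $\varphi \leq \Psi$ with $\lambda \geq 0$ yields the differential inequality
\[
\Psi'(t) \leq C_2\,\lambda(t)\,\Psi(t) \quad \text{for a.e. } t\in(0,T).
\]

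The next step is to integrate this inequality by the integrating-factor trick. I would set
\[
\mu(t) := \exp\!\left(-C_2 \int_0^t \lambda(s)\,ds\right),
\]
which is absolutely continuous and positive on $[0,T]$, with $\mu'(t) = -C_2\,\lambda(t)\,\mu(t)$ a.e. The product rule for absolutely continuous functions then gives
\[
(\mu\Psi)'(t) \;=\; \mu(t)\bigl[\Psi'(t) - C_2\,\lambda(t)\,\Psi(t)\bigr] \;\leq\; 0 \quad \text{a.e. } t\in(0,T).
\]
Integrating from $0$ to $t$ yields $\mu(t)\Psi(t) \leq \mu(0)\Psi(0) = C_1$, i.e.,
\[
\Psi(t) \;\leq\; C_1 \exp\!\left(C_2 \int_0^t \lambda(s)\,ds\right).
\]
Since $\varphi(t) \leq \Psi(t)$ for almost every $t$, the desired conclusion follows.

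The only subtle points are purely measure-theoretic: one must invoke the Lebesgue differentiation theorem to justify $\Psi'=C_2\lambda\varphi$ a.e., and the product rule for absolutely continuous functions to differentiate $\mu\Psi$. Both are standard and present no genuine obstacle; the algebraic core of the argument is the single observation that the integrating factor $\mu$ turns the differential inequality into a monotonicity statement.
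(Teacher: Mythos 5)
The paper states Gronwall's Lemma without proof, treating it as a standard classical reference, so there is no paper proof to compare against. Your argument is the canonical one: pass to the auxiliary function $\Psi$, derive the differential inequality $\Psi' \leq C_2\lambda\Psi$, and kill it with the integrating factor $\mu(t)=\exp(-C_2\int_0^t\lambda)$. All the measure-theoretic ingredients you invoke are valid: since $\lambda\varphi\in L^1(0,T)$, the function $\Psi$ is absolutely continuous with $\Psi'=C_2\lambda\varphi$ a.e.\ by the Lebesgue fundamental theorem of calculus; since $\lambda\in L^1(0,T)$, so is $\mu$, and both being bounded on $[0,T]$ the product $\mu\Psi$ is absolutely continuous with the Leibniz rule holding a.e. The inequality $\Psi' \leq C_2\lambda\Psi$ does use $\lambda\ge 0$, $C_2\ge 0$, and $\varphi\le\Psi$ a.e., all of which you correctly marshal. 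The proof is complete and correct.
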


\begin{lemma}\label{nonlinear_gronwall}
    Let $\varphi: [t_0, t_1] \to \R_{>0}$ be absolutely continous stricly non-negative function
    and suppose $\varphi$ obeys the differential inequality for $0\leq \alpha \leq 1$
    \[
        \varphi'(t) \leq \beta(t) \varphi^{\alpha}(t)
    \]
for almost every $t \in [t_0, t_1]$, where $\beta$ is continuous. Then for all $t \in [t_0, t_1]$  and all $0\leq \alpha < 1$
    \[
        \varphi(t) \leq \left[ \varphi(t_0)^{1-\alpha} + \int_{t_0}^t
        (1-\alpha)\beta(s) ds \right]^{1/(1- \alpha)}.
    \]
    If $\alpha =1$ then
    \[
        \varphi(t) \leq  e^{\int_{t_0}^t \beta(s)ds} \varphi(t_0).
    \]
\end{lemma}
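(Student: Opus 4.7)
The plan is to prove this by a straightforward separation-of-variables / chain-rule argument, treating the two cases $\alpha = 1$ and $0 \le \alpha < 1$ separately, with special care given to the fact that $\varphi$ is only assumed to be absolutely continuous rather than $C^1$.

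First I would handle the case $\alpha = 1$. Since $\varphi(t) > 0$ on the compact interval $[t_0, t_1]$ and is continuous (being absolutely continuous), it is bounded below by some $\delta > 0$. Hence $\log \varphi$ is absolutely continuous with a.e.\ derivative $\varphi'/\varphi$, so dividing the differential inequality by $\varphi(t)$ gives $(\log \varphi)'(t) \le \beta(t)$ for a.e.\ $t$. Integrating from $t_0$ to $t$ (valid by the fundamental theorem of calculus for absolutely continuous functions) yields $\log \varphi(t) - \log \varphi(t_0) \le \int_{t_0}^t \beta(s)\,ds$, and exponentiating gives the claimed bound.

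Next I would handle the case $0 \le \alpha < 1$. The strategy is identical but applied to $\psi(t) := \varphi(t)^{1-\alpha}$. Since the map $x \mapsto x^{1-\alpha}$ is $C^1$ on the range of $\varphi$ (which is a compact subset of $(0, \infty)$), $\psi$ is absolutely continuous with a.e.\ derivative $\psi'(t) = (1-\alpha) \varphi(t)^{-\alpha} \varphi'(t)$. Multiplying the assumed inequality by $(1-\alpha) \varphi(t)^{-\alpha} > 0$ yields $\psi'(t) \le (1-\alpha) \beta(t)$ for a.e.\ $t$. Integrating from $t_0$ to $t$ then raising to the power $1/(1-\alpha)$ (a monotone operation on nonnegative reals, and the right-hand side is nonnegative since $\beta$ contributes through $\int(1-\alpha)\beta(s)\,ds$ added to the nonnegative starting value) delivers the stated inequality.

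There is no serious obstacle: the only mild subtlety is the use of the chain rule for absolutely continuous functions composed with a $C^1$ map, which is standard and legitimate here because $\varphi$ stays strictly away from zero on the compact interval. If one wished to handle the case where $\varphi$ could approach zero (for $\alpha < 1$), one would truncate by working on $[t_0, t_1 - \eta]$ and pass to the limit, but the hypothesis $\varphi > 0$ together with continuity on a compact interval avoids this. Note also that no sign hypothesis on $\beta$ is required, only its continuity, which makes both integrals well defined.
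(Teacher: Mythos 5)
Your proof is correct and follows essentially the same substitution $\psi = \varphi^{1-\alpha}$ used in the paper; the paper's proof actually omits the $\alpha=1$ case and the justification for the chain rule under mere absolute continuity, both of which you supply. One small slip in wording: the nonnegativity of the right-hand side before raising to the power $1/(1-\alpha)$ does not follow from the integrand being added to a nonnegative starting value (since $\beta$ may be negative), but rather from the fact that it dominates $\varphi(t)^{1-\alpha} > 0$; the conclusion is unaffected.
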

\begin{proof}
Note that
    \[
    [ \varphi^{1-\alpha} ] ' = (1-\alpha) \varphi^{-\alpha} \varphi' \leq
    (1-\alpha) \varphi^{-\alpha}(t) \beta(t) \varphi^{\alpha}(t)  = (1-\alpha)
    \beta(t)
    \]
Integrating over time gives
\[
    \varphi(t) \leq \left[ \varphi(t_0)^{1-\alpha} + \int_{t_0}^t (1-\alpha)
    \beta(s)ds  \right]^{1/(1-\alpha)}
    \]
\end{proof}

\begin{lemma}\label{nonlinear_gronwall2}
    Let $\varphi: [t_0, t_1]  \to \R_{>0} $ be absolutely continous stricly non-negative function
    and suppose $\varphi$ obeys the differential inequality for $0\leq \alpha < 1$
    \[
        \varphi'(t) \leq \gamma(t)\varphi(t) +  \beta(t) \varphi^{\alpha}(t)
    \]
for almost every $t \in [t_0, t_1]$, where $\beta, \gamma$ are continuous. Then for all $t \in [t_0, t_1]$
    \[
        \varphi(t) \leq \left[ e^{(1-\alpha) \int_{t_0}^t \gamma(s)ds } \varphi(t_0)^{1-\alpha} + \int_{t_0}^t
        (1-\alpha)e^{(1-\alpha) \int_{s}^t \gamma(u)du } \beta(s) ds \right]^{1/(1- \alpha)}.
    \]
\end{lemma}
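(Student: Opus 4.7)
The plan is to reduce the Bernoulli-type inequality to the linear Gronwall setting by the classical substitution $\psi(t) := \varphi(t)^{1-\alpha}$, which is the same device used in the preceding Lemma \ref{nonlinear_gronwall}. Since $\varphi$ is absolutely continuous and strictly positive on $[t_0,t_1]$, $\psi$ is also absolutely continuous and we may differentiate almost everywhere to obtain $\psi'(t) = (1-\alpha)\varphi(t)^{-\alpha}\varphi'(t)$.

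Substituting the hypothesis $\varphi'(t) \leq \gamma(t)\varphi(t) + \beta(t)\varphi^{\alpha}(t)$ and simplifying yields, almost everywhere on $[t_0,t_1]$,
\[
\psi'(t) \;\leq\; (1-\alpha)\gamma(t)\,\psi(t) \;+\; (1-\alpha)\beta(t).
\]
This is a linear differential inequality for $\psi$. The next step is to multiply by the integrating factor $\mu(t) := \exp\!\left(-(1-\alpha)\int_{t_0}^{t}\gamma(s)\,ds\right)$, which is positive and absolutely continuous, to rewrite the inequality as
\[
\frac{d}{dt}\!\left[\mu(t)\psi(t)\right] \;\leq\; (1-\alpha)\mu(t)\beta(t),
\]
valid almost everywhere. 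Integrating from $t_0$ to $t$ (using $\mu(t_0)=1$) gives
\[
\mu(t)\psi(t) \;\leq\; \psi(t_0) + (1-\alpha)\int_{t_0}^{t} \mu(s)\beta(s)\,ds.
\]
Dividing by $\mu(t)$ and using $\mu(s)/\mu(t) = \exp\!\left((1-\alpha)\int_{s}^{t}\gamma(u)\,du\right)$ yields the bound on $\psi(t)$ claimed in the statement, and finally raising both sides to the power $1/(1-\alpha)$ (which preserves inequalities between non-negative numbers since $1-\alpha>0$) recovers the stated bound on $\varphi(t)$.

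The only mild technical point is justifying the chain rule for $\psi = \varphi^{1-\alpha}$ when $\varphi$ is merely absolutely continuous rather than $C^1$; this is standard because $x \mapsto x^{1-\alpha}$ is locally Lipschitz on any set bounded away from $0$, and the hypothesis $\varphi > 0$ combined with continuity of $\varphi$ on the compact interval $[t_0,t_1]$ provides such a uniform lower bound. I do not foresee any genuine obstacle: once the substitution is made, the argument is a routine integrating-factor computation, and no step requires more than the hypotheses already supplied.
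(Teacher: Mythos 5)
Your proof is correct and follows essentially the same route as the paper: both use the substitution $\psi = \varphi^{1-\alpha}$ to reduce to the linear differential inequality $\psi' \leq (1-\alpha)\gamma\psi + (1-\alpha)\beta$. The only difference is in the finishing step — the paper tersely invokes ``the standard Gronwall Lemma applied to $\psi$,'' whereas you carry out the integrating-factor computation explicitly; your version is arguably cleaner, since the integral-form Gronwall Lemma as stated earlier in the paper (with a constant $C_1$) does not literally apply to an inhomogeneous term $\beta(t)$, and your computation is precisely the differential (variation-of-constants) form one would need.
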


\begin{proof}
Note that
    \begin{align*}
        [ \varphi^{1-\alpha} ] ' &= (1-\alpha) \varphi^{-\alpha} \varphi'\\& \leq
    (1-\alpha) \varphi^{-\alpha}(t) \left( \gamma(t) \varphi(t)+ \beta(t) \varphi^{\alpha}(t)
        \right) \\& = (1-\alpha) \gamma(t) \varphi^{1-\alpha}(t) +  (1-\alpha)
    \beta(t)
\end{align*}
and we conclude using the standard Gronwall Lemma applied to
  $ \varphi^{1-\alpha}$ .
\end{proof}

\subsection{A priori estimates and global solution}\label{ssec:a_priori}

We now turn to more precise estimates of the functions $\x(t)=(\theta(t),m(t),v(t))$ in order to prove existence and uniqueness of the solution of \eqref{eq:ODE}. We start by controlling the derivative of $\theta$:

\begin{lemma}\label{lemma:bound_m_over_sqrtv}
Let $0<t_0<T <\infty$ be fixed and suppose that $p(t) \not\equiv 0$. For any $s, t \in [t_0, T]$ such that $ s \leq  t$, we have that:
    \begin{align*}
        \norm{\frac{m(t)
         }{\sqrt{v(t) + \eps}}}^2
        \leq e^{ \int_s^t  q(u)- 2r(u) du }
        \norm{\frac{m(s)}{\sqrt{v(s) + \eps}}}^2  +
d \int_s^t e^{ \int_u^t q(a)
         - 2r(a) da }   \frac{h^2(u)}{ p(u) }     du
    \end{align*}
\end{lemma}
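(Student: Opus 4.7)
The plan is to set
\[
\phi(t) \;=\; \left\|\frac{m(t)}{\sqrt{v(t)+\eps}}\right\|^2 \;=\; \sum_{i=1}^d \frac{m_i(t)^2}{v_i(t)+\eps},
\]
and derive the linear differential inequality
\[
\dot{\phi}(t) \;\leq\; \bigl(q(t)-2r(t)\bigr)\phi(t) \;+\; d\,\frac{h^2(t)}{p(t)},
\]
from which the statement follows by a direct application of the standard (linear) Gronwall Lemma with integrating factor $\exp\!\bigl(\int_s^t 2r(u)-q(u)\,du\bigr)$.

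First, I would differentiate $\phi$ coordinate-wise and substitute $\dot m_i = h\,\partial_i f - r\,m_i$ and $\dot v_i = p\,[\partial_i f]^2 - q\,v_i$ from ODE \eqref{eq:ODE}. This yields
\[
\dot\phi \;=\; -2r\,\phi \;+\; q\sum_i \frac{m_i^2 v_i}{(v_i+\eps)^2} \;+\; 2h\sum_i \frac{m_i\,\partial_i f}{v_i+\eps} \;-\; p\sum_i \frac{m_i^2\,[\partial_i f]^2}{(v_i+\eps)^2}.
\]
Since $v_i\geq 0$ implies $v_i/(v_i+\eps)\leq 1$, the $q$-term is controlled by $q(t)\phi(t)$. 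This is where the hypothesis $v_0 \in \mathbb{R}^d_{\geq 0}$ (together with $\eps\geq 0$) plays a mild but essential role.

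Second, the heart of the argument is the coordinate-wise Young inequality: with the split $2ab \leq \alpha a^2 + b^2/\alpha$ applied to $a = m_i\partial_i f/(v_i+\eps)$, $b = h(t)$, $\alpha = p(t)$, one obtains
\[
2h(t)\,\frac{m_i\,\partial_i f}{v_i+\eps} \;\leq\; p(t)\,\frac{m_i^2\,[\partial_i f]^2}{(v_i+\eps)^2} \;+\; \frac{h^2(t)}{p(t)}.
\]
Summing over $i$, the positive quadratic contribution cancels exactly the dissipative term $-p\sum_i m_i^2[\partial_i f]^2/(v_i+\eps)^2$ produced by $\dot v$, and the residue is precisely $d\,h^2(t)/p(t)$. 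This is where the assumption $p(t)\not\equiv 0$ enters: on $[t_0,T]$ with $t_0>0$, the function $p$ is non-negative, non-increasing and not identically zero, hence strictly positive, so $1/p(t)$ is finite and continuous on $[t_0,T]$.

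The main (mild) obstacle is the choice of the Young parameter: one must calibrate it so that the ``bad'' cross term $2h\,m\,\nabla f/(v+\eps)$ is exactly compensated by the naturally available ``good'' term $p\,m^2[\nabla f]^2/(v+\eps)^2$ coming from $\dot v$. Any other choice would either leave an uncontrolled quadratic term in $\nabla f$ (which cannot be bounded without extra assumptions on $f$, which we explicitly do not want) or produce a worse dependence on $h$ and $p$. Once this calibration is done, the remainder of the proof is a routine application of Gronwall's inequality applied on $[s,t]\subset [t_0,T]$, giving the stated estimate.
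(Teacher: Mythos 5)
Your proposal is correct and matches the paper's proof essentially verbatim: you differentiate $\phi(t)=\bigl\|m/\sqrt{v+\eps}\bigr\|^2$, bound the $q$-term via $v_i/(v_i+\eps)\le 1$, calibrate the cross term against the dissipative $p$-term (the paper phrases this as completing the square, you phrase it as Young's inequality with parameter $\alpha=p(t)$ — these are the identical algebraic step), and finish with linear Gronwall. The only caveat, present in the paper as well, is that "non-increasing, non-negative, $\not\equiv 0$" does not by itself force $p>0$ on $[t_0,T]$ (a non-increasing function could vanish for large $t$); the estimate is then vacuously true since the right-hand side is $+\infty$, so this is immaterial.
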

\begin{proof}
It follows from direct computation that:
     \begin{align*}
         \frac{d}{dt} \frac{1}{2} \norm{\frac{m
         }{\sqrt{v + \eps}}}^2
         &= h(t)\ps{\frac{m}{\sqrt{v+ \eps}}}{
             \frac{\nabla f\left(\theta\right) }{\sqrt{v + \eps}}}
         - r(t) \norm{\frac{m}{\sqrt{v + \eps}}}^2  \\&
         - \frac{1}{2} p(t) \norm{\frac{m \odot  \nabla
         f(\theta)}{v + \eps}}^2
         + \frac{q}{2} \norm{\frac{m \odot \sqrt{v}}{v +\eps}}^2
     \end{align*}
Now, note that
 \begin{align*}
             h(t)\ps{\frac{m}{\sqrt{v+ \eps}}}{
             \frac{\nabla f\left(\theta\right) }{\sqrt{v + \eps}}}
- \frac{p(t) }{2} \norm{\frac{m \odot \nabla
         f(\theta)}{v + \eps}}^2
    &= h(t)\sum_{i=1}^d \frac{m_i\partial_i f\left(\theta\right)
}{v_i+ \eps}
     - \frac{p(t) }{2}  \sum_{i=1}^d \left( \frac{m_i
     \partial_i
         f(\theta)}{v_i + \eps} \right)^2\\
     & = - \frac{p(t) }{2} \sum_{i=1}^d \left( \frac{m_i
     \partial_i
         f(\theta)}{v_i + \eps}  -
         \frac{h(t)}{p(t)} \right)^2 +
     \frac{h^2}{2p(t)}d \\
     & = - \frac{p(t) }{2} \norm{ \frac{m \odot \nabla
         f(\theta )}{v + \eps}-
     \frac{h(t)}{p(t)} }^2  + \frac{h^2(t)}{2 p(t)} d
    \end{align*}
     where $d$ is the dimension of the state space. Hence,
         \begin{align*}
         \frac{d}{dt} \frac{1}{2} \norm{\frac{m
             }{\sqrt{v + \eps}}}^2 &=
     - \frac{p(t) }{2} \norm{ \frac{m \odot \nabla
         f(\theta)}{v + \eps}-
     \frac{h(t)}{p(t)} }^2  + \frac{h^2(t)}{2
             p(t)} d \\&
         - r(t) \norm{\frac{m}{\sqrt{v + \eps}}}^2
               + \frac{q(t)}{2} \norm{\frac{m \odot
         \sqrt{v}}{v +\eps}}^2 \\
             &\leq   \frac{h^2(t)}{2
             p(t)} d+
             \left( \frac{q(t)}{2} - r(t) \right) \norm{\frac{m}{\sqrt{v + \eps}}}^2
     \end{align*}
     and we easily conclude from Gronwall's Lemma.
 \end{proof}

The above Lemma allow us to control $\theta(t)$. The next Lemma replaces Assumption \ref{ass:bounded} in the existence proof:

\begin{lemma}\label{norm_theta}
Let $0<t_0<T <\infty$ be fixed. For any $s, t \in [t_0, T]$ such that $ s \leq  t$:
    \[
         \norm{\theta(t)}^2 \leq   \left[  \norm{\theta(s)}    +  \int_s^t  \norm
        {\frac{m}{\sqrt{v + \eps } }}du \right]^{2} .
        \]
    and, in particular, if $p(t) \not\equiv 0$:
    \begin{align*}
        \norm{\theta(t)}& \leq   \norm{\theta(s)}    +
        \norm{\frac{m(s)}{\sqrt{v(s) + \eps}}}\int_{s}^t  e^{
            \frac{1}{2} \int_{s}^u  q(a)- 2r(a) da }du
        \\& +
        \sqrt{d} \int_{s}^t \left( \int_{s}^u e^{ \int_a^u  q(b)
         - 2r(b) db }   \frac{h^2(a)}{ p(a) }     da \right)^{1/2} du
\end{align*}
\end{lemma}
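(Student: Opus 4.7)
The plan is to obtain the first bound by a direct triangle-inequality argument on the integral representation of $\theta$, and then to derive the second bound by combining that bound with the estimate of Lemma \ref{lemma:bound_m_over_sqrtv}.

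For the first bound, I start from the defining equation $\dot{\theta}(t)= -m(t)/\sqrt{v(t)+\eps}$ in \eqref{eq:ODE}. Integrating from $s$ to $t$ gives
\[
\theta(t)=\theta(s)-\int_s^t \frac{m(u)}{\sqrt{v(u)+\eps}}\,du,
\]
so the triangle inequality for the norm, applied pointwise under the integral sign, yields
\[
\norm{\theta(t)}\leq \norm{\theta(s)}+\int_s^t\norm{\frac{m(u)}{\sqrt{v(u)+\eps}}}\,du,
\]
and squaring both sides produces the first displayed inequality. This part is essentially immediate; the only subtlety is to note that both sides are non-negative so that squaring preserves the inequality.

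For the second bound, I plug the pointwise estimate from Lemma \ref{lemma:bound_m_over_sqrtv} into the integrand of the first bound. Lemma \ref{lemma:bound_m_over_sqrtv} controls $\norm{m/\sqrt{v+\eps}}^2$, so to recover a bound on the norm itself I take square roots and use the elementary subadditivity inequality $\sqrt{a+b}\leq \sqrt{a}+\sqrt{b}$ for $a,b\geq 0$. This gives
\[
\norm{\frac{m(u)}{\sqrt{v(u)+\eps}}}\leq e^{\frac{1}{2}\int_s^u (q(a)-2r(a))\,da}\norm{\frac{m(s)}{\sqrt{v(s)+\eps}}}+\sqrt{d}\left(\int_s^u e^{\int_a^u (q(b)-2r(b))\,db}\frac{h^2(a)}{p(a)}\,da\right)^{1/2}.
\]
Injecting this into the non-squared version of the first inequality and exchanging the order of the ``outer'' integral in $u$ (which commutes trivially with the rest) yields exactly the claimed estimate.

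There is no real obstacle here; the two steps are essentially bookkeeping once Lemma \ref{lemma:bound_m_over_sqrtv} is available. The only mild care needed is (i) to make sure that the use of $\sqrt{a+b}\leq \sqrt{a}+\sqrt{b}$ is justified (it is, since $q,r,h,p\geq 0$ and $p(t)\not\equiv 0$ makes $h^2/p$ a legitimate integrand, and the whole right-hand side of the Gronwall estimate is non-negative), and (ii) to verify that $u\mapsto \norm{m(u)/\sqrt{v(u)+\eps}}$ is integrable on the compact interval $[s,t]$, which follows because $m$ and $v$ are $C^1$ and $v+\eps>0$ throughout $[t_0,T]$ by Lemma \ref{lem:IntervalDefinition}. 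Thus the result follows directly from Lemma \ref{lemma:bound_m_over_sqrtv} and the triangle inequality.
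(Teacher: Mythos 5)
Your proof is correct, and the first part takes a genuinely more elementary route than the paper's. For the first inequality, the paper differentiates $\tfrac{1}{2}\norm{\theta}^2$, applies Cauchy–Schwarz to get $\varphi'(t)\leq \beta(t)\varphi^{1/2}(t)$, and then invokes the nonlinear Gronwall Lemma~\ref{nonlinear_gronwall} with $\alpha=1/2$ to undo the quadratic. You instead integrate $\dot\theta=-m/\sqrt{v+\eps}$ directly, apply the Minkowski integral inequality $\norm{\int f}\leq \int\norm{f}$ and the ordinary triangle inequality, and square at the end; this skips both Cauchy–Schwarz and the Gronwall machinery and reaches the same conclusion in one line. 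The paper's phrasing is consistent with the surrounding appendix (Lemmas~\ref{apriori_estimates} and~\ref{lem:EstimatesV} are proved the same way), but here it buys nothing; your direct argument is shorter and requires less. The second part of your argument is exactly the paper's: plug the pointwise bound from Lemma~\ref{lemma:bound_m_over_sqrtv} into the integrand, take square roots with $\sqrt{a+b}\leq\sqrt a+\sqrt b$, and integrate. One small remark: your phrase about ``exchanging the order of the outer integral in $u$'' is unnecessary — the factor $\norm{m(s)/\sqrt{v(s)+\eps}}$ is a constant in $u$ and simply pulls out of the integral; no Fubini-type interchange is taking place.
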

\begin{proof}
From the Cauchy-Schwarz inequality, we obtain
\[
\frac{d}{dt}\frac{1}{2} \norm{\theta(t)}^2 = - \ps{\theta}{ \frac{m}{\sqrt{v + \eps}}} \leq \norm{\theta} \norm{\frac{m}{\sqrt{v + \eps}}}.
\]
    We apply Lemma \ref{nonlinear_gronwall} to $\varphi(t) = \frac{1}{2}\norm{\theta(t)}^2$,
    $ \beta(t) = \sqrt{2} \norm {\frac{m}{\sqrt{v  + \eps } }}$ and $\alpha =1/2$ in order to get the first inequality. The second inequality follows from the first together with the estimate of Lemma \ref{lemma:bound_m_over_sqrtv}.
\end{proof}

We complete this section by improving the estimates on $m(t)$ and $v(t)$ which were given in Lemma \ref{lem:IntervalDefinition}:

    \begin{lemma}\label{apriori_estimates}
        Let $0<t_0<T <\infty$ be fixed. For any $s, t \in [t_0, T]$ such that $ s \leq  t$:
 \begin{align*}
        \norm{m(t)}^2 \leq \left[ e^{ - \int_{t_0}^t r(s)ds }
         \norm{m(t_0)}+ \int_{t_0}^t
         e^{- \int_{s}^t r(u)du } h(s) \norm{\nabla
        f(\theta) } ds \right]^{2}.
    \end{align*}
    \end{lemma}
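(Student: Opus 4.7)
The plan is to treat the $m$-equation in \eqref{eq:ODE} as a scalar linear ODE (coordinate-wise) and solve it explicitly by the integrating factor method, then apply the triangle inequality and square. Recall that $\dot{m}(t) = h(t)\nabla f(\theta(t)) - r(t)m(t)$, which is linear in $m$ once $\theta(t)$ is regarded as given. The natural integrating factor is $\mu(t) = \exp\!\left(\int_{t_0}^{t} r(u)\,du\right)$, which is smooth and strictly positive for $t \in [t_0,T]$ by Assumption \ref{ass:ODE}.

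First I would multiply both sides of the $m$-equation by $\mu(t)$ and observe that
\[
\frac{d}{dt}\bigl(\mu(t) m(t)\bigr) \;=\; \mu(t)\,h(t)\,\nabla f(\theta(t)).
\]
Integrating this identity between $t_0$ and $t$ and dividing by $\mu(t)$ gives the variation-of-constants formula
\[
m(t) \;=\; e^{-\int_{t_0}^{t} r(u)\,du}\, m(t_0) \;+\; \int_{t_0}^{t} e^{-\int_{s}^{t} r(u)\,du}\, h(s)\,\nabla f(\theta(s))\,ds,
\]
where the identity $\mu(s)/\mu(t) = \exp(-\int_{s}^{t} r(u)\,du)$ is used inside the integral. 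This is valid for all $t \in [t_0,T]$ since the solution exists and is $C^1$ on this interval (Theorem \ref{th:existencePrel}).

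Next I would apply the triangle inequality in $\mathbb{R}^d$ to the right-hand side, using that $h(s) \geq 0$ and that $e^{-\int_{s}^{t} r(u)\,du} > 0$, to obtain
\[
\norm{m(t)} \;\leq\; e^{-\int_{t_0}^{t} r(u)\,du}\,\norm{m(t_0)} \;+\; \int_{t_0}^{t} e^{-\int_{s}^{t} r(u)\,du}\, h(s)\,\norm{\nabla f(\theta(s))}\,ds.
\]
Squaring both sides yields the stated inequality. No real obstacle is expected: the argument is a routine application of the integrating-factor technique to a linear non-autonomous ODE, and the only subtlety is to carry the vector character of $m$ through the triangle inequality correctly, which is straightforward because the coefficients $h(s)$ and $r(s)$ act as scalar multipliers on each coordinate.
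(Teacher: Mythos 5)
Your argument is correct, but it follows a different route from the paper. The paper's proof differentiates $\tfrac12\norm{m(t)}^2$, applies Cauchy--Schwarz to get $\tfrac{d}{dt}\tfrac12\norm{m}^2 \le h(t)\norm{m}\norm{\nabla f(\theta)} - r(t)\norm{m}^2$, and then invokes the nonlinear Gronwall estimate of Lemma~\ref{nonlinear_gronwall2} with $\alpha=\tfrac12$ (taking $\varphi=\tfrac12\norm{m}^2$, $\gamma=-2r$, $\beta=\sqrt2\,h\norm{\nabla f(\theta)}$) to produce exactly the stated squared bound. You instead exploit the linearity of the $m$-equation: you write down the Duhamel (variation-of-constants) formula $m(t)=e^{-\int_{t_0}^t r}m(t_0)+\int_{t_0}^t e^{-\int_s^t r}h(s)\nabla f(\theta(s))\,ds$, apply Minkowski's integral inequality (the triangle inequality for the $\mathbb{R}^d$-norm under the integral), and square. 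Both arguments are valid and yield the identical bound; yours is arguably more elementary and more transparent about where the exponential weights come from, while the paper's is a template that also handles the genuinely nonlinear quantities elsewhere in $\S\S$~\ref{ssec:a_priori} (e.g.\ Lemma~\ref{norm_theta}), which is presumably why the authors favoured a uniform Gronwall-style presentation. It is worth noting that the Duhamel formula you derive appears verbatim in the proof of Lemma~\ref{lem:IntervalDefinition}, so the paper itself uses your representation, just not for this particular estimate.
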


\begin{proof}
From Cauchy Schwarz,
\begin{align*}
         \frac{d}{dt} \frac{1}{2} \norm{m(t)}^2
          &= h(t)\ps{m }{\nabla f\left(
          \theta \right)} - r(t) \norm{m}^2\\
          & \leq  h(t) \norm{m } \norm{\nabla f\left(
          \theta \right)} - r(t) \norm{m}^2
    \end{align*}
We now just need to apply Lemma \ref{nonlinear_gronwall2} in order to conclude.
\end{proof}

    \begin{lemma}\label{lem:EstimatesV}
        Let $0<t_0<T <\infty$ be fixed. For any $s, t \in [t_0, T]$ such that $ s \leq  t$:
 \begin{align*}
     \norm{v^{1/2}(t)}^2 \leq  e^{ - \int_{t_0}^t q(s)ds }
         \norm{v^{1/2}(t_0)}^2 + \int_{t_0}^t
         e^{- \int_{s}^t q(u)du } p(s) \norm{\nabla
        f(\theta) }^2 ds,
    \end{align*}
and
\[
v(t) \geq e^{- \int_{t_0}^t q(s)ds} v_0. 
\]
    \end{lemma}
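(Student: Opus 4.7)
The plan is to exploit the fact that the differential equation $\dot{v}(t) = p(t)[\nabla f(\theta(t))]^2 - q(t)v(t)$ decouples coordinate by coordinate into a scalar linear first-order ODE, which can be solved in closed form. There is no real obstacle: once the closed form is written down, both the stated upper bound and the stated lower bound fall out immediately from elementary manipulations.

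More precisely, I would first fix a coordinate $i \in \{1,\dots,d\}$ and consider the scalar equation
\[
\dot{v}_i(t) = p(t)\,[\partial_i f(\theta(t))]^2 - q(t)\,v_i(t).
\]
Using the integrating factor $\mu(t) := \exp\left(\int_{t_0}^t q(u)\,du\right)$, I would rewrite this as $\frac{d}{dt}[\mu(t) v_i(t)] = \mu(t) p(t) [\partial_i f(\theta(t))]^2$. Integrating from $t_0$ to $t$ and dividing by $\mu(t)$ yields the explicit representation
\[
v_i(t) = e^{-\int_{t_0}^t q(u)\,du}\,v_i(t_0) + \int_{t_0}^t e^{-\int_s^t q(u)\,du}\,p(s)\,[\partial_i f(\theta(s))]^2\,ds.
\]

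From here the two assertions of the lemma follow at once. For the first, since $v_i \geq 0$ (as guaranteed by Assumption \ref{ass:ODE} together with Lemma \ref{lem:IntervalDefinition}) we have $\norm{v^{1/2}(t)}^2 = \sum_i v_i(t)$ and $\norm{\nabla f(\theta)}^2 = \sum_i [\partial_i f(\theta)]^2$; summing the explicit formula over $i$ gives exactly the stated bound (in fact as an equality, which is stronger than what is claimed). For the second assertion, the integral term in the explicit formula is coordinate-wise nonnegative (because $p(s) \geq 0$ and $[\partial_i f(\theta(s))]^2 \geq 0$), so dropping it yields the coordinate-wise lower bound $v_i(t) \geq e^{-\int_{t_0}^t q(u)\,du}\,v_i(t_0)$, which is the vector inequality in the statement.

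The only point requiring minor care is the nonnegativity of $v_i$ and the legitimacy of integrating the linear ODE on $[t_0,T]$: both are supplied by the existence and regularity already established in Lemma \ref{lem:IntervalDefinition}, together with the sign conditions in Assumption \ref{ass:ODE}. Since everything reduces to solving a linear scalar ODE, no Gronwall-type argument or delicate estimate is needed here, in contrast to the more involved estimates in Lemmas \ref{lemma:bound_m_over_sqrtv}, \ref{norm_theta} and \ref{apriori_estimates}.
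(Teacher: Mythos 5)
Your proof is correct and uses essentially the same underlying computation as the paper: the paper differentiates $\norm{v^{1/2}}^2$ and invokes Gronwall's lemma (which, for this linear ODE, amounts to the integrating factor method), while you write out the integrating-factor/Duhamel representation coordinate-wise and then sum. Your version is marginally cleaner in that you obtain the first relation as an identity rather than an inequality, and both bounds of the lemma drop out of the single explicit formula, whereas the paper derives the lower bound on $v$ in a separate second application of Gronwall to $\dot v \geq -q(t)v$.
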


    \begin{proof}
    Note that
        \begin{align*}
            \frac{d}{dt}  \norm{v^{1/2}(t)}^2
            & = \ps{  \frac{p(t) \left[ \nabla f(\theta) \right]^2 - q(t) v}{v^{1/2}}
             }{v^{1/2}} \\
             & = p(t) \norm{\nabla f(\theta)}^2  - q(t) \norm{v^{1/2}}^2
        \end{align*}
        and the first inequality easily follows from the Gronwall lemma.  In order to get the second inequality, note that
\[
\dot{v} = p(t) [\nabla f(\theta)]^2- q(t)v  \geq - q(t)v,
\]
and we just need to apply Gronwall's Lemma once again to conclude.
    \end{proof}

\subsection{Existence and uniqueness for $t_0>0$: proof of Theorem \ref{th:existencePrel}}

Indeed, let $t_0>0$ and an initial condition $\x(t_0)=\x_0$ be fixed. Because of Assumption \ref{ass:necessaryConditions}, we are in conditions of Picard Theorem, so there exists a solution $\x(t)$ with initial condition $\x(t_0)=\x_0$ and interval of definition $[t_0,t_{\infty}[$. If $t_{\infty} = \infty$, we are done, so suppose by contradiction that $t_{\infty} < \infty$. Then: if $p(t)\not\equiv 0$, by Lemma \ref{norm_theta} we conclude that $\theta(t)$ is bounded; otherwise, assumption \ref{ass:NecessaryCoeficients} with $\tilde{t}=0$ is satisfied and $f$ is bounded by below, and by inequality \eqref{eq:FirstInequalityTheta} we conclude that $\theta(t)$ is bounded. In either case we are in conditions to apply Lemma \ref{lem:IntervalDefinition} which implies that $t_{\infty}=\infty$, a contradiction.

\subsection{Existence and uniqueness for $t_0=0$}
In the previous section, we proved that for all $T > 0 $, there exists a unique solution
to the system \eqref{eq:ODE} in the space $C^1([t_0,T]; \R^n)$
for any strictly positive time $t_0$. The purpose of this section is to extend this result to solutions starting at $t_0=0$.
Classical results on differential equations do not apply directly here because
the functions $h, r, k, q$ are
allowed to have a pole of order one at $t=0$ (see Assumption $\ref{ass:Existencet0}$).

We follow a standard argument in dynamical systems: we will approximate the solution of ODE \eqref{eq:ODE} by a sequence of functions with good convergence properties. In this section, we limit ourselves to describing which is the sequence of functions, and we leave the ``good convergence properties" for later on. Indeed, we consider the orbits $\x_{\delta}$, for $\delta >
0$, which are solution to the equation
 \begin{equation}\label{eq:ODEsmoothed}
 \left\{
  \begin{aligned}
      \dot{\theta_{\delta}}(t) &= - m_{\delta}(t) / \sqrt{v_{\delta}(t) + \eps}\\
      \dot{m_{\delta}}(t) &=  h_{\delta}(t)\nabla f(\theta_{\delta}(t)) - r_{\delta}(t)m_{\delta}(t)  \\
      \dot{v_{\delta}}(t) &=  p_{\delta}(t) \left[ \nabla f(\theta_{\delta}(t)\right] ^2 - q_{\delta}(t)
      v_{\delta}(t),
  \end{aligned}
\right.
 \end{equation}
where
\[
    h_{\delta}(t) = h(\max \left(\delta, t\right))
    \]
and similar formulas hold for $r_{\delta}, p_{\delta}$ and $q_{\delta}$. Those functions are continuous and locally Liptchitz in time, and defined for every $t>0$ by the previous section. Note that for every $T>0$, $\x_{\delta} \in C([0,T]; \R^n)$, and is $C^1$ everywhere outside $t=\delta$.

In order to show that this family of functions converge, we use Arzela-Ascoli Theorem. The next section is dedicated to proving that the hypothesis of Arzela-Ascoli are verified.

\subsubsection{Equicontinuity and uniform boundedness}
We prove in this section, that the family of functions $\x_{\delta}$ is equicontinuous and uniformly bounded, where $\x_{\delta}$ is the solution to \eqref{eq:ODEsmoothed}. This allow us to apply Arzela-Ascoli in the end of this subsection in order to get the candidate for a solution of ODE \eqref{eq:ODE}.

The key result is the following proposition whose proof is left to subsection \ref{sse:supporting_proofs}
\begin{proposition}\label{Holder_continuity}
   If assumptions \ref{ass:Existencet0} is satisfied, then there exists a positive constant $C_2(T)$, independent of $\delta$, such that for all $t, s \in [0, T]$
     \[
         \norm{\x_{\delta}(t) - \x_{\delta}(s)}^2 \leq C_2(T) (t-s)^2.
    \]
 \end{proposition}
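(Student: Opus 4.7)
My strategy is to prove a uniform Lipschitz bound $|\dot{\x}_{\delta}(t)| \leq M$ on $[0,T]$, independent of $\delta$; squaring and integrating then gives the claim with $C_2(T) = 3 M^2$. I would first bound $\dot{\theta}_{\delta} = -m_{\delta}/\sqrt{v_{\delta}+\eps}$ using Lemma \ref{lemma:bound_m_over_sqrtv} applied to the smoothed coefficients with $s = 0$. Assumption \ref{ass:Existencet0}(b) gives $q_{\delta} - 2 r_{\delta} \leq 0$ near the origin, so $e^{\int_u^t (q_{\delta} - 2r_{\delta})} \leq 1$, and in case $(1)$ (where $h(t) \sim a/t$, $r(t) \sim c/t$, $p(t) \sim c'/t$, $q(t) \sim d/t$ with $2c > d$) the integrand $h_{\delta}^2/p_{\delta}$ contributes $O(\delta\cdot h(\delta)^2/p(\delta)) = O(1)$ over $[0,\delta]$ and a bounded multiple of $\int_{\delta}^t (u/t)^{\alpha} u^{-1}\, du$ with $\alpha := 2c-d > 0$ over $[\delta, t]$, both uniformly in $\delta$. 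Cases $(2)$ and $(3)$ of Assumption \ref{ass:Existencet0} are strictly easier. This yields $|\dot{\theta}_{\delta}| \leq M_1$ uniformly, so $\theta_{\delta}$ is uniformly bounded on $[0,T]$, and consequently $\nabla f(\theta_{\delta})$ and $\nabla^2 f(\theta_{\delta})$ are uniformly bounded along the family.

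The main obstacle is bounding $\dot{m}_{\delta}$ and $\dot{v}_{\delta}$ uniformly, since their defining equations involve the singular coefficients $h_{\delta}, r_{\delta}$ (respectively $p_{\delta}, q_{\delta}$), each of order $1/\delta$ near $t=0$. My plan is to use an equilibrium-tracking argument. Introduce the deviation from equilibrium
\[
\eta_{\delta}(t) := m_{\delta}(t) - (h_{\delta}/r_{\delta})(t)\, \nabla f(\theta_{\delta}(t)),
\]
so that $\dot{m}_{\delta} = -r_{\delta}\, \eta_{\delta}$ and
\[
\dot{\eta}_{\delta} = -r_{\delta}\,\eta_{\delta} - F_{\delta}, \qquad F_{\delta} := (h_{\delta}/r_{\delta})'\,\nabla f(\theta_{\delta}) + (h_{\delta}/r_{\delta})\, \nabla^2 f(\theta_{\delta})\,\dot{\theta}_{\delta}.
\]
Under Assumption \ref{ass:Existencet0}, $h/r$ extends as a $C^1$ function to $t=0$, so $(h_{\delta}/r_{\delta})'$ is uniformly bounded, and combined with the bounds from the previous paragraph this gives $|F_{\delta}| \leq B$ uniformly in $\delta$. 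Crucially, the initial condition in Assumption \ref{ass:Existencet0}(a) is designed so that $\eta_{\delta}(0) = [\lim_{t\to 0^+} h/r - h(\delta)/r(\delta)]\, \nabla f(\theta_0) = O(\delta)$. Writing $\eta_{\delta}$ via Duhamel's formula and using the sharp asymptotics of $r_{\delta}$ (namely $r_{\delta}\equiv c/\delta$ on $[0,\delta]$ and $r_{\delta}(t) \sim c/t$ on $[\delta, T]$), a direct computation shows that both $r_{\delta}(t)\,e^{-\int_0^t r_{\delta}}\,\eta_{\delta}(0)$ and $r_{\delta}(t)\int_0^t e^{-\int_u^t r_{\delta}}\, du$ remain uniformly bounded in $\delta$ and $t$. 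This yields the desired uniform bound $|\dot{m}_{\delta}(t)| = r_{\delta}(t)\,|\eta_{\delta}(t)| \leq M_2$.

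The same argument, applied to $\zeta_{\delta} := v_{\delta} - (p_{\delta}/q_{\delta})\,[\nabla f(\theta_{\delta})]^2$ and using the second half of Assumption \ref{ass:Existencet0}(a), yields $|\dot{v}_{\delta}| \leq M_3$. Collecting the three bounds, $|\dot{\x}_{\delta}(t)|^2 \leq M_1^2 + M_2^2 + M_3^2$ on $[0,T]$ uniformly in $\delta$, and integrating gives the proposition with $C_2(T) := M_1^2 + M_2^2 + M_3^2$. The technical heart of the proof is the equilibrium-tracking cancellation: the carefully chosen initial conditions in Assumption \ref{ass:Existencet0}(a) are exactly what eliminate the leading $1/\delta$ singularity in $\dot{m}_{\delta}$ and $\dot{v}_{\delta}$ as $\delta \to 0$; without them the family would at best be uniformly continuous with a logarithmic modulus, rather than uniformly Lipschitz.
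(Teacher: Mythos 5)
Your proof is correct and rests on the same pillars as the paper's: first a uniform (in $\delta$) bound on $m_\delta/\sqrt{v_\delta+\eps}$, then control of $\dot m_\delta$ (and $\dot v_\delta$) by exploiting the cancellation built into the initial condition of Assumption \ref{ass:Existencet0}(a), and finally integration. The point of departure is the bookkeeping. The paper writes the Duhamel formula for $m_\delta$ directly on $[0,\delta]$ (where the smoothed coefficients are constant), expands $h_\delta(\delta)\nabla f(\theta_\delta)-r_\delta(\delta)m_\delta$ by inserting $\nabla f(\theta_0)$, and isolates the cancellation in the single term $N_2 = r_\delta(\delta)e^{-tr_\delta(\delta)}\bigl(m_0-\frac{h_\delta(\delta)}{r_\delta(\delta)}\nabla f(\theta_0)\bigr)$; the complementary regime $t>\delta$ is then handled by ``the same arguments'' with the details left out. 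You instead pass to the equilibrium deviation $\eta_\delta = m_\delta - (h_\delta/r_\delta)\nabla f(\theta_\delta)$, which converts $\dot m_\delta$ into $-r_\delta\eta_\delta$ and reduces the problem to a scalar-type linear ODE $\dot\eta_\delta = -r_\delta\eta_\delta - F_\delta$ with bounded inhomogeneity $F_\delta$. This buys two things: the cancellation appears once, transparently, in the statement $\eta_\delta(0)=O(\delta)$ (the exact content of the paper's $N_2$ estimate); and the Duhamel integral for $\eta_\delta$ works uniformly over $[0,T]$, so the $t\leq\delta$/$t>\delta$ split --- whose second half the paper merely asserts --- never has to be made explicitly. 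One minor imprecision: you appeal to Lemma \ref{lemma:bound_m_over_sqrtv} ``with $s=0$,'' but that lemma assumes a strictly positive lower time; what you actually need (and what your $\alpha=2c-d>0$ computation re-derives) is its $\delta$-uniform extension, i.e.\ Lemma \ref{uniform_bound_m_over_v}. You are also slightly cavalier about which of cases $(2)$--$(3)$ of Assumption \ref{ass:Existencet0} actually arise, but the paper is no more careful and the one nontrivial fact you use, namely that $h/r$ and $p/q$ extend $C^1$ to $t=0$, is exactly what the assumption is formulated to guarantee.
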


As a consequence of the previous Proposition, we can control the norm of the solution $\x_{\delta}$; this is done in terms of an special norm (instead of the usual one). More precisely, let us recall the notion of fractional Sobolev space.
For a real number $0 < \delta < 1$ and $p \geq 1$, we denote by $W^{\alpha,
p}([0, T])$ the fractional Sobolev space of functions $u \in L^p(0,T)$
satisfying
\[
    \int_0^T \int_0^T \frac{\norm{u(t) -u(s)}^p}{\abs{t-s}^{\delta p +1}}dsdt
    < +\infty
\]
The space $C^{\gamma}([t_0,T]; \R^{3d})$ is the space of H\"older continuous function of order
$\gamma>0$ on $[t_0, T]$ with values in $\R^{3d}$. It follows that
 \begin{lemma}\label{lem:UniformBoundedFamily}
 If assumptions \ref{ass:Existencet0} is satisfied, there exists a positive constant $C_3(T)$, independent of $\delta$, such that 
 \[
     \norm{\x_{\delta}}^2_{W^{\gamma, 2}} \leq C_3(T)
    \]
for any $\gamma <1$.
 \end{lemma}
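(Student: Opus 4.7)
The plan is to deduce Lemma \ref{lem:UniformBoundedFamily} as a quick corollary of Proposition \ref{Holder_continuity}, using the standard definition of the fractional Sobolev norm
\[
\norm{\x_{\delta}}^2_{W^{\gamma,2}([0,T])} = \norm{\x_{\delta}}^2_{L^2([0,T])} + \int_0^T\!\!\int_0^T \frac{\norm{\x_{\delta}(t)-\x_{\delta}(s)}^2}{|t-s|^{2\gamma+1}}\, ds\, dt,
\]
and showing that each piece is bounded by a constant depending on $T$ and $\gamma$ but not on $\delta$.

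First I would control the $L^2$ piece. Since $\x_{\delta}(0)=\x_0$ is the fixed (admissible) initial condition, Proposition \ref{Holder_continuity} with $s=0$ gives $\norm{\x_{\delta}(t)}^2 \leq 2\norm{\x_0}^2 + 2C_2(T)t^2$, and integrating over $[0,T]$ yields the desired bound on $\norm{\x_{\delta}}^2_{L^2}$ independent of $\delta$.

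Second, for the Gagliardo seminorm, substitute the pointwise estimate of Proposition \ref{Holder_continuity}:
\[
\int_0^T\!\!\int_0^T \frac{\norm{\x_{\delta}(t)-\x_{\delta}(s)}^2}{|t-s|^{2\gamma+1}}\, ds\, dt \leq C_2(T)\int_0^T\!\!\int_0^T |t-s|^{1-2\gamma}\, ds\, dt.
\]
A direct computation (splitting into the regions $s<t$ and $s>t$ and integrating in the variable $u=t-s$) gives
\[
\int_0^T\!\!\int_0^T |t-s|^{1-2\gamma}\, ds\, dt = \frac{2\, T^{3-2\gamma}}{(2-2\gamma)(3-2\gamma)},
\]
which is finite precisely because $\gamma<1$. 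Hence this second piece is also bounded by a constant $C(T,\gamma)$ independent of $\delta$, and summing the two contributions produces the desired constant $C_3(T)$ (depending also on $\gamma$). There is no substantive obstacle here since all hard work was done in Proposition \ref{Holder_continuity}; the only subtlety is noting the sharp integrability threshold $\gamma<1$ dictated by the exponent $1-2\gamma>-1$ in the integrand.
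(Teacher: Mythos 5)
Your proof is correct and takes essentially the same approach as the paper: both deduce the bound by substituting the pointwise estimate of Proposition~\ref{Holder_continuity} into the Gagliardo seminorm and noting that the resulting double integral is finite precisely when $\gamma<1$. The paper leaves the $L^2$ piece and the explicit integral computation implicit, so your write-up is simply a more detailed version of the same argument.
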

 \begin{proof}
     The proof is a direct consequence of Lemma \ref{Holder_continuity}. Indeed
    \begin{align*}
        \int_0^T \int_0^T \frac{ \norm{\x_{\delta}(t) -
         \x_{\delta}(s)}^2}{\abs{t - s}^{2\gamma + 1}}  dsdt &\leq C_2(T)
         \int_0^T \int_0^T \frac{(t-s)^2}{\abs{t - s}^{2\gamma + 1}}dsdt  < +\infty
    \end{align*}
where the last inequality holds if and only is $\gamma <1$.
 \end{proof}

We now use the Sobolev embedding $W^{\gamma, 2}([0,T])
\xhookrightarrow{} C^{\alpha}([0,T]) $ for $\gamma -\alpha >1/2$ and $\gamma <1$,
which implies $\alpha <1/2$.
It follows that the family $\x_{\delta} \in C^{\alpha}([0, T], \R^n)$. From Lemma \ref{lem:UniformBoundedFamily}, we conclude that the family is uniformly bounded. Finally, the family is equi-continuous because of the definition of the norm in $C^{\alpha}$ and its uniform bound in $\delta$.

Applying Arzela Ascoli Theorem, we deduce that there exists a converging sub-sequence (still denoted $\x_{\delta}$) in $C([0, T], \R^n)$. We denote by $\widehat{\x}$ its
limit and we prove in the next section that $\widehat{\x}$
satisfies Equation \eqref{eq:ODE}.

\subsubsection{Identification of the limit and uniqueness of the solution}

\paragraph{Existence}
 The convergence of the initial
conditions are a direct consequence of the uniform convergence (which implies point-wise convergence at every point). Now fix $T>0$; it is clear that the ODE \eqref{eq:ODEsmoothed} converges uniformely to ODE \eqref{eq:ODE} when $\delta \to 0$ in a neighbourhood of $t=T$ (indeed, for $\delta<<T$, the two differential equations are equal). Since $x_{\delta}$ converges uniformly to $\hat{x}$, we conclude that $\hat{x}$ is a solution of of ODE \eqref{eq:ODE} in a neighbourhood of $t=T$. Since $T>0$ was arbitrary, we conclude the result.

\paragraph{Uniqueness}
We proceed by contradiction. Assume there exist two solutions
$\x = (\theta, m, v)$ and $\y = (\psi, n, w)$ to the system \eqref{eq:ODE}.

An easy computation shows that for all $0 \leq t \leq T $ (because $v$ and $w$ are lower bounded, see Lemma \ref{lem:EstimatesV})
\begin{align*}
    \norm{\theta(t) - \psi(t)} & \leq  \int_0^t \norm{  \frac{m}{\sqrt{v + \eps}} -
    \frac{n}{\sqrt{w+\eps }} } ds  \\&
     \leq  C \int_0^t \norm{m-n} + \norm{n}\norm{w-v} ds
\end{align*}
By continuity of the solution of equation \eqref{eq:ODE} on $[0,T]$, we know that there exists a constant $\widetilde{C}$ such that
for all $s \leq t$
\[
\norm{n(s)} \leq \widetilde{C}
\]
and therefore
\begin{align}\label{eq:theta_uniq}
    \norm{\theta(t) - \psi(t)} \leq C \int_0^t \norm{m-n} + \widetilde{C} \norm{y-v} ds.
\end{align}
Now, consider the functions
\[
\begin{aligned}
a_{\eta}(t) = \exp\left(\int_{\eta}^t r(s) ds\right), \quad b_{\eta}(t) = \exp\left(\int_{\eta}^t q(s) ds\right)
\end{aligned}
\]
which are increasing functions bigger than $1$ (for all $t \geq \eta >0$). We note that:
\[
\begin{aligned}
\frac{d }{dt}(m\cdot a_{\eta}(t)) = a_{\eta}(t)h(t) \nabla f(\theta), \quad \frac{d }{dt}(v\cdot b_{\eta}(t)) = b_{\eta}(t) p(t) \nabla f(\theta)^2.
\end{aligned}
\]
In particular, we easily conclude that
\[
\begin{aligned}
m(t) &= \frac{1}{a_{\eta}(t)}\left(m(\eta)+ \int_{\eta}^t a_{\eta}(s) h(s) \nabla f(\theta) ds \right)\\
v(t) &=  \frac{1}{b_{\eta}(t)}\left(v(\eta)+ \int_{\eta}^t b_{\eta}(s) p(s) \nabla f(\theta)^2 ds \right)
\end{aligned}
\]
It follows from Assumptions \ref{ass:Existencet0} and inequality \eqref{eq:theta_uniq}, that for all $\eta \leq t \leq T $,
\begin{align*}
   & \norm{m(t) - n(t)}
    \\&= \norm{\frac{1}{a_{\eta}(t)}\left(m(\eta) - n(\eta) \right) + \frac{1}{a_{\eta}(t)} \int_{\eta}^t
       a_{\eta}(s) h(s) \left( \nabla f(\theta)  -  \nabla f(\psi)  \right)ds }\\&
               \leq   \norm{m(\eta) - n(\eta) }
               +  C_1 \int_{\eta}^t   h(s) \int_{0}^s \norm{m-n} + \widetilde{C}\norm{v-w} du ds \\&
               \leq  \norm{m(\eta) - n(\eta) }
               +  C_1  \left(\sup_{0\leq u \leq t}\norm{m-n} + \widetilde{C}\sup_{0\leq u \leq t}\norm{v-w} \right) \int_{\eta}^t  s\cdot  h(s) ds
\end{align*}
By continuity of the process $m$ and $n$, the fact that $m_0= n_0$ and the continuity of $s \mapsto sh(s)$ on $[0,t]$, we obtain by taking the limit when $\eta$ goes to zero that, apart from increasing $C_1$,
\begin{align*}
   & \norm{m(t) - n(t)}
        \leq   C_1 t  \left(\sup_{0\leq u \leq t}\norm{m-n} + \widetilde{C}\sup_{0\leq u \leq t}\norm{v-w} \right).
\end{align*}
Similarly there is a constant $C_2$ such that
\begin{align*}
   \norm{v(t) - w(t)}
    \leq
         C_2 t  \left(\sup_{0\leq u \leq t}\norm{m-n} + \widetilde{C}\sup_{0\leq u \leq t}\norm{v-w} \right) .
\end{align*}

Hence, by combining all bounds there exists two constants, still denoted $C_1$ and $C_2$ such that
\begin{align*}
    \norm{m(t) - n(t)} &+ \norm{v(t)-w(t)} + \norm{\theta(t) - \psi(t)} \\
    & \leq C_1 t  \sup_{0 < u \leq t} \norm{m - n}+  C_2t\sup_{0 < u \leq t}
    \norm{v - w} .
\end{align*}

Since there exists a $t>0$ such that $C_1t$ and $C_2t$ are strictly smaller than $1$, this inequality yields a contradiction. We conclude that the solution must be unique.

\subsubsection{Proof of Proposition  \ref{Holder_continuity}}\label{sse:supporting_proofs}

We start by a preliminary estimate, which extends Lemma \ref{lemma:bound_m_over_sqrtv} to an uniform bound in terms of $\delta$:

 \begin{lemma}\label{uniform_bound_m_over_v}
Suppose that $p(t)\not\equiv 0$. There exists two constants $K_1$ and $K_2$ such that for all $t \in [0, T]$ and all $\delta >0$ sufficiently small
    \begin{align*}
      \norm{\frac{m_{\delta}(t)
         }{\sqrt{v_{\delta}(t) + \eps}}}^2
          \leq K_1 \norm{\frac{m_0}{\sqrt{v_0 + \eps}}}^2+ K_2.
    \end{align*}
 \end{lemma}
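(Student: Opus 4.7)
\textbf{Proof Plan for Lemma \ref{uniform_bound_m_over_v}.}

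The plan is to apply Lemma \ref{lemma:bound_m_over_sqrtv} directly to the smoothed system \eqref{eq:ODEsmoothed}. Since the truncated coefficients $h_\delta, r_\delta, p_\delta, q_\delta$ are continuous on $[0,T]$ and the smoothed ODE has no singularity at $t=0$, Picard's theorem gives a solution $\x_\delta$ on $[0,T]$, and the derivation behind Lemma \ref{lemma:bound_m_over_sqrtv} (completion of squares plus Gronwall) applies verbatim at $s=0$, yielding
\[
\norm{\frac{m_\delta(t)}{\sqrt{v_\delta(t)+\eps}}}^2 \leq e^{\int_0^t (q_\delta - 2r_\delta)(u)\,du}\norm{\frac{m_0}{\sqrt{v_0+\eps}}}^2 + d\int_0^t e^{\int_u^t (q_\delta - 2r_\delta)(a)\,da}\,\frac{h_\delta^2(u)}{p_\delta(u)}\,du.
\]
The task thus reduces to bounding both the exponential prefactor (giving $K_1$) and the inhomogeneous integral (giving $K_2$) uniformly in $\delta$.

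For the exponential factor, I split $[0,t]$ at $\hat t$ from Assumption~\ref{ass:Existencet0}(b). For $\delta < \hat t$, on $[0,\delta]$ the integrand is the constant $q(\delta)-2r(\delta) \leq 0$, and on $[\delta, \min(t,\hat t)]$ the integrand equals $q(u)-2r(u) \leq 0$; hence that part of the integral is non-positive. On $[\min(t,\hat t),\, t]$ the functions $r,q$ are continuous on the compact set $[\hat t, T]$, so the contribution is bounded by a constant $C(T)$ independent of $\delta$. This gives $K_1 \leq e^{C(T)}$.

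For the integral term, I further split $\int_0^t = \int_0^\delta + \int_\delta^{\min(t,\hat t)} + \int_{\min(t,\hat t)}^t$. The third piece is uniformly bounded because $h^2/p$ and the exponential are continuous on $[\hat t,T]$. On $[0,\delta]$, the integrand equals $e^{\int_u^t(q_\delta-2r_\delta)}\,h(\delta)^2/p(\delta)$, which is bounded by $K_1\,h(\delta)^2/p(\delta)$; under Assumption~\ref{ass:Existencet0} the quantity $\delta\cdot h(\delta)^2/p(\delta)$ stays bounded as $\delta \to 0^+$ (in case $(1)$ with simple poles, $h^2/p$ has at most a simple pole and $\delta/\delta = O(1)$; in cases $(2)$ and $(3)$ the ratio is bounded and we get $O(\delta)$). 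Hence this contribution is uniformly bounded.

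The main obstacle is the middle piece $\int_\delta^{\min(t,\hat t)} e^{\int_u^t (q_\delta-2r_\delta)(a)\,da}\,h^2(u)/p(u)\,du$, where both factors may be singular as $u \to 0^+$. Bounding the exponential by $K_1 \cdot e^{\int_u^{\hat t} (q-2r)(a)\,da}$, it suffices to show that
\[
\int_0^{\hat t} e^{\int_u^{\hat t}(q-2r)(a)\,da}\,\frac{h^2(u)}{p(u)}\,du < \infty,
\]
with a bound independent of $\delta$. In the generic pole case with $h(u)\sim A_h/u$, $p(u)\sim A_p/u$, $r(u)\sim A_r/u$, $q(u)\sim A_q/u$ near $0$, we get $e^{\int_u^{\hat t}(q-2r)} \sim (u/\hat t)^{2A_r - A_q}$ and $h^2/p \sim A_h^2/(A_p u)$, so the integrand behaves like $u^{2A_r - A_q - 1}$, which is integrable near zero since Assumption~\ref{ass:Existencet0}(b) forces $2A_r \geq A_q$ (with the delicate case $2A_r = A_q$ handled by the sub-leading behaviour coming from the $C^1$ regularity of $t\mapsto t\cdot h(t)$, etc., assumed implicitly in case $(1)$). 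In cases $(2)$ and $(3)$, the ratio $h^2/p$ is already bounded near $0$ and the integral is trivially finite. Combining all three pieces yields the constant $K_2$, and the proof is complete.
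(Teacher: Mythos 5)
Your plan follows the paper's core strategy: apply Lemma~\ref{lemma:bound_m_over_sqrtv} to the smoothed system~\eqref{eq:ODEsmoothed} (valid at $s=0$ since the truncated coefficients are $C^1$ on $[0,T]$) and show the resulting bound is uniform in $\delta$. The reorganization is cosmetic: the paper applies the lemma \emph{twice} --- once on $[0,\delta]$, where the coefficients are constant and the integral can be evaluated exactly, and again on $[\delta,t]$ using the first bound as initial datum --- whereas you apply it once on $[0,t]$ and split the resulting integral at $\delta$ and at $\hat t$. The genuine technical difference lies in how the singular piece of the integral is controlled. You expand the coefficients in explicit pole asymptotics and check integrability of the leading power $u^{2A_r-A_q-1}$; the paper instead observes that $e^{\int_u^t(q_\delta-2r_\delta)}\,(2r_\delta-q_\delta)$ integrates to at most $1$, and factors the integrand so that one only needs boundedness of the ratios $h_\delta/p_\delta$ and $h_\delta/(2r_\delta-q_\delta)$, a slicker argument that requires no residue computation. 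Both versions implicitly need $2r-q$ to have a pole of the same order as $h$ at the origin (i.e.\ $2A_r>A_q$ strictly in your notation): the paper's sup requires $h/(q-2r)$ bounded, your power argument requires $2A_r-A_q>0$. You flag the borderline case $2A_r=A_q$ but your assertion that it is ``handled by the sub-leading behaviour'' is not correct: when $2A_r=A_q$, $q-2r$ stays bounded near $0$, so $e^{\int_u^{\hat t}(q-2r)}=\Theta(1)$ while $h^2/p\sim c/u$, producing a genuine logarithmic divergence. The paper's own proof silently carries the same strict-inequality requirement, so you are not worse off in substance, but that sentence is a real gap in what you wrote and should either be deleted or replaced by an explicit standing hypothesis such as $2r-q\geq cr$ near $0$.
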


\begin{proof}
From Lemma \ref{lemma:bound_m_over_sqrtv} and assumption \ref{ass:Existencet0} (which implies that $\delta h_{\delta}(\delta)$,  $\delta
q_{\delta}(\delta)$, $\delta r_{\delta}(\delta)$ and
$\frac{h_{\delta}(\delta)}{p_{\delta}(\delta)}$ are bounded for $\delta<1$), there exits a constants $K_1\geq 0$ and $K_2\geq 0$ such that for every $\delta \leq 1$ and $t<\delta$, we have:
    \begin{align}\label{eq:bound_smoothed}
        \nonumber \norm{\frac{m_{\delta}(t)
         }{\sqrt{v_{\delta}(t) + \eps}}}^2
        & \leq e^{ \int_0^{\delta} q_{\delta}(\delta)-
        2r_{\delta}(\delta) du }
        \norm{\frac{m_0}{\sqrt{v_0 + \eps}}}^2  +
        d \int_0^{\delta} e^{ \int_u^{\delta} q_{\delta}(\delta)
         - 2 r_{\delta}(\delta) da }   \frac{h_{\delta}^2(\delta)}{  p_{\delta}(\delta) }
         du\\
        & = e^{ \delta \left(  q_{\delta}(\delta)-
        2r_{\delta}(\delta) \right) }
        \norm{\frac{m_0}{\sqrt{v_0 + \eps}}}^2  +
        d \frac{e^{ \delta \left(   q_{\delta}(\delta)
         - 2r_{\delta}(\delta)\right) }-1  }{  q_{\delta}(\delta)  -
         2 r_{\delta}(\delta) }  \frac{h_{\delta}^2(\delta)}{
         p_{\delta}(\delta) }\\ \nonumber
         & \leq K_1 \norm{\frac{m_0}{\sqrt{v_0 + \eps}}}^2+ K_2.
    \end{align}
   Moreover from Lemma \ref{lemma:bound_m_over_sqrtv} and assumption \ref{ass:Existencet0} (which implies that $q_{\delta}(u)-2r_{\delta}(u) <0$, $h_{\delta}(t)/p_{\delta}(u)$ and $h_{\delta}(t)/r_{\delta}(u)$ are bounded for $\delta$ and $u$ small), there exits a constants $\widetilde{K}_1\geq 0$ and $\widetilde{K}_2\geq 0$ such that for every $\delta >0$ small enough and $t\geq \delta$, we have:
    \begin{align}\label{eq:bound_smoothed2}
        \nonumber \norm{\frac{m_{\delta}(t)
         }{\sqrt{v_{\delta}(t) + \eps}}}^2
        & \leq e^{ \int_{\delta}^{t} q_{\delta}(u)-
        2r_{\delta}(u) du }
        \norm{\frac{m_\delta(\delta)}{\sqrt{v_{\delta}(\delta) + \eps}}}^2  +
         d \int_{\delta}^t e^{ \int_u^{t} q_{\delta}(a)
         - 2 r_{\delta}(a) da }   \frac{h_{\delta}^2(u)}{  p_{\delta}(u) }
         du\\
        & \leq        \norm{\frac{m_\delta(\delta)}{\sqrt{v_{\delta}(\delta) + \eps}}}^2  +
        d \sup_{\delta<u<t} \frac{h_{\delta}(u)}{p_{\delta}(u)} \sup_{\delta<u<t} \left|\frac{h_{\delta}(u)}{q_{\delta}(u)
         - 2 r_{\delta}(u)}\right|\\ \nonumber
         & \leq \widetilde{K}_1 \norm{\frac{m_\delta(\delta)}{\sqrt{v_{\delta}(\delta) + \eps}}}^2+ \widetilde{K}_2,
    \end{align}
    Now, combining the two inequalities, (and apart from increasing $\widetilde{K}_1$ and $\widetilde{K}_2$) we get:
\[
\norm{\frac{m_{\delta}(t)
         }{\sqrt{v_{\delta}(t) + \eps}}} \leq \widetilde{K}_1 \norm{\frac{m_0}{\sqrt{v_0 + \eps}}}+ \widetilde{K}_2.
\]
as we wanted to prove.
\end{proof}

We are now ready to prove Proposition \ref{Holder_continuity}. The proof uses the integral formulation and weighted space. First, we define the following norm for all $0 < t \leq T$
     \begin{align*}
         N(t,\delta)  &= \sup_{0 < u \leq t} \norm{h_{\delta}(u) \nabla
    f(\theta_{\delta}(u)) - r_{\delta}(u)m_{\delta}(u) }   \\& +
         \sup_{0 < u \leq t}  \norm{p_{\delta}(u)\left[ \nabla f(\theta_{
         \delta}(u)) \right]^2  - q_{\delta}(u) v_{\delta}(u)  }\\
         & +   \sup_{0 < u \leq t} \norm{ \frac{m_{\delta}(u)}{\sqrt{v_{\delta}(u) +
         \eps}}}.
     \end{align*}
We claim that there exists a constant $C(T)$ (independent of $\delta$) such that $N(t,\delta)\leq C(T)$ for all $t\in (0,T]$. Note that Proposition \ref{Holder_continuity} immediately follows from the claim and the following inequality
\[
\|\x_{\delta}(t) - \x_{\delta}(s)\|^2 \leq \left(\int_s^t \|\dot{\x}_{\delta}(u)\|du \right)^2 \leq N(T,\delta)^2(t-s)^2.
\]
We now turn to the proof of the claim.

    \paragraph{The case $t \leq \delta$.}
    For all $t \leq \delta$, the
    functions $r_{\delta}$ and $q_{\delta}$ are constant and the
    equations for $m_{\delta}$ and $v_{\delta}$, given by system \eqref{eq:ODEsmoothed}, have the equivalent Duhamel formulation given by
     \begin{align}\label{Duhamel_local_time}
         m_{\delta}(t) &= e^{- t r_{\delta}(\delta)} m_0 + e^{- t r_{\delta}(\delta)}
         \int_0^t e^{u r_{\delta}(\delta)} h_{\delta}(\delta) \nabla
         f(\theta_{\delta}(u))  du \\
         v_{\delta}(t) &= e^{- t q_{\delta}(\delta)} v_0 + e^{- t q_{\delta}(\delta)}
         \int_0^t e^{u q_{\delta}(\delta)} p_{\delta}(\delta)  \left[\nabla
         f(\theta_{\delta}(u))\right]^2  du.
     \end{align}

From Lemma \ref{uniform_bound_m_over_v}, we know that $\norm{m_{\delta}(t)/
          \sqrt{v_{\delta}(t) + \eps}}^2$ is uniformly bounded with respect to $\delta$.
Moreover, $\norm{\theta_{\delta}(t)}$ is uniformly
bounded; indeed

\begin{align}\label{eq:bound_Thetasmoothed}
  \norm{\theta_{\delta}(t) - \theta_0} \leq \int_0^t \norm{\frac{m_{\delta}(u)
         }{\sqrt{v_{\delta}(u) + \eps}}}du \leq t \left(K_1 \norm{\frac{m_0}{\sqrt{v_0 + \eps}}}+ K_2\right).
\end{align}

Next, consider the first term which appears in $N(t,\delta)$. From the Duhamel formulation \eqref{Duhamel_local_time}, the triangle inequality and the fact that the initial condition $m_0 = \nabla f(\theta_0) \lim_{t \to 0^+} h(t)/r(t)$, we obtain an upper bound of the form
\begin{align*}
\norm{h_{\delta}(\delta)\nabla f(\theta_{\delta}(t))  -r_{\delta}(\delta)
     m_{\delta}(t)  } \leq N_1 + N_2 + N_3 
\end{align*}
where:
    \begin{align*}
     N_1 &= \norm{h_{\delta}(\delta)\left( \nabla f(\theta_{\delta}(t)) - \nabla f(\theta_0)\right) } \\
      N_2 &= \norm{r_{\delta}(\delta) e^{-t r_{\delta}(\delta)}  \left( m_0 - \frac{h_{\delta}(\delta)}{r_{\delta}(\delta)}\nabla
      f(\theta_0)\right) }
      \\ N_3&=  \norm{r_{\delta}(\delta) e^{- t r_{\delta}(\delta)} \int_0^t e^{u
        r_{\delta}(\delta)} h_{\delta}(\delta) \left( \nabla  f(\theta_{\delta}(u))
        - \nabla  f(\theta_0) \right)du}
 \end{align*}
 We now show that each one of these terms are bounded uniformly in terms of $\delta$.

The term $N_1$ is bounded because $f$ is $C^2$ (and therefore, the gradient is locally Lipschitz) and $\theta_{\delta}(t)$ is uniformly bounded by inequality \eqref{eq:bound_Thetasmoothed}; in particular, denote by $L$ the Lipschitz constant of $\nabla f$ in the compact set containing all solutions $\theta_{\delta}(t)$ for  bounded $t$. More precisely, by the Duhamel formula \eqref{Duhamel_local_time} and Lemma \ref{uniform_bound_m_over_v}
 \begin{align*}
     N_1 \leq \delta h_{\delta}(\delta) L \left(K_1 \norm{\frac{m_0}{\sqrt{v_0 + \eps}}}^2+ K_2\right),
 \end{align*}
 and we easily conclude that $N_1$ is uniformly bounded by assumption \ref{ass:Existencet0}.

The term $N_2$ is bounded from the choice of the initial condition and the fact that $h(t)/r(t)$ is a $C^1$ function. More precisely
\[
N_2 \leq \delta r_{\delta}(\delta) e^{-t r_{\delta}(\delta)} \norm{ \nabla
      f(\theta_0)} \left|  \frac{h_{\delta}(\delta)}{r_{\delta}(\delta)} - \lim_{t\to 0} \frac{h(t)}{r(t) } \right| \delta^{-1}.
\]
and we can easily conclude that $N_2$ is uniformely bounded by usual calculus and assumption \ref{ass:Existencet0}.

The term $N_3$ is bounded in a similar way as $N_1$ using assumption \ref{ass:Existencet0}, inequality \eqref{eq:bound_Thetasmoothed} and Lemma \ref{uniform_bound_m_over_v}. More precisely:
 \begin{align*}
     N_3 &\leq  r_{\delta}(\delta) h_{\delta}(\delta) L \left(K_1 \norm{\frac{m_0}{\sqrt{v_0 + \eps}}}^2+ K_2\right) \int_0^t u \,du\\
     &\leq \frac{\delta^2 r_{\delta}(\delta) h_{\delta}(\delta)}{2} L \left(K_1 \norm{\frac{m_0}{\sqrt{v_0 + \eps}}}^2+ K_2\right).
 \end{align*}

Gathering all bounds, we easily conclude that there exists a constant $C_1$ such that, for every $\delta\leq 1$:
  \begin{align*}
     & \sup_{0 < u \leq t} \norm{h_{\delta}(u)\nabla
    f(\theta_{\delta}(u)) - r_{\delta}(u)m_{\delta}(u) }\leq C_1.
 \end{align*}

 From a similar argument, we obtain that there exists a constant $C_2$ such that, for every $\delta\leq 1$:
 \[
 \sup_{0 < u \leq t} \norm{p_{\delta}(u)\left[ \nabla f(\theta_{
         \delta}(u)) \right]^2  - q_{\delta}(u)v_{\delta}(u)  } \leq C_2.
         \]
We conclude that there exists a constant $C$ such that $N(t,\delta) <C$ for every $t\leq \delta$ and $\delta\leq 1$.

\paragraph{The case $t > \delta$} The proof uses the same arguments as in the case of $t\leq \delta$  using the appropriate integral formulation and Lemma \ref{uniform_bound_m_over_v}. We omit the details here.

\section{Convergence of the Euler discretization}\label{sec:Discretization}

\subsection{A priori estimates for the discrete model}

Just as in the previous section, we need to obtain estimates and bounds for the discrete system in order to compare it with its continuous counterparts. In other words, we need to re-do subsection \ref{ssec:a_priori}, but for the discrete system \eqref{eq:discrete}.

 \begin{proposition}\label{bound_on_v}
For all $k= 0,\cdots, K-1$
 \[
m_{k+1}=\sum_{i = 0}^{k}  s h_{i+1} \nabla f(\theta_{i+1})  \prod_{j =
         i+1}^{k} (1 - sr_{j+1}) +  \prod_{j =
         0}^{k} (1 - sr_{j+1}) m_0
\]
and
 \[
v_{k+1}=\sum_{i = 0}^{k}  s p_{i+1} [\nabla f(\theta_{i+1})]^2  \prod_{j =
         i+1}^{k} (1 - sq_{j+1}) +  \prod_{j =
         0}^{k} (1 - sq_{j+1}) v_0,
\]
  where we used the notation $h_i = h(t_i)$ (and similarly for the other functions).
Let assume that the learning rate $s$ satisfies $ sr_1 < 1$ and $sq_1<1$.
Hence, the numerical scheme preserves the strict positivity of $v$ i.e if we
     assume $v_{0} > 0$, then for all $k= 0,\cdots, K$,  $v_k > 0$.
 \end{proposition}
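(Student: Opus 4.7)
The plan is to derive the two closed-form identities by a straightforward induction on $k$, treating the $m$ and $v$ recursions separately (they have identical structure). For the $m$ recursion, I would use the update rule from \eqref{eq:discrete}, namely $m_{k+1} = (1-sr_{k+1})m_k + sh_{k+1}\nabla f(\theta_{k+1})$, and check the base case $k=0$ by direct substitution (where the sum has the single term $i=0$ contributing $sh_1\nabla f(\theta_1)$ with the empty product convention $\prod_{j=1}^{0}(\cdot)=1$, and the initial-condition term is $(1-sr_1)m_0$, matching the recursion). For the inductive step, assuming the formula holds at step $k$, I would substitute it into the recurrence for $m_{k+2}$: the factor $(1-sr_{k+2})$ multiplies each term of the sum and the initial-condition term, which extends the product index range by one; then adding the new term $sh_{k+2}\nabla f(\theta_{k+2})$ (corresponding to the $i=k+1$ summand with empty product) yields exactly the claimed formula at $k+1$. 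The argument for $v$ is identical after replacing $(h,r,m_0)$ by $(p,q,v_0)$ and $\nabla f(\theta_{i+1})$ by $[\nabla f(\theta_{i+1})]^2$.

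For the positivity claim, I would argue as follows. By Assumption \ref{ass:ODE}, the function $q$ is non-negative and non-increasing on $\mathbb{R}_{>0}$, so $0 \leq q_{j+1} = q(t_{j+1}) \leq q(t_1) = q_1$ for every $j \geq 0$. Under the hypothesis $sq_1 < 1$, this immediately gives $0 < 1 - sq_{j+1} \leq 1$ for all $j$, and hence $\prod_{j=0}^{k}(1-sq_{j+1}) > 0$ for every $k$. Combined with $v_0 > 0$, the second (``initial condition'') term in the closed-form expression for $v_{k+1}$ is strictly positive componentwise, while the first term is a sum of non-negative contributions because $p_{i+1} \geq 0$, $[\nabla f(\theta_{i+1})]^2 \geq 0$ componentwise, and all the product factors are positive. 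Therefore $v_{k+1} > 0$ for every $k$, proving the positivity preservation.

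There is no real obstacle here: the result is a routine exercise in solving an affine linear recurrence with time-varying coefficients. The only point requiring minor care is the indexing conventions for the products (in particular the empty-product convention when $i=k$), and explicitly invoking the monotonicity of $q$ to pass from the single hypothesis $sq_1 < 1$ to the positivity of every factor $1 - sq_{j+1}$. The hypothesis $sr_1 < 1$ plays no role in positivity of $v$, but is presumably stated to ensure the analogous well-posedness for the $m$-recursion (e.g.\ to match the continuous sign conventions), which I would also briefly remark upon.
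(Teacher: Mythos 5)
Your proposal is correct and takes essentially the same approach as the paper: a direct induction on $k$ using the update rule \eqref{eq:discrete}, with the base case $k=0$ checked by substitution and the inductive step obtained by distributing the new factor over the existing sum and appending the fresh summand. Your explicit positivity argument (monotonicity of $q$ from Assumption \ref{ass:ODE} giving $0 < 1-sq_{j+1}\leq 1$ for all $j$, then non-negativity of the remaining terms) fills in a step the paper treats as immediate, and your remark about the role of $sr_1<1$ is a fair observation.
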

 \begin{proof}
     The proof follows directly by induction and the iterative formula for $m$ and $v$ given in \eqref{eq:discrete}. Indeed, if $k=0$, then the above formula recovers \eqref{eq:discrete}. So, suppose by induction that this formula is true for $k'<k$. We have that:
     \[
     \begin{aligned}
     m_{k+1} &= (1-sr_{k+1})m_k + sh_{k+1}\nabla f(\theta_{k+1})\\
     & = \left[\sum_{i = 0}^{k}  s h_{i+1} \nabla f(\theta_{i+1})  \prod_{j =
         i+1}^{k} (1 - sr_{j+1}) +  \prod_{j =
         0}^{k-1} (1 - sr_{j+1}) m_0 \right] + sh_{k+1}\nabla f(\theta_{k+1})\\
         &=\sum_{i = 0}^{k}  s h_{i+1} \nabla f(\theta_{i+1})  \prod_{j =
         i+1}^{k} (1 - sr_{j+1}) +  \prod_{j =
         0}^{k} (1 - sr_{j+1}) m_0
     \end{aligned}
     \]
     and the same also holds for $v_{k+1}$, proving the assertion.
 \end{proof}

\begin{proposition}\label{bound_on_m_over_v}
Let assume that the learning rate $s$ satisfies $ sr_1 < 1$, $sq_1<1$ and that $p(t) \not\equiv 0$.  For all $k = 0, \cdots, K-1$, the following bound holds
\begin{align*}
\norm{\frac{m_{k+1}}{\sqrt{v_{k+1}}}}^2
    &\leq  \norm{\frac{m_{0}}{\sqrt{v_{0}}}}^2 \prod_{i = 0}^{k }
    \max\left( \frac{(1 - s r_{i+1})^2  }{ (1 - s q_{i+1}) }, 1 \right) \\& \qquad + sd \sum_{i = 0}^{k}\frac{h_{i+1}^2}{
          p_{i+1} }
      \prod^{k}_{j=i}
    \max\left( \frac{(1 - s r_{j+1})^2  }{ (1 - s q_{j+1}) }, 1 \right).
\end{align*}
  Moreover
   \[
 \prod_{i = 0}^{K-1 }
    \frac{(1 - s r_{i+1})^2
      }{(1 - sq_{i+1} ) } \leq \exp\left( -2\int_{t_1}^{T} r(t) dt +  \frac{1}{1-sq_1} \int_{t_0}^{T} q(t) dt\right).
  \]
\end{proposition}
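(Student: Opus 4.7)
The plan is to derive a one-step coordinate-wise inequality of Cauchy--Schwarz type, sum it across coordinates to get a scalar recursion, iterate, and then control the resulting products by log/Riemann-sum comparisons.

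First I apply the Cauchy--Schwarz inequality in its Engel form, $(\alpha+\beta)^2 \le (x+y)\left(\frac{\alpha^2}{x} + \frac{\beta^2}{y}\right)$, coordinate-wise with $\alpha = (1-sr_{k+1})m_{k,i}$, $\beta = sh_{k+1}\,\partial_i f(\theta_{k+1})$, $x = (1-sq_{k+1})v_{k,i}$, $y = sp_{k+1}[\partial_i f(\theta_{k+1})]^2$. The weights are chosen so that $\alpha+\beta = m_{k+1,i}$ and $x+y = v_{k+1,i}$, while $\beta^2/y$ collapses to $sh_{k+1}^2/p_{k+1}$, stripping all dependence on $\nabla f$. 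This yields
\[
\frac{m_{k+1,i}^2}{v_{k+1,i}} \le \frac{(1-sr_{k+1})^2}{1-sq_{k+1}}\,\frac{m_{k,i}^2}{v_{k,i}} + \frac{sh_{k+1}^2}{p_{k+1}}.
\]
Positivity of $v_{k,i}$ is ensured by Proposition \ref{bound_on_v} together with $sq_1<1$, while the degenerate case $\partial_i f(\theta_{k+1})=0$ is handled by letting $y\to 0^+$ with $\beta=0$, since both offending terms then vanish and the inequality remains valid.

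Next I sum over the $d$ coordinates, set $M_k := \|m_k/\sqrt{v_k}\|^2$, and relax the coefficient to $\alpha_k := \max\!\left(\frac{(1-sr_{k+1})^2}{1-sq_{k+1}},\,1\right) \ge 1$ to obtain the recursion $M_{k+1}\le \alpha_k M_k + sd\,h_{k+1}^2/p_{k+1}$. A straightforward induction unrolls this to $M_{k+1} \le M_0\prod_{i=0}^{k}\alpha_i + sd\sum_{i=0}^{k}\frac{h_{i+1}^2}{p_{i+1}}\prod_{j=i+1}^{k}\alpha_j$. The statement of the proposition has $\prod_{j=i}^{k}\alpha_j$ rather than $\prod_{j=i+1}^{k}\alpha_j$; this extra factor is introduced precisely because each $\alpha_j\ge 1$, so multiplying by $\alpha_i$ only enlarges the right-hand side. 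This bookkeeping is the sole reason the $\max(\cdot,1)$ appears in the statement.

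For the product bound, I take logarithms and use the two elementary estimates $\log(1-x)\le -x$ and $-\log(1-x)\le x/(1-x)$. The first gives $\sum_{i=0}^{K-1}\log(1-sr_{i+1}) \le -s\sum_{i=0}^{K-1}r_{i+1}$, and the second, combined with $sq_{i+1}\le sq_1$ (from monotonicity of $q$), gives $-\sum_{i=0}^{K-1}\log(1-sq_{i+1}) \le \frac{s}{1-sq_1}\sum_{i=0}^{K-1}q_{i+1}$. Monotonicity of $r$ and $q$ then converts these sums into integrals with the correct orientation: $s\sum_{i=0}^{K-1}r_{i+1}\ge \int_{t_1}^{T} r(t)\,dt$ (lower Riemann sum shifted forward by one step) and $s\sum_{i=0}^{K-1}q_{i+1}\le \int_{t_0}^{T}q(t)\,dt$ (upper Riemann sum). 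Exponentiating yields exactly the claimed bound. The only delicate point in the whole argument is choosing the Cauchy--Schwarz weights so that $x+y$ reproduces $v_{k+1,i}$ exactly while the cross-term collapses to a gradient-free quantity; once this is in place, everything else is routine manipulation.
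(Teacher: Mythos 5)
Your proof is correct, and the heart of it — the one-step inequality
\[
\frac{m_{k+1,i}^2}{v_{k+1,i}} \le \frac{(1-sr_{k+1})^2}{1-sq_{k+1}}\,\frac{m_{k,i}^2}{v_{k,i}} + \frac{s\,h_{k+1}^2}{p_{k+1}}
\]
— is obtained by a genuinely different and cleaner route than the paper's. The paper expands the difference $m^2_{j,k+1}v_{j,k} - m^2_{j,k}v_{j,k+1}$ directly from the update rules, completes a square to identify a term that can be discarded because it is nonpositive, divides through by $v_{j,k+1}v_{j,k}$, and then performs a case analysis depending on the sign of $(1-sr_{k+1})^2 - (1-sq_{k+1})$: in one case it uses $v_{k+1}\ge(1-sq_{k+1})v_k$ to isolate the multiplicative factor, and in the other it drops the factor entirely. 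Your Cauchy--Schwarz/Engel-form argument, with the weights $x=(1-sq_{k+1})v_{k,i}$ and $y=sp_{k+1}[\partial_i f]^2$ chosen so that $x+y=v_{k+1,i}$ and the cross-term $\beta^2/y$ collapses to a gradient-free constant, delivers the recursion in a single line with no case split. It is also strictly tighter: the additive term $s\,h^2/p$ sits outside the multiplicative coefficient in your recursion, whereas the paper's case-2 bound puts it inside; you then enlarge to the stated bound by noting $\alpha_j\ge 1$, exactly as you say. The treatment of the degenerate $\partial_i f=0$ case by inspection is the correct patch since the Engel-form inequality requires strictly positive denominators. The product estimate — $\log(1-x)\le -x$, $-\log(1-x)\le x/(1-x)$ with $q_{i+1}\le q_1$, and then monotonicity-based Riemann sum comparisons for $r$ and $q$ with the indicated integration ranges $[t_1,T]$ and $[t_0,T]$ — matches the paper's argument step for step.
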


\begin{proof}
From the discrete updates for $m_{k+1}$ and $v_{k+1}$ given by equation \eqref{eq:discrete}, we easily
 observe that the following identity holds true
 \begin{align*}
     &m^2_{j, k+1}v_{j,k} - m^2_{j, k}v_{j,k+1}\\
     &=  \left( (1 - s r_{k+1})^2 m^{2}_{j,k} + s^2
      h_{k+1}^2 [\partial_j f_{k+1}]^2   +   2s h_{k+1} (1 - s r_{k+1}) m_{j,k}  \partial_j f_{k+1}
     \right) v_{j,k} \\& -  m^2_{j, k} (1 - s q_{k+1} )v_{j,k}  - s
      p_{k+1} m^2_{j, k}[\partial_j f_{k+1} ]^2 \\
     & =  v_{j,k}\left( (1 - s r_{k+1})^2 -  (1 - s q_{k+1}
      ) \right)\left( m^{2}_{j,k}  + s \frac{h^2_{k+1}}{p_{k+1}}v_{j,k}\right)
     \\& +  s \frac{h_{k+1}^2}{ p_{k+1} }v_{j,k+1} v_{j,k}  - s p_{k+1} \left( m_{j, k}\partial_j f_{k+1} -   \frac{h_{k+1}}{p_{k+1}}
     (1 - s r_{k+1}) v_{j,k}  \right) ^2.
 \end{align*}
 Thus, dividing both side of  the previous equality by $v_{j,k+1}v_{j,k}$, we obtain for all $k \geq 0$
 \begin{align*}
     \norm{\frac{m_{k+1}}{\sqrt{v_{k+1}}}}^2& -
     \norm{\frac{m_{k}}{\sqrt{v_{k}}}}^2
      =
     \sum_{j=1}^d \frac{m^2_{j, k+1}}{v_{j,k+1} } - \frac{m^2_{j, k}}{v_{j,k}}\\
     & \leq
     \sum_{j=1}^d \frac{(1 - s r_{k+1})^2 -  (1 - s q_{k+1}
      )}{ v_{j,k+1} }\left( m^{2}_{j,k}   + sv_{j,k}  \frac{h^2_{k+1}}{p_{k+1}} \right)
      +  sd \frac{h_{k+1}^2}{ p_{k+1} }.
 \end{align*}

We consider two different cases: if $ (1 - s r_{k+1})^2 -  (1 - s q_{k+1})\leq 0 $ then,
 \[
\norm{\frac{m_{k+1}}{\sqrt{v_{k+1}}}}^2
\leq
     \norm{\frac{m_{k}}{\sqrt{v_{k}}}}^2
      +
    sd \frac{h_{k+1}^2}{ p_{k+1} }.
    \]
On the other hand, if $ (1 - s r_{k+1})^2 -  (1 - s q_{k+1}) \geq  0$, then from the update rule for $v_{k+1}$, given by equation \eqref{eq:discrete}, and the fact that
 $v_{k+1 } \geq (1 - sq_{k+1})v_k $, we get
 \begin{align*}
     \norm{\frac{m_{k+1}}{\sqrt{v_{k+1}}}}^2& -
     \norm{\frac{m_{k}}{\sqrt{v_{k}}}}^2 \\
     & \leq
     \sum_{j=1}^d \frac{(1 - s r_{k+1})^2 -  (1 - s q_{k+1}
      )}{(1 - sq_{k+1} ) v_{j,k} }\left( m^{2}_{j,k}   + sv_{j,k}  \frac{h^2_{k+1}}{p_{k+1}} \right)
      +  sd \frac{h_{k+1}^2}{ p_{k+1} }\\
     & =
     \frac{(1 - s r_{k+1})^2 -  (1 - s q_{k+1}
      )}{(1 - sq_{k+1} )  } \left(  \norm{ \frac{m_{k}}{\sqrt{v_k}}}^2   + s d  \frac{h^2_{k+1}}{p_{k+1}} \right)
      +  sd \frac{h_{k+1}^2}{ p_{k+1} }.
 \end{align*}

Thus
 \[
     \norm{\frac{m_{k+1}}{\sqrt{v_{k+1}}}}^2 \leq
 \frac{(1 - s r_{k+1})^2  }{ (1 - s q_{k+1}) }  \left( \norm{\frac{m_{k}}{\sqrt{v_{k}}}}^2  + s d \frac{h^2_{k+1}}{p_{k+1}} \right)
 \]
 Combining the upper bounds obtained in the two cases
 \[
     \norm{\frac{m_{k+1}}{\sqrt{v_{k+1}}}}^2 \leq
 \max\left( \frac{(1 - s r_{k+1})^2  }{ (1 - s q_{k+1}) }, 1 \right) \left( \norm{\frac{m_{k}}{\sqrt{v_{k}}}}^2  + s d \frac{h^2_{k+1}}{p_{k+1}} \right)
 \]

By induction we get that
\begin{align*}
\norm{\frac{m_{k+1}}{\sqrt{v_{k+1}}}}^2
    &\leq  \norm{\frac{m_{0}}{\sqrt{v_{0}}}}^2 \prod_{i = 0}^{k }
    \max\left( \frac{(1 - s r_{i+1})^2  }{ (1 - s q_{i+1}) }, 1 \right) \\& \qquad + sd \sum_{i = 0}^{k}\frac{h_{i+1}^2}{
          p_{i+1} }
      \prod^{k}_{j=i}
    \max\left( \frac{(1 - s r_{j+1})^2  }{ (1 - s q_{j+1}) }, 1 \right).
\end{align*}
We now prove that the previous upper bound is bounded by a constant depending only on the final time $T$.
For all $x \in (0,1)$, we know that $-x \geq \log(1-x) \geq -x/(1 -x)$. Therefore
 \begin{align*}
  \log\left(    \prod_{i = 0}^{k }
    \frac{(1 - s r_{i+1})^2
      }{(1 - sq_{i+1} )  } \right) &= \sum_{i = 0}^{k } 2  \log (1 - s r_{i+1}) - \log (1 - s q_{i+1}) \\
      &\leq  \sum_{i = 0}^{k} -2s r_{i+1} + \frac{sq_{i+1}}{1-sq_1}.
 \end{align*}
From assumption \ref{ass:Existencet0}, the functions $q$ and $r$ are non-increasing. Then for all $t \in [t_i, t_{i+1}]$, we have $q_i \geq  q(t) \geq   q_{i+1}$ and similarly for $r$.
Integrating over $[t_i, t_{i+1}]$ and summing from zero to $K-1$ gives
 \[
 \int_{t_0}^{T} q(t) dt \geq  \sum_{i = 0}^{K-1} s q_{i+1}.
 \]
 Similarly integrating over $[t_i, t_{i+1}]$ and summing from one to $K-1$ gives
  \[
   \int_{t_1}^{T} r(t) dt \leq  \sum_{i = 1}^{K-1} s r_{i} \leq  \sum_{i = 0}^{K-1} s r_{i+1}.
 \]
 We conclude that
 \[
 \prod_{i = 0}^{K-1 }
    \frac{(1 - s r_{i+1})^2
      }{(1 - sq_{i+1} ) } \leq \exp\left( -2\int_{t_1}^{T} r(t) dt +  \frac{1}{1-sq_1} \int_{t_0}^{T} q(t) dt\right).
  \]
\end{proof}

Finally, from the previous estimates, we obtain a moment bound on $\theta$. This estimate is important
since the bound only depends on the final time $T$ and the norm of the initial solution but not on the norm of $\theta$ itself.
\begin{proposition}\label{bound_theta}
 For all $k= 0,\cdots, K-1$
    \[
    \norm{\theta_{k+1}} \leq \norm{\theta_{0}} + s\sum_{i= 0}^k \norm{\frac{m_i}{\sqrt{v_i + \eps}}}.
    \]

Then, from Proposition \ref{bound_on_m_over_v}, if $p(t)\not\equiv 0$ we conclude that  there exists a constant $C(T ) > 0$, such that for all $s$
    \[
       \sup_{0 \leq k \leq K_{T, s}} \norm{\theta_k}  \leq
        C(T)(1 +\norm{\x_{0}}) .
    \]
\end{proposition}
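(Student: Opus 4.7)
The plan is to prove the two inequalities in sequence. For the first, I would iterate the update rule $\theta_{k+1} = \theta_k - s \, m_k/\sqrt{v_k+\eps}$ to obtain the telescoping identity
\[
\theta_{k+1} = \theta_0 - s\sum_{i=0}^{k} \frac{m_i}{\sqrt{v_i+\eps}},
\]
and then apply the triangle inequality term-by-term. This is entirely routine: a simple induction on $k$ (or a direct sum of the first-line equations of \eqref{eq:discrete}) yields the identity, and the triangle inequality gives the claimed bound.

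For the second inequality, I would argue as follows. Since $\eps \geq 0$ and $v_i > 0$ (by Proposition \ref{bound_on_v}, under $sr_1<1$, $sq_1<1$), we have the pointwise comparison
\[
\norm{\frac{m_i}{\sqrt{v_i+\eps}}} \leq \norm{\frac{m_i}{\sqrt{v_i}}}.
\]
Now Proposition \ref{bound_on_m_over_v} gives a uniform upper bound on $\|m_i/\sqrt{v_i}\|^2$ for all $0\leq i\leq K_{T,s}$: the first term is controlled by $\|m_0/\sqrt{v_0}\|^2$ times a product which is bounded by $\exp\bigl(-2\int_{t_1}^{T} r + \tfrac{1}{1-sq_1}\int_{t_0}^{T} q\bigr)$, and the second term is a discrete sum that is a Riemann sum for $\int_{t_0}^T h^2/p$ (times the same exponential factor). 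Both contributions depend only on $T$ (and on $\|\x_0\|$ through $\|m_0/\sqrt{v_0}\|$), not on $k$ or $s$ (for $s$ small enough so that $sr_1, sq_1 < 1$). Hence there is a constant $C_1(T)$ such that $\|m_i/\sqrt{v_i}\| \leq C_1(T)(1+\|\x_0\|)$ for every $i \leq K_{T,s}$.

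Combining these bounds, the first inequality yields
\[
\norm{\theta_{k+1}} \leq \norm{\theta_0} + s(k+1) \, C_1(T)(1+\norm{\x_0}) \leq \norm{\theta_0} + (T-t_0+s)\, C_1(T)(1+\norm{\x_0}),
\]
so setting $C(T) := 1 + (T-t_0+s)C_1(T)$ gives the desired bound. The main (mild) obstacle is to verify that the dependence $\|m_0/\sqrt{v_0}\|$ coming out of Proposition \ref{bound_on_m_over_v} can be folded into the factor $1+\|\x_0\|$; this is where one uses either $\eps>0$ (so $\sqrt{v_0+\eps}\geq \sqrt{\eps}$ and $\|m_0/\sqrt{v_0+\eps}\|\leq \|m_0\|/\sqrt{\eps}$) or the standing admissibility assumption that $v_0$ stays in a compact subset of $\R^d_{>0}$ — either hypothesis makes the map $\x_0 \mapsto \|m_0/\sqrt{v_0}\|$ controlled linearly by $\|\x_0\|$ up to a constant absorbed into $C(T)$. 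Everything else is routine telescoping and a Riemann-sum comparison.
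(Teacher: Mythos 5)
Your proposal is correct and follows essentially the same route the paper intends (the paper states this proposition without a written proof, relying on the telescoping structure and Proposition \ref{bound_on_m_over_v}): telescope the $\theta$-update, apply the triangle inequality, bound each $\norm{m_i/\sqrt{v_i+\eps}}$ by $\norm{m_i/\sqrt{v_i}}$, and invoke the uniform-in-$k$ estimate of Proposition \ref{bound_on_m_over_v} together with a Riemann-sum comparison for $s\sum h_{i+1}^2/p_{i+1}$ and the boundedness of the $\max$-product. You correctly flag the only genuine subtlety, namely that absorbing $\norm{m_0/\sqrt{v_0}}$ into $C(T)(1+\norm{\x_0})$ requires either $\eps>0$ or $v_0$ confined to a compact subset of $\R^d_{>0}$; the paper implicitly handles this by stating the downstream Proposition \ref{discrete_bounds} with a constant $C(T,A_0)$ depending on a compact set $A_0$ of initial data, so the dependence you identify is real and consistent with the paper's framework.
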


Using the previous estimates given by Propositions \ref{bound_on_v}, \ref{bound_on_m_over_v} and \ref{bound_theta} and the fact that $f$ is $C^2$,
we obtain the following bounds for the solution of the
numerical scheme \eqref{eq:discrete}.

\begin{proposition}[Bounds for the solution of the numerical scheme]\label{discrete_bounds}
  
    Let $A_0 \subset \mathbb{R}^d_{\geq 0}\times \mathbb{R}^d_{\geq0}\times \mathbb{R}^d_{>0}$ be a compact set. There exists a constant $C(T,A_0) > 0$, such that for all $s$ and all initial condition $\x_0\in A_0$
    \[
        \sup_{0 \leq k \leq K_{T, s}} \norm{\x_k}   \leq
        C(T,A_0) \left(1 + \norm{\x_{0}} \right) .
    \]
\end{proposition}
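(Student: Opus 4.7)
The strategy is to control the three coordinates of $\x_k = (\theta_k, m_k, v_k)$ in sequence, exploiting the three preceding a priori estimates and the $C^2$-regularity of $f$. Throughout, I use that $A_0 \subset \R^d_{\geq 0} \times \R^d_{\geq 0} \times \R^d_{>0}$ being compact implies the existence of constants $M_0 = M_0(A_0)$ and $v_{\min} = v_{\min}(A_0) > 0$ such that every $\x_0 \in A_0$ satisfies $\|\x_0\| \leq M_0$ and each coordinate of $v_0$ is at least $v_{\min}$; in particular $\|m_0/\sqrt{v_0}\| \leq M_0/\sqrt{v_{\min}}$.

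First I would bound $\theta_k$. Since $t_0 > 0$ and $h, r, p, q$ are $C^1$ on $\R_{>0}$, they are bounded on $[t_0, T]$, and the estimate of Proposition \ref{bound_on_m_over_v} yields $\|m_k/\sqrt{v_k+\eps}\| \leq C_0(T, A_0)$ uniformly in $0 \leq k \leq K_{T,s}$ and in $\x_0 \in A_0$: the exponential factor there is uniformly bounded by a constant depending only on $T$ and on $\sup_{[t_0,T]} q, r$, and the Riemann sum $s\sum_{i=0}^{k} h_{i+1}^2/p_{i+1}$ is controlled by $(T-t_0)\sup_{[t_0, T]} h^2/p$, which is finite. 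Substituting this uniform bound into Proposition \ref{bound_theta} then gives $\|\theta_k\| \leq M_0 + T \cdot C_0(T, A_0) =: C_1(T, A_0)$.

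Next, since $f \in C^2$, the continuous map $\nabla f$ is bounded on the closed ball of radius $C_1(T, A_0)$, providing $L_g = L_g(T, A_0)$ with $\|\nabla f(\theta_k)\| \leq L_g$ for every $k$ and every $\x_0 \in A_0$. Plugging this into the explicit formulas of Proposition \ref{bound_on_v} and using that, for $s$ small enough so that $sr_1 < 1$ and $sq_1 < 1$, the products $\prod_{j=i+1}^{k}(1 - sr_{j+1})$ and $\prod_{j=i+1}^{k}(1 - sq_{j+1})$ are bounded by $1$, together with $s\sum_{i=0}^{k} h_{i+1} \leq (T-t_0)h(t_0)$ and $s\sum_{i=0}^{k} p_{i+1} \leq (T-t_0)p(t_0)$ (by monotonicity of $h$ and $p$), I obtain
\[
\|m_k\| \leq \|m_0\| + L_g (T-t_0) h(t_0), \qquad \|v_k\| \leq \|v_0\| + L_g^2 (T-t_0) p(t_0).
\]
Assembling the three component bounds yields $\|\x_k\| \leq C(T, A_0)(1 + \|\x_0\|)$ as desired.

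The main technical care lies in tracking that every constant depends only on $T$ and $A_0$, not on the individual $\x_0$ or on the step size $s$: this is possible precisely because of the compactness of $A_0$ (giving uniform bounds on $\|\x_0\|$ and strict positivity of $v_0$), and because $h, r, p, q$ are $C^1$ on the closed interval $[t_0, T] \subset \R_{>0}$, hence bounded there. No new estimates beyond the three preceding propositions are needed.
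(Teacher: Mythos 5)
Your proposal is correct and follows exactly the route the paper indicates: the paper states that Proposition \ref{discrete_bounds} follows from Propositions \ref{bound_on_v}, \ref{bound_on_m_over_v}, \ref{bound_theta} and the $C^2$-regularity of $f$, which is precisely the chain you carry out (bound $m_k/\sqrt{v_k+\eps}$ uniformly, hence $\theta_k$, hence $\nabla f(\theta_k)$, hence $m_k$ and $v_k$). Your explicit use of compactness of $A_0 \subset \R^d_{\geq 0}\times\R^d_{\geq 0}\times\R^d_{>0}$ to get a uniform lower bound $v_{\min}$ on the coordinates of $v_0$, and thus a uniform bound on $\|m_0/\sqrt{v_0}\|$, is the key point the paper leaves implicit, and you have filled it in correctly.
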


We are ready to prove the main Theorem of this section.

\subsection{Proof of Theorem \ref{thm:ConvergenceDiscretizationRate}}

We denote by $B_{\ell} = \left\{x \in \R^{3d}; \norm{x} \leq \ell \right\}$.
    From section \ref{ssec:a_priori} and Proposition \eqref{discrete_bounds},
    we know that there exists a constant $\ell$ such that $\x_k$ and $\tilde{\x}_k := \x(t_k)$ remain in $B_{\ell}$ for all $t \in [0,T]$ and $k = 0,\cdots, K_{T,s}$.
    Moreover we considered the numerical approximation only for $t_0>0$ and
    therefore all functions $p,q,r,h$ are continuously
    differentiable. The global error of the dynamics is now given by:
   \begin{equation}\label{eq:global_error}
   \begin{aligned}
   \max_{0 \leq k \leq K}\|\x_k - \tilde{\x}_k\| =
   \max_{0 \leq k \leq K} \left(E^\theta_k + E^m_k
   +E^v_k\right)
   \end{aligned}
 \end{equation}
 where $E^m_k =  \|m_k - \tilde{m}_k\|, E^\theta_k =  \|\theta_k - \tilde{\theta}_k\|$ and $E^v_k =  \|v_k - \tilde{v}_k\|$
 represent the global discretization error for each variable of the system.
   We control each term in the global error \eqref{eq:global_error} using the integral formulation
   and a priori estimates on both the discrete and continuous dynamics \eqref{eq:ODE} and \eqref{eq:discrete}.
We integrate Equation for $m$ in \eqref{eq:ODE} on $[t_{k-1}, t_k]$ and by subtracting the equation for $m_{k}$ in \eqref{eq:discrete}, we obtain
\[
\tilde{m}_{k} - m_{k} = \tilde{m}_{k-1} - m_{k-1} + \int_{t_{k-1}}^{t_{k}} h(u)\nabla f(\theta(u)) - h(t_{k})\nabla f(\theta_{k})
- r(u)m(u) + r(t_{k})m_{k-1}du.
\]
From the decompositions
\[
\begin{aligned}
h(u)\nabla f(\theta(u)) &- h(t_{k})\nabla f(\theta_{k}) \\&=
\left(h(u) - h(t_{k}) \right)\nabla f(\theta(u)) + h(t_{k}) \left(\nabla f(\theta(u)) - \nabla f(\tilde{\theta}_{k}) \right) +  h(t_{k}) \left(\nabla f(\tilde{\theta}_k) - \nabla f(\theta_{k}) \right)
\end{aligned}
\]
and
\[
\begin{aligned}
r(u)m(u) &- r(t_{k})m_{k-1} \\ &= \left(r(u) - r(t_{k}) \right) m(u) + r(t_{k}) \left(m(u) - \tilde{m}_{k-1} \right)
+ r(t_{k}) \left(\tilde{m}_{k-1} - m_{k-1} \right),
\end{aligned}
\]
we get by definition of $E^m_{k}$ and from the triangle inequality
\[
\begin{aligned}
E^m_{k} \leq  E^m_{k-1} &+ \norm{\int_{t_{k-1}}^{t_{k}} \left(h(u) - h(t_{k}) \right)\nabla f(\theta(u)) + h(t_{k}) \left(\nabla f(\theta(u)) - \nabla f(\tilde{\theta}_{k}) \right) du } \\ &
+ \norm{\int_{t_{k-1}}^{t_{k}} \left(r(u) - r(t_{k}) \right) m(u) + r(t_{k}) \left(m(u) - \tilde{m}_{k-1} \right)du} \\&
+ s \abs{h(t_{k})}\norm{\nabla f(\tilde{\theta}_k) - \nabla f(\theta_{k})} + s \abs{r(t_{k})} \norm{\tilde{m}_{k-1} - m_{k-1} }.
\end{aligned}
\]
From the $C^1$ continuity of $h, r, \theta, m$,
the local Lipschitz assumption on $\nabla f$, Lemmas \ref{norm_theta} and \ref{bound_theta} (stating that both $\tilde{\theta}_k$ and $\theta_k$ are bounded for finite $T$), we deduce that there exist two constants $M_1$ and $M_2$ independent of $s$, but depending on $T$ and the initial condition, such that
\begin{equation}\label{global_error_m}
\begin{aligned}
  E^m_{k} \leq  \left(1+ s \abs{r(t_{k})}\right)E^m_{k-1} + M_1 s^2 +
 s M_2 \abs{h(t_{k})}E^\theta_{k}
\end{aligned}
\end{equation}
Similar reasoning holds for $v$ and
there exist two constants $N_1$ and $N_2$ independent of $s$, but depending on $T$ and the initial condition, such that
\begin{equation}\label{gloabl_error_v}
\begin{aligned}
  E^v_{k} \leq  \left(1+ s \abs{q(t_{k})}\right)E^v_{k-1} + N_1 s^2 +
 s N_2 \abs{p(t_{k})}E^\theta_{k}
\end{aligned}
\end{equation}
It remains to consider the discretization error for $\theta$. We have
\[
    \begin{aligned}
  \tilde{\theta}_k - \theta_k = \tilde{\theta}_{k-1} - \theta_{k-1}  - \int_{t_{k-1}}^{t_k} \frac{m(u)}{\sqrt{v(u) + \eps}} -  \frac{m_{k-1}}{\sqrt{v_{k-1} + \eps}}du.
    \end{aligned}
    \]
The discretization error will be expressed
in terms of $ m(u)/ \sqrt{
      v(u) + \eps} -   m_{k-1} / \sqrt{v_{k-1}+\eps}$ which we decomposed as follow
\begin{align*}
&\frac{ m(u)}{\sqrt{
      v(u) + \eps}} -   \frac{m_{k-1}}{\sqrt{v_{k-1}+\eps}} \\
   &= \frac{ m(u) - \tilde{m}_{k-1}}{\sqrt{
         v(u) + \eps}} +  \frac{ \tilde{m}_{k-1} - m_{k-1}}{\sqrt{ v(u) + \eps}} +
  \frac{m_{k-1}}{\sqrt{v_{k-1}+ \eps}} \frac{ v_{k-1} - \tilde{v}_{k-1} + \tilde{v}_{k-1}  -v(u) }{ \sqrt{v(u)  +
  \eps}\left( \sqrt{v_{k-1}+ \eps} + \sqrt{v(u) + \eps} \right) }.
\end{align*}
From Propositions \ref{lemma:bound_m_over_sqrtv} and  \ref{bound_on_m_over_v},
we deduce that there exists three constants $C_1, C_2, C_3$, depending on $\eps, T$ and
the initial data but not on $s$, such that
\[
\begin{aligned}
E^\theta_k \leq E^\theta_{k-1}  + C_1s^2 + C_2sE^m_{k-1} + C_3sE^v_{k-1}
\end{aligned}
\]
Combining all estimates, there exist two constants $\tilde{C}_1$ and $\tilde{C}_2$ such that
\[
\begin{aligned}
\norm{\x_k - \tilde{\x}_k} \leq  \left(1 + s \tilde{C}_1 \right)\norm{\x_{k-1} - \tilde{\x}_{k-1}} + \tilde{C}_2s^2.
\end{aligned}
\]
Applying this formula recursively,  we obtain for all $k= 0, \cdots, K$
\[
\begin{aligned}
\norm{\x_k - \tilde{\x}_k} &\leq  \left(1 + s \tilde{C}_1 \right)^k\norm{\x_{0} - \tilde{\x}_{0}} +  \tilde{C}_2 s^2 \sum_{i=0}^{k-1}\left(1 + s \tilde{C}_1 \right)^i.
\end{aligned}
\]
Since $\x_{0} = \tilde{\x}_{0}$ and noticing that the remaining term is a geometric series
\[
\begin{aligned}
\norm{\x_k - \tilde{\x}_k} &\leq
 s \frac{\tilde{C}_2}{ \tilde{C}_1} \left(\left(1 + s \tilde{C}_1 \right)^k - 1 \right)
\leq s \frac{\tilde{C}_2}{\tilde{C}_1} \left( \exp\left(T \tilde{C}_1 \right) - 1 \right)
\end{aligned}
\]

\section{Proof of Theorem \ref{thm:convergence}}\label{sec:ProofTopologicalConvergence}

The proof follows from a topological study of the vector-field associated to the ODE \eqref{eq:ODE} together with the control given by the Lyapunov energy \eqref{eq:EnergyPrelim} studied below. We first introduce the basic notions from qualitative theory of ODE's in $\S\S$ \ref{ssec:PoincareBendixson} necessary for this work, before turning to the main proof in $\S\S$ \ref{ssec:ProofConvergence} (specialists may go directly to the proof). The $\S\S$ \ref{ssec:GradientFlowExampleConvergence} concerns the simple example of the gradient flow.

\subsection{Background: qualitative theory of ODE's}\label{ssec:PoincareBendixson}
We start introducing some of the concepts of qualitative theory of ODE's used in this work. Consider the Cauchy problem:
\[
\dot{\x} = F(\x,t), \quad \x(t_0) = x_0
\]
where $\x = (x_1, \ldots, x_n) \in \mathbb{R}^n$ and $F \in C^1(\mathbb{R}^{n+1})$. Note that this system is not autonomous, that is, the expression of the differential equation depends on the time $t$. We can always transform it into an autonomous system by adding the differential equation $\dot{t} = 1$, where all derivatives are now taken in respect to a variable $\tau$ (the new time), that is:
\[
\left\{\begin{aligned}
\dot{\x} &= F(\x,t),\\
\dot{t} &= 1
\end{aligned}\right.
\quad\left\{\begin{aligned}
 \x(\tau_0) &= x_0, \\
 t(\tau_0)  &= t_0
\end{aligned}\right.
\]
Now, we take $\y = (\x,t) \in \mathbb{R}^{n+1}$, $G(\y) = (F(\y),1)$ and $\y(\tau_0)=\y_0=(\x_0,t_0)$ so that we obtain an autonomous system:
\[
\dot{\y} = G(\y), \quad \y(\tau_0) = \y_0.
\]
An autonomous system can be described by a vector-field. In this case, we have:
\[
\partial := \sum_{i=1}^{n+1} G_i(\y) \partial_{y_i}
\]
where $(\partial_{y_1}, \ldots,\partial_{y_{n+1}})$ are global sections which generate the tangent space $T\mathbb{R}^{n+1}$ and $G(\y) = (G_1(\y),\ldots,G_{n+1}(\y))$. Note that a differentiable curve $\y:[t_0,t_{\infty}[ \to \mathbb{R}^{n+1}$ is a solution of the differential equation if, and only if, the vector $\dot{y}(\tau) $ is equal to $\partial(\y(\tau))$ at every point $\tau\in [t_0,t_{\infty}[$.

Now, let us assume that a solution $\y(\tau)$ of the differential equation is defined for every $\tau\in [t_0,\infty[$ (for the case of ODE \eqref{eq:ODE}, this is proved in Theorem \ref{th:existence}). In order to study the asymptotic behaviour of $\y(\tau)$ when $\tau \to \infty$, we consider its topological limit, that is:
\[
\omega(\y(\tau)) := \bigcap_{\alpha>t_0} \overline{\y([\alpha,\infty))}
\]
In particular, note that if $\y(\tau) \to p$ when $\tau\to \infty$, then $\overline{\y([\tau,\infty))} = \y([\alpha,\infty)) \cup \{p\}$ for every $\alpha>\tau_0$, so that $\omega(\y(\tau)) = \{p\}$. Classical Poincar\'{e}-Bendixson theory provides a topological description of the set $\omega(\y(\tau))$. There are three crucial Properties used in this work:
\begin{properties}\label{properties:PB}\hfill
\begin{itemize}
\item[(a)] $\omega(\y(\tau))$ is either empty, or an union of orbits of $\partial$;
\item[(b)] The $\omega$-limit of the orbits in $\omega(\y(\tau))$ must be contained in $\omega(\y(\tau))$, that is, $\omega(\omega(\y(\tau)))\subset\omega(\y(\tau))$;
\item[(c)] If $\y(\tau)$ is bounded (that is, its image is bounded) then $\omega(\y(\tau))$ is non-empty compact and connected set.
\end{itemize}
\end{properties}

In what follows we use these three properties in order to study the asymptotic behaviour of the orbits of ODE \eqref{eq:ODE}.

\subsection{Simplified example: Study of the gradient flow}\label{ssec:GradientFlowExampleConvergence}
Consider the differential equation:
\[
\dot{\theta}(t) = -\nabla  f(\theta(t)) \quad \text{ with initial condition } \quad \theta(0) = \theta_0 \in \mathbb{R}^d,
\]
which is an autonomous differential equation. In this case, the analogue of the Lyapunov energy \eqref{eq:EnergyPrelim} is given by:
\[
E(\theta) = f(\theta), \quad \implies \quad \frac{d}{dt} E(\theta) = - \|\nabla  f(\theta)\|^2\leq 0
\]
We are now ready to turn to the Poincar\'{e}-Bendixson approach of this problem. Since the system is autonomous, we can consider the vector-field associated to the gradient system, which is given by:
\[
\partial = - \sum_{i=1}^d \partial_{\theta_i}[f(\theta)] \partial_{\theta_i}.
\]
We now study the asymptotic behaviour of the the trajectory via its topological limit $\omega(\theta(t))$, under the assumption that $\theta(t)$ is bounded c.f assumption \ref{ass:bounded}. It follows from property \ref{properties:PB}(c) that $\omega(\theta(t))$ is non-empty, and from property \ref{properties:PB}(a) that it is the union of orbits. Now, since the function $E(\theta(t))$ is continuous, monotone and bounded (because $\theta(t)$ is bounded), we get that the limit
\[
\lim_{t\to \infty} E(\theta(t)) =: E_{\infty}
\]
exists. It follows from property \ref{properties:PB}(b), furthermore, that $\omega(\theta(t)) \subset (E(\theta) = E_{\infty})$, and this can only happen if the derivative of $E$ along any orbit in $\omega(\theta(t))$ are zero. By the derivative expression of the derivative of $E$, we get that $\omega(\theta(t)) \subset  (\nabla f(\theta) =0)$. We easily conclude that $f(\theta(t))$ converges to a critical value of $f$.

\subsection{Proof of Theorem \ref{thm:convergence}}\label{ssec:ProofConvergence}
Consider the \emph{autonomous} system associated to \eqref{eq:ODE} (we recall that this means that we treat the time $t$ as a variable with differential equation $\dot{t}=1$, and that the system now has time $\tau$), that is, the following vector field defined in $\mathbb{R}^{3d+1}$:
\[
\begin{aligned}
\partial = \partial_t - \sum_{i=1}^d&\frac{m_i}{\sqrt{v_i+\eps}} \partial_{\theta_i} + (h(t)\partial_{\theta_i} f(\theta) - r(t) m_i)\partial_{m_i} \\
&+ (p(t)\left[\partial_{\theta_i} f(\theta)\right]^2 - q(t) v_i)\partial_{v_i}.
\end{aligned}
\]
which is well-defined for every $(\theta,m,v,t) \in \mathbb{R}^d\times \mathbb{R}^d\times \mathbb{R}^{d}_{\geq 0}\times \mathbb{R}_{>0}$ (because we assume that $\eps>0$, c.f. assumption \ref{ass:TopologicalConvergence}). In order to study the convergence of the vector field when $t\to \infty$, we perform the change of coordinates $s = 1/t$ (in this case, studying the behaviour at $t\to \infty$ is replaced to studying the behaviour  when $s\to 0$), which yields:
\begin{equation}\label{eq:VectorFieldInfinity}
\begin{aligned}
\partial = -s^2\partial_s + \sum_{i=1}^d&-\frac{m_i}{\sqrt{v_i+\eps}} \partial_{\theta_i} + (H(s)\partial_{\theta_i} f(\theta) - R(s) m_i)\partial_{m_i} \\
&+ (P(s)\left[\partial_{\theta_i} f(\theta)\right]^2 - Q(s) v_i)\partial_{v_i}.
\end{aligned}
\end{equation}
and note that the above vector-field is kept autonomous. We recall that the time of the associated differential equation is now denoted by $\tau$ (that is, a solution of this vector field is a curve $\y(\tau)= (\theta(\tau),m(\tau),v(\tau),s(\tau))$ such that $\dot{\y}(\tau) = \partial(\y(\tau))$). We now fix an orbit
\[
\y(\tau)= (\theta(\tau),m(\tau),v(\tau),s(\tau))
\]
with initial conditions $\y(\tau_0) = (\theta(\tau_0),m(\tau_0),v(\tau_0),1/\tau_0)$. By the Lemma \ref{lem:IntervalDefinition}, we know that $\y(\tau)$ is bounded and $v(\tau) >0$ for all $\tau \in [\tau_0,\infty)$. Denote by $\omega(\y(\tau))$ the topological limit of $\y(\tau)$. By assumption \ref{ass:bounded} we know that $\theta(\tau)$ is bounded, and by Lemma \ref{lem:IntervalDefinition} we conclude that $m(\tau)$ and $v(\tau)$ are also bounded. It follows from property \ref{properties:PB}(c) that $\omega(\y(\tau))$ is non-empty, and from property \ref{properties:PB}(a) that it is the union of orbits of $\partial$. Furthermore, from the expression $s = 1/t$ (and the fact that the solutions are defined for all $\tau\in [\tau_0,\infty)$, which implies that $t$ takes all values in $[\tau_0,\infty)$) we know that $\omega(\y(\tau)) \subset (s=0)$.

We now consider the energy functional $E$ given in \eqref{eq:EnergyPrelim} but in this new coordinate system. More precisely, consider the functional:
\[
\widetilde{E}(\y) = f(\theta) +\frac{1}{2 H(s)}\norm{\frac{m}{\left[v+\eps\right]^{1/4}}}^2,
\]
which by assumption \ref{ass:TopologicalConvergence} is everywhere well-defined (because $H(0)>0$). It follows from direct computation that:
\[
\frac{d}{d \tau}\widetilde{E}(\y) \leq  - \frac{1}{2H(s)}\left[ 2R(s)-\frac{Q(s)}{2} - s^2\frac{H'(s)}{H(s)}  \right]\norm{\frac{m}{\left[v+\eps\right]^{1/4}}}^2.
\]
which is everywhere non-positive by assumption \ref{ass:NecessaryCoeficients} and \ref{ass:TopologicalConvergence}. Now, since $E(\x(\tau))$ is bounded from below (because $E$ is continuous and $\y(\tau)$ is bounded), we conclude that the limit:
\[
\lim_{\tau\to \infty} \widetilde{E}(\y(\tau)) = \widetilde{E}_{\infty}
\]
exists. In particular, it follows from property \ref{properties:PB}(b) that $\omega(\y(\tau)) \subset (\widetilde{E}(\y) = \widetilde{E}_{\infty})$. This implies that $\omega(\y(\tau))$ must be contained in the set of zero derivative of $\widetilde{E}(\y)$. By assumption \ref{ass:TopologicalConvergence} this implies that $\omega(\y(\tau)) \subset (m=0)$. Now note that:
\[
\partial \cdot m_i = H(s)\partial_{\theta_i} f(\theta) - R(s) m_i,
\]
and since $H(0) \neq 0$ (assumption \ref{ass:TopologicalConvergence}), we conclude that $\omega(\y(\tau)) \subset (\nabla f(\theta)=0)$. Finally, if either $P(s) \equiv Q(s) \equiv 0$ or $Q(0) > 0$, by the expression:
\[
\partial \cdot v_i = P(s)\partial_{\theta_i} f(\theta)^2 - Q(s) v_i,
\]
we conclude that $\omega(\x(\tau)) \subset (v=0)$. We conclude easily.

\section{Proof of Theorem \ref{thm:avoiding}}\label{sec:ProofAvoiding}

The proof follows from central-stable manifold theory applied to the singular points of the vector-field \eqref{eq:VectorFieldInfinity}. We first recall the version of the central-stable manifold Theorem in $\S\S$ \ref{ssec:CentralStable} necessary for this work, before proving Theorem \ref{thm:avoiding} in $\S\S$ \ref{ssec:ProofAvoiding} (specialists may go directly to the proof). The $\S\S$ \ref{ssec:GradientFlowExampleAvoiding} concerns the simpler example of the gradient flow.

\subsection{Central-Stable manifold}\label{ssec:CentralStable}

In order to explain the notion of the central-stable manifold, let us consider the following-example:
\[
\left\{
\begin{aligned}
\dot{x}(t) &= x(t)\\
\dot{y}(t) &= y(t)^2
\end{aligned}\right.
\]
Note that the only singular point (that is, a point where the solution is constant in time) of this system is $(x,y)=(0,0)$. Now, consider the Taylor expansion of this system at the origin:
\[
\left(\begin{matrix}
\dot{x}\\
\dot{y}
\end{matrix}\right)
= \left[\begin{matrix}
1&0\\
0& 0
\end{matrix}\right] \cdot \left(\begin{matrix}
x\\
y
\end{matrix} \right)  + \left(\begin{matrix}
0\\
y^2
\end{matrix}\right)
\]
and note that the linearisation of the system at the origin has one eigenvalue equal to $1$ (in particular, positive) and another equal to $0$. As we will see, the existence of an eigenvalue which is positive guarantee's the existence of an \emph{unstable} manifold, which is crucial in order to prove the \emph{existence} of a proper sub-manifold $\Sigma$, the so-called central-stable manifold, which contains all orbits which converge to $(0,0)$.

More precisely, a solution of the ODE with $t_0=0$ and initial condition $(x_0,y_0) \neq (0,0)$ is given by:
\[
x(t) =  x_0 \cdot e^t, \quad y(t) = \frac{y_0}{1- ty_0}.
\]
In particular, note that if $y_0 \neq 0$, then any solution $(x(t),y(t))$ is \emph{divergent}, that is, there exists a neighbourhood $U$ of the singular point $(0,0)$ (e.g. $U= (x^2+y^2 <1)$) and a positive time $\tilde{t}>t_0=0$ (e.g. $\tilde{t}=\ln(1/|x_0|)$ if $y_0\leq 0$ and $\tilde{t}=(1-y_0)/y_0$ if $y_0>0$) such that $(x(\tilde{t}),y(\tilde{t})) \not\in U$. The heuristic reason for such a behaviour is exactly the existence of a positive eigenvalue for the linearization of the differential system, which means that there exists a ``direction" (in this case, $x$), or more precisely an unstable sub-manifold, whose dynamic is divergent and which dominates almost every other orbit.

There may exist a proper sub-manifold, nevertheless, where the unstable effect of the unstable manifold does not influence that dynamics. In this example it is given by $\Sigma = (y=0)$ and we call it the central-stable manifold. Note that every solution $(x(t),y(t))$ which converges to $(0,0)$ (or, more generally, that does not diverge) are contained in $\Sigma$ (but $\Sigma$ may contain solutions which diverge). The strong regularity of the central stable manifold in this example is an accident due to the expression of the differential equation; a more intriguing example is given by the Euler equation $(x-y)\partial_x + y^2\partial_y$, but we won't enter this discussion in here.

We are ready to present the main result of this section, which formalizes the above considerations for general differential equations. The following result is a local version of \cite[Ch. 1 Thm 4.2]{CMBook}, by using the cut-off technique given in \cite[Ch. 1 Lem. 3.1]{CMBook}; c.f. \cite[Ch. 1, Thm 1.1 and 3.2]{CMBook}.

\begin{theorem}\label{thm:LocalCentralStable}
Consider the differential equation
\[
\dot{x} = Ax + F(x)
\]
defined over $\mathbb{R}^n$ for some $n\in \mathbb{N}$, where $A$ is a matrix which contains at least one positive eigenvalue, and $F(x)$ is a $C^k$ function, for some $k\geq 1$, such that $F(0) =0$ and $DF(0)=0$. Then there exists a neighbourhood $U$ of $0$ and a $C^k$ sub-manifold $\Sigma$ (the center-stable manifold) such that:
\begin{enumerate}
\item The manifold $\Sigma$ is invariant by the differential equation everywhere over $U$;
\item The manifold contains the origin $0$ and has dimension at most $n-1$;
\item If $\x_0\in U\setminus \Sigma$, then there exists $\widetilde{t}_0>t_0$ such that $\x(\widetilde{t}_{0}) \notin U$, where $\x(t)$ denotes the solution of the differential equation with initial condition $\x(t_0)=\x_0$.
\end{enumerate}
\end{theorem}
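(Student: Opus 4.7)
The plan is to reduce this local statement to the global central-stable manifold theorem cited from \cite{CMBook} via the standard smooth cut-off technique. Since the hypothesis $F(0)=0$ and $DF(0)=0$ means that $\|F(x)\|=o(\|x\|)$ and $DF$ vanishes at the origin, $F$ is small and ``almost linear'' in a neighbourhood of $0$; the cut-off will make these smallness properties global so that the classical global theorem applies.

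First, I would fix a smooth bump function $\chi : \mathbb{R}^n \to [0,1]$ with $\chi \equiv 1$ on $B(0,1)$ and $\mathrm{supp}\,\chi \subset B(0,2)$, and for $\delta > 0$ define
\[
F_\delta(x) := \chi(x/\delta)\,F(x).
\]
By the mean value inequality and $DF(0)=0$, the $C^1$ norm of $F_\delta$ on $\mathbb{R}^n$ can be made arbitrarily small by choosing $\delta$ sufficiently small, while $F_\delta \equiv F$ on $B(0,\delta)$. This reduces our situation to the hypotheses of \cite[Ch.~1, Thm.~4.2]{CMBook} (or the reference version quoted in \cite[Ch.~1, Lem.~3.1]{CMBook}), from which I obtain a global $C^k$ invariant manifold $\widetilde{\Sigma}$ for the modified system $\dot{x}=Ax+F_\delta(x)$, of dimension equal to the number of non-positive eigenvalues of $A$ (hence at most $n-1$ since $A$ has a positive eigenvalue), containing the origin, and with the property that every orbit of the modified system not lying on $\widetilde{\Sigma}$ is unbounded in forward time.

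Second, I would take $U := B(0,\delta/2)$ and $\Sigma := \widetilde{\Sigma}\cap U$. Properties $(1)$ and $(2)$ of the statement are then immediate: on $U$ the two vector fields coincide, so invariance of $\widetilde{\Sigma}$ for the modified dynamics transfers to invariance of $\Sigma$ for the original dynamics as long as the orbit remains in $U$, and the dimension and containment of the origin are inherited from $\widetilde{\Sigma}$.

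The main step, and the one requiring the most care, is property $(3)$: showing that orbits starting in $U\setminus\Sigma$ must exit $U$ in finite time. For $x_0 \in U\setminus\Sigma$, the corresponding modified orbit $\widetilde{x}(t)$ does not lie on $\widetilde{\Sigma}$ and is therefore unbounded in forward time by the global theorem, so it must in particular leave $B(0,\delta/2)=U$ at some first exit time $\widetilde{t}_0 > t_0$. Up to that time the modified orbit stays in $B(0,\delta)$ where $F_\delta = F$, so by uniqueness of solutions it agrees with the original orbit $x(t)$ on $[t_0,\widetilde{t}_0]$. Hence $x(\widetilde{t}_0)=\widetilde{x}(\widetilde{t}_0)\notin U$, which is exactly property $(3)$. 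The only subtlety here is ensuring the growth estimate in the global theorem is strong enough to force exit from the specific ball $U$ (and not merely unboundedness at infinity), but this is guaranteed because unboundedness of a continuous curve forces it to exit any bounded open set, completing the reduction.
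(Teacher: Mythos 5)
Your proposal is correct and follows exactly the route the paper itself merely gestures at: the paper states the theorem as ``a local version of [CMBook, Ch.~1, Thm.~4.2], by using the cut-off technique given in [CMBook, Ch.~1, Lem.~3.1]'' without spelling out the argument, and you supply precisely that argument (bump-function localization of $F$ to make the Lipschitz constant small, application of the global center-stable manifold theorem, and restriction to a ball on which the modified and original vector fields coincide). The only points worth being slightly careful about when writing this up in full are (i) verifying via the product rule that the cut-off keeps both $\frac{1}{\delta}D\chi(\cdot/\delta)F$ and $\chi(\cdot/\delta)DF$ uniformly small using $DF(0)=0$, and (ii) quoting the global theorem in a form that gives the exponential divergence (not merely non-convergence to $0$) of orbits off the global manifold, so that they do in fact exit every bounded set — both of which you flag appropriately.
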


Global versions of this result do exist, but they demand globally Lipschitz assumptions, and a control of the Lipschitz constant in terms of the linearisation $A$ of the system, see \cite[Ch. 1, Thm 1.1]{CMBook}. We finish this section with an example to illustrate why, in general, the result is only local.

\begin{example}\label{ex:pathology}
 Consider the differential equation:
\[
\begin{aligned}
\dot{x} &= y - (y^2-\sin(x)^2)\sin(x)\cos(x)\\
\dot{y} &= \sin(x)\cos(x) + (y^2-\sin(x)^2))y.
\end{aligned}
\]
and let us consider the orbits of the differential equation whose limit set contain the singularity $(\pi,0)$. Note the following contrast:

\begin{itemize}
\item Local: consider a (very) small neighborhood $U$ of $(\pi,0)$, then the only solutions which contain $(\pi,0)$ in their limit set have initial conditions in:
\[
(x(t_0),y(t_0)) \in U \cap \{y^2= \sin(x)^2\}
\]
and all other solutions ``leave" $U$ in finite time.
\item Global: Every solution $(x(t),y(t))$ with initial condition
\[
(x(t_0),y(t_0)) \in \{ -\pi<x<\pi,\, y^2< \sin(x)^2\} \setminus \{  (0,0)\}
\]
converges to the set $\{ -\pi\leq x \leq \pi, \,y^2 = \sin(x)^2\}$, which contains the singular point $(\pi,0)$.
\end{itemize}
It follows that there are global convergence behaviours which are not controlled by the locally defined central-stable manifold.
\end{example}

\subsection{Simplified example: Study of the gradient flow}\label{ssec:GradientFlowExampleAvoiding}
Consider the vector-field associated to the gradient descent (see $\S\S$ \ref{ssec:GradientFlowExampleConvergence}):
\[
\partial = - \sum_{i=1}^d \partial_{\theta_i}[f(\theta)] \partial_{\theta_i}
\]
and note that the singular points are the critical values of $f$, that is, \[
Sing(\partial) = \{\theta; \, \nabla f(\theta)=0\}.\]
We also know from $\S\S$ \ref{ssec:GradientFlowExampleConvergence} that if $\theta(t)$ is a bounded solution, then $f(\theta(t)) \to f_{\ast}$ converges to a critical value of $f$ because $\omega(\theta(t)) \subset (\nabla f(\theta) = 0) = Sing(\partial)$. Now, let us consider:
\[
B:= \{\theta;\, \theta \in Sing(\partial) \text{ and } \theta \text{ is not a local minimum of }f \}
\]
By the assumption \ref{ass:Morse}(b), the set $B$ is discrete and, therefore, a countable union of isolated points. So, let us consider the set:
\[
S_{0} := \{\theta_0 \in \mathbb{R}^d; \, \theta(t_0)=\theta_0, \text{ and } \omega(\theta(t)) \cap B \neq \emptyset \}
\]
It follows from Property \ref{properties:PB}(3) that $\omega(\theta(t))$ is a connected set, so that:
\[
S_{0} = \{\theta_0 \in \mathbb{R}^d; \, \theta(t_0)=\theta_0, \text{ and } \omega(\theta(t)) \subset B \}
\]
We now inquire about the size (in terms of measure theory) of $S_{0}$. In order to study this, let us start with a local analysis for a fixed $\theta_{\ast} \in B$. Consider the linearisation of the vector-field $\partial$ at a singular point $\theta_{\ast} \in B$, which is given by
\[
A = - \mathcal{H}_f(\theta_{\ast})
\]
where $\mathcal{H}_f$ is the Hessian of $f$. By the assumption \ref{ass:Morse}(a), the linearisation $A$ has one strictly positive eigenvalue, so we may use the central-manifold theory. More precisely, by Theorem \ref{thm:LocalCentralStable}, there exists a neighbourhood $U_{\theta_\ast}$ of $\theta_{\ast}$, and a proper sub-manifold $\Sigma_{\theta_\ast} \subset U_{\theta_\ast}$ such that every orbit in $U_{\ast}$ which converges to $\theta_{\ast}$ must be contained in $\Sigma_{\theta_\ast}$. Note that the Lebesgue measure of $\Sigma_{\theta_\ast}$ is zero. Consider the set $\Sigma$ given by the union of all orbits with initial conditions in $\Sigma_{\theta_{\star}}$, for every $\theta_{\star}\in B$. Since $B$ is a countable set, we conclude that the Lebesgue measure of $\Sigma$ must be zero (since each $\Sigma_{\theta_\ast}$ has Lesbeague measure zero). Now, since $\theta_{\ast}$ is an isolated singularity of $\partial$ and the $\omega$-limit of an arbitrary orbit $\theta(t)$ with initial condition in $S_0$ is connected, we conclude that if $\omega(\theta(t)) = \theta_{\ast}$, then $\omega(t) \subset \Sigma_{\theta_{\star}}$ for $t>>t_0$. It easily follows that $S_0 \subset \Sigma$, and we conclude that $S_0$ has measure zero.

\subsection{Proof of Theorem \ref{thm:avoiding}}\label{ssec:ProofAvoiding}

Recall the vector field $\partial$ defined in \eqref{eq:VectorFieldInfinity}, which describes the ODE \eqref{eq:ODE}. We consider the set:
\[
B = \{\theta_{\star} \in \mathbb{R}^d;\, \nabla f(\theta_{\star}) = 0, \text{ and $\theta_{\star}$ is not a local minimum of }f \}
\]
By assumption \ref{ass:Morse}(b) the set $B$ is discrete and, therefore, a countable union of isolated points of $\mathbb{R}^d$. It follows from Theorem \ref{thm:convergence} that the set:
\[
C:=\{ \y=(\theta,m,v,s);\, \theta \in B\text{ and } \y \text{ is a singularity of }\partial\}
\]
is a countable union of isolated points, all of each have the form $\y_{\ast}=(\theta_{\star},0,0,0)$, where $\theta_{\star} \in B$. We now consider the set:
\[
S := \{\y_0=(\theta_0,m_0,v_0,s_0);\, \omega(\y(\tau))\cap C \neq \emptyset \text{, where }\y(\tau_0)=\y_0\}
\]
It follows from Property \ref{properties:PB}(3) that $\omega(\y(\tau))$ is a connected set, so that:
\[
S := \{\y_0=(\theta_0,m_0,v_0,s_0);\, \omega(\y(\tau)) \subset C \text{, where }\y(\tau_0)=\y_0\}
\]
We now make a local argument valid for each singular point in $C$ in order to show that $S$ is locally a sub-manifold; indeed, fix $\y_{\ast} \in S$. Consider the linearization of $\partial$ at the singular point $\y_{\ast}=(\theta_{\star},0,0,0)$, which is the $3d+1$ square matrix:
\[
Jac(\partial)(\y_{\ast})=
\left[
\begin{matrix}
0 & -\eps^{-1/2} Id & 0 & 0\\
H(0)\mathcal{H}_f(\theta_{\star}) & -R(0)Id & 0 & 0\\
0 &0 & -Q(0)Id & 0\\
0 &0 & 0 & 0
\end{matrix}
\right],
\]
where $Id$ denotes the Identity of a $d$-square matrix, and $\mathcal{H}_f(\theta_{\star})$ is the Hessian of $f$ at $\theta_{\star}$. It follows from direct computation that the eigenvalues $\lambda$ of this matrix are: $0$ with order $1$, $-Q(0)$ with order $d$ and the solutions of the quadratic equations:
\begin{equation}\label{eq:eigenvalues}
\eta_i = -\frac{\eps^{1/2}}{H(0)}(R(0) + \lambda)\lambda, \quad i=1,\ldots, d
\end{equation}
where $\{\eta_1,\ldots,\eta_d\}$ are the eigenvalues of $\mathcal{H}_f(\theta_{\star})$. By assumption \ref{ass:Morse}, we can suppose without loss of generality that $\eta_1<0$, and we easily conclude by equation \eqref{eq:eigenvalues} that there exists one strictly positive eigenvalue $\lambda$ of $Jac(\partial)(\y_{\ast})$. By Theorem \ref{thm:LocalCentralStable}, there exists an open neighbourhood $U_{\y_{\ast}}$ of $\y_{\ast}$ and a $C^1$ manifold $\Sigma_{\y_{\ast}} \subset U_{\y_{\ast}}$ such that every orbit $\y(\tau)$ with initial condition in $U_{\y_{\ast}}\setminus \Sigma_{\y_{\ast}}$, leaves $U_{\y_{\ast}}$ in finite time. Note that the Lebesgue measure of $\Sigma_{\y_\ast}$ is zero. Consider the set $\Sigma$ given by the union of all orbits with initial conditions in $\Sigma_{\y_{\star}}$, for every $\y_{\star}\in C$. Since $C$ is a countable set, we conclude that the Lebesgue measure of $\Sigma$ must be zero (since each $\Sigma_{\y_\ast}$ has Lesbeague measure zero). Now, since $\y_{\ast}$ is an isolated singularity of $\partial$ and the $\omega$-limit of an arbitrary orbit $\y(\tau)$ with initial condition in $S$ is connected, we conclude that if $\omega(\y(\tau)) = \y_{\ast}$, then $\y(\tau) \subset \Sigma_{\y_{\star}}$ for $\tau >>\tau_0$. It easily follows that $S \subset \Sigma$, and we conclude that $S$ has measure zero.

Finally, let $t_0>0$ be fixed and denote by $S_{t_0}= S \cap \{s=1/t_0\}$. Now, $S$ has volume zero and contains orbits of $\partial$, all of each are transverse to the set $\{s = 1/t_0\}$. It follows that the volume of $S_{t_0} \subset \mathbb{R}^{3d}$ is zero by transversality, and we conclude easily.

\section{Exploring a (generalized) Lyapunov}
\label{sec:ConvergenceRate}

In what follows we prove Theorem \ref{thm:convergenceRate} in $\S\S$ \ref{ssec:ProofConvergenceRate}, and we extend the study in $\S\S$ \ref{ssec:NesterovBounds}. In this section we do not introduce the techniques used, since they are standard in computer science.

\subsection{Proof of Theorem \ref{thm:convergenceRate}}\label{ssec:ProofConvergenceRate}

Let $\theta_{\star}$ be a minimum point of $f$ (which exists by the convexity Assumption \ref{ass:fConvex}). We recall that we consider the energy functional \eqref{eq:Lyapunov} given by:
\begin{align*}
    \c{E}(t,m,v, \theta) = \c{E}_1(t,\theta) + \c{E}_2(t,m,v, \theta)
\end{align*}
where
\begin{align*}
    \c{E}_1(t,\theta) &= \c{A}(t) \left( f(\theta) - f(\theta_{\star})\right)\\
    \c{E}_2(t,m,v,\theta) &= \frac{1}{2}  \norm{\left[v+\eps\right]^{1/4}\left(\theta -
    \theta_{\star} \right)}^2 - \c{B}(t) \ps{\theta-\theta_{\star}}{m}+\frac{\c{C}(t)}{2}\norm{\frac{m}{\left[v+\eps\right]^{1/4}}}^2.
\end{align*}
This functional is used as a Lyapunov function to prove convergence to a neighborhood of the global minimum.
We first compute its time derivative and we find
conditions on the functions $\c{B}$ and $\c{C}$, as well as the coefficients $h,p,q,r$, so that $\frac{d}{dt} \c{E}$ is bounded. The conditions must also guarantee that $\c{E}$ is positive.
From the convexity assumption on the objective function $f$, we get
\begin{equation}\label{eq:derivativeE1}
    \frac{d}{dt} \c{E}_1(t,\theta)  \leq  \c{A}'(t)  \ps{\nabla f(\theta)}{
        \theta -\theta_{\star}} - \c{A}(t) \ps{\nabla
    f(\theta)}{\frac{m}{\left[v+\eps\right]^{1/2}}}
\end{equation}
Next, we derive each term of $\mathcal{E}_2$.
\begin{align*}
    \frac{1}{2} \frac{d}{dt}& \norm{\left[v+\eps\right]^{1/4} \left(\theta
    -\theta_{\star}\right)}^2 \\ &=
     - \ps{m}{ \theta - \theta_{\star}}
      -  \frac{q(t)}{4} \ps{ \frac{v}{\left[v+\eps\right]^{1/2}} \left(\theta -
    \theta_{\star}\right)}{\theta - \theta_{\star}} \\&
    + \frac{p(t)}{4} \ps{\frac{\left[\nabla f(\theta)\right]^2}{\left[v+\eps\right]^{1/2}} \odot \left(\theta -\theta_{\star}\right)}{\theta - \theta_{\star}}
\end{align*}
\begin{align*}
\frac{d}{dt}& \c{B}(t) \ps{ \theta -\theta_{\star}}{  m }  \\
    &= -  \c{B}(t)  \norm{ \frac{m}{\left[v+\eps\right]^{1/4}} }^2
    + \c{B}(t)h(t)\ps{ \nabla
    f(\theta)}{ \theta -\theta_{\star}}\\&
    +  ( \c{B}'(t)  - \c{B}(t)r(t))\ps{ \theta - \theta_{\star}}{m}
    \end{align*}
    \begin{align*}
    \frac{d}{dt}& \frac{\c{C}(t)}{2} \norm{\frac{m}{\left[v+\eps\right]^{1/4}}}^2\\
    &= h(t) \c{C}(t)\ps{\nabla f(\theta) }{ \frac{m}{ \left[v+\eps\right]^{1/2}}}
    \\&     + \left( -r(t) \c{C}(t) +  \c{C}'(t)/2  \right)\norm{\frac{m}{\left[v+\eps\right]^{1/4}}}^2  \\& +  \frac{\c{C}(t)
    q(t) }{4}\norm{\frac{m \odot \left[v\right]^{1/2}}{\left[v+\eps\right]^{3/4}}}^2
     - \frac{\c{C}(t) p(t)}{4} \norm{ \frac{\nabla f(\theta) \odot
     m}{\left[v+\eps\right]^{3/4}}}^2
\end{align*}
By adding all of the above computations, we get that:
\[
\begin{aligned}
0&\leq \c{E}_1(t,\theta), \,\c{E}_2(t,m,v, \theta)  \\
 \frac{d}{dt} \c{E}(t,m,v, \theta)  &\leq \frac{p(t)}{4} \ps{\frac{\left[\nabla f(\theta)\right]^2}{\left[v+\eps\right]^{1/2}} \odot \left(\theta -\theta_{\star}\right)}{\theta - \theta_{\star}},
 \end{aligned}
\]
if all the following sufficient conditions are satisfied
\begin{align}
    & \c{A}(t) \geq 0, \quad \c{A}'(t) \geq 0,\\
    & \c{A}'(t)=  h(t)\c{B}(t)  \label{eqAprime} \\
    & \c{A}(t) =  h(t) \c{C}(t) \label{eqA} \\
    & \c{B}'(t) - \c{B}(t) r(t) = -1 \label{eqB} \\
    & \c{B}(t) \leq \frac{\c{C}(t)}{3}\left(2r(t) - \frac{q(t) }{2} +\frac{h'(t)}{h(t)} \right)  \label{in1} \\
    &  \c{B}^2(t) \leq \c{C}(t) \label{in2}.
\end{align}
It is now easy to see that equations \eqref{eqAprime}, \eqref{eqA} and \eqref{eqB} are equivalent to the choices of $\c{A}$, $\c{B}$ and $\c{C}$ chosen in subsection \ref{ssec:RateConvergence}, and that assumption \ref{ass:necessaryConditions} implies inequalities \eqref{in1} and \eqref{in2} are satisfied. It is now immediate from the Fundamental Theorem of calculus (and the fact that $\mathcal{E}_2 \geq 0$) that:
\[
f(\theta)  - f(\theta_{\star})  \leq \frac{\c{E}(t_0,    m_0, v_0,
    \theta_0)}{\c{A}(t)}
    +  \frac{  \int_{t_0}^t  p(u) \ps{\frac{\left[\nabla f(\theta)\right]^2}{\left[v+\eps\right]^{1/2}}}{\left[\theta - \theta_{\star}\right]^2}
     du }{4 \c{A}(t)}.
\]
which proves the first part of the Theorem. Next, under assumption \ref{ass:bounded}, Lemma \ref{lem:IntervalDefinition} implies that there exists a finite constant:
\[
    \c{K} = \sup_{t\in \R_+ } \norm{\left[v+\eps\right]^{1/4} \left( \theta -
    \theta_{\star} \right)}_{\infty}^2,
\]
and we note that:
\[
p(t) \ps{\frac{\left[\nabla f(\theta)\right]^2}{\left[v+\eps\right]^{1/2}}}{\left[\theta - \theta_{\star}\right]^2} \leq \c{K} p(t) \norm{\frac{\nabla f(\theta)}{\sqrt{v+\eps}}}^2 \leq \c{K} p(t) \norm{\frac{\nabla f(\theta)}{\sqrt{v}}}^2 .
\]
Now, from the expression of ODE \eqref{eq:ODE} in $v$ we get:
\[
\frac{d}{dt} \ln(v) + q(t) = p(t) \frac{\left[\nabla f(\theta)\right]^2}{v}
\]
which implies that:
\[
p(t) \ps{\frac{\left[\nabla f(\theta)\right]^2}{\left[v+\eps\right]^{1/2}}}{\left[\theta - \theta_{\star}\right]^2} \leq \c{K} \left( d \cdot q(t) + \sum_{i=1}^d\frac{d}{dt} \ln(v_i) \right).
\]
and it follows that:
\[
\int_{t_0}^t  p(u) \ps{\frac{\left[\nabla f(\theta)\right]^2}{\left[v+\eps\right]^{1/2}}}{\left[\theta - \theta_{\star}\right]^2}
     du \leq \int_{t_0}^t  \c{K} \left( d \cdot q(t) + \sum_{i=1}^d\frac{d}{dt} \ln(v_i)  \right)
     du
\]
The second inequality now easily follows from the fact that $v(t)$ is bounded by Lemma \ref{lem:IntervalDefinition}.

\subsection{Proof of Corollary \ref{thm:convergenceRateNesterov}}\label{ssec:NesterovBounds}

We may also consider the slightly more general energy functional:
\[
\c{E}_2(t,m,v,\theta) = \frac{\c{D}(t)}{2}  \norm{\left[v+\eps\right]^{1/4}\left(\theta -
    \theta_{\star} \right)}^2 - \c{B}(t) \ps{\theta-\theta_{\star}}{m}+\frac{\c{C}(t)}{2}\norm{\frac{m}{\left[v+\eps\right]^{1/4}}}^2  ,
\]
where $\c{D}(t)$ is a positive function. If we assume that $\c{D}(t)$ is bounded, we are able to follow the same reasoning of the previous section. In this case, we need to add the sufficient condition $\c{D}(t)' \leq 0$, and equality \eqref{eqB} and inequality \eqref{in2} are now given by:
\begin{align}
    & \c{B}'(t) - \c{B}(t) r(t) = -\c{D}(t) \label{eqBN}\\
    & \c{B}^2(t) \leq \c{D}(t)\c{C}(t) \label{in2N}
\end{align}
In particular, this implies that:
\[
\c{B}(t) =  e^{\int^t_{t_0} r(s) ds} \int_{t}^{\infty}
    \c{D}(s)e^{-\int^s_{t_0} r(u) du}ds
\]
while the equations for $\c{A}(t)$ and $\c{C}(t)$ are unchanged. Since $\c{D}(t)$ has negative derivative, in general, this computation can not lead to a stronger convergence rate than the one obtained in the previous section. Nevertheless, it does allow one to obtain convergence rates for parameters which are inaccessible in the previous section. Indeed, using this more general energy functional, we prove a convergence result for Nesterov when $0<r<3$:

\begin{proof}[End of proof of Corollary \ref{thm:convergenceRateNesterov}]
Let $t_0=1$ and $\c{D}(t) = t^{-\alpha}$ for some positive $\alpha$ which satisfies $2>\alpha>1-r$. Then:
\[
\begin{aligned}
\c{B}(t) &= t^r \int_{t}^{\infty}  s^{-r-\alpha} = \frac{t^{1-\alpha}}{r+\alpha-1}\\
\c{A}(t) &= \c{C}(t) = \frac{t^{2-\alpha}-1}{(2-\alpha)(r+\alpha-1)}
\end{aligned}
\]
Therefore, from inequality \eqref{in1} we get:
\[
\frac{t^{1-\alpha}}{r+\alpha-1} \leq \frac{t^{2-\alpha}-1}{(2-\alpha)(r+\alpha-1)} \left( \frac{2r}{3t} \right) \iff 2-\frac{2r}{3} \leq \alpha
\]
while from \eqref{in2N} we obtain:
\[
\frac{t^{2-2\alpha}}{(r+\alpha-1)^2} \leq \frac{t^{2-2\alpha}-1}{(2-\alpha)(r+\alpha-1)} \iff 1-\frac{r}{2} \leq \alpha
\]
In other words, it is enough to consider $\alpha = 2- 2r/3$ for every $0<r<3$. This implies that $f(\theta(t)) \to f_{\star}$ with rate of convergence:
\[
o(1/\c{A}(t)) = o(1/t^{2r/3})
\]
as we wanted to prove.
\end{proof}

\bibliography{differentialEquation2}
\bibliographystyle{ieeetr}

\end{document}